\documentclass{article} \usepackage{iclr2027_conference,times}

\usepackage[utf8]{inputenc}

\usepackage[english]{babel}

\let\cite\citep

\usepackage{graphicx}
 \usepackage{subcaption}

\usepackage{xcolor}

\usepackage[colorinlistoftodos,prependcaption,textsize=small]{todonotes} \usepackage{comment}

\usepackage{soul}
\usepackage[normalem]{ulem}

\usepackage{amsmath} \usepackage{amssymb} \usepackage{mathtools} \usepackage{amsfonts}
\usepackage{amsthm} \usepackage{bm} \usepackage{bbm}

\usepackage{enumitem} \setlist[itemize]{leftmargin=2em}

\usepackage{booktabs} \usepackage{multirow} \usepackage{makecell} \usepackage{colortbl} \usepackage{xltabular} \usepackage{rotating}

\usepackage{csquotes}

\usepackage{hyperref}
\usepackage[capitalize,noabbrev]{cleveref}
\usepackage{url}

\usepackage{tikz}
\usepackage{tikz-cd} \usetikzlibrary{positioning}

\usepackage{xspace}

\usepackage{algorithm} \usepackage{algpseudocode}

\definecolor{cadmiumgreen}{rgb}{0.0, 0.40, 0.25}
\definecolor{lightcadmiumgreen}{rgb}{0.0, 0.60, 0.30}
\definecolor{cadmiumorange}{rgb}{0.93, 0.53, 0.18}
\definecolor{burgundy}{rgb}{0.50, 0.0, 0.13}
\definecolor{airforceblue}{rgb}{0.36, 0.54, 0.66}
\definecolor{niceblue}{rgb}{0.13, 0.33, 0.83}
\definecolor{nicegreen}{rgb}{0.13, 0.55, 0.17}
\definecolor{darkred}{rgb}{0.70, 0.14, 0.0}

\definecolor{lightgrey}{rgb}{0.79, 0.79, 0.79}
\definecolor{lightred}{rgb}{0.85, 0.70, 0.70}
\definecolor{lightgreen}{rgb}{0.70, 0.85, 0.70}
\definecolor{lightblue}{rgb}{0.70, 0.70, 0.90}
\definecolor{lightpurple}{rgb}{0.90, 0.70, 0.90}
\colorlet{grey}{gray}

    \newcommand{\del}[1]{}
    
    \newcommand{\itodo}[1]{}
    \newcommand{\todox}[1]{}
    \newcommand{\notex}[1]{}
    \newcommand{\scratchx}[1]{}
    \excludecomment{draft}

    \newcommand{\EA}[1]{{#1}\xspace}
    \newcommand{\EAR}[1]{\ignorespaces}
    \newcommand{\EAX}[1]{{#1}\xspace}

    \newcommand{\FLR}[1]{\ignorespaces}

\makeatletter
\newcommand{\customlabelnref}[2]{\protected@write \@auxout {}{\string \newlabel {#1}{{#2}{\thepage}{#2}{#1}{}}}\hypertarget{#1}{#2}}
\newcommand{\customlabel}[2]{\protected@write \@auxout {}{\string \newlabel {#1}{{#2}{\thepage}{#2}{#1}{}}}}
\makeatother

\makeatletter
\newcommand{\spaceornewline}[1]{\if@twocolumn
$$
$$
\else
#1
\fi
}\makeatother

\def\eqref#1{Equation~\ref{#1}}

\def\1{\mathbbm{1}}

\def\eps{{\epsilon}}

\def\vzero{{\bm{0}}}
\def\vone{{\bm{1}}}

\def\vtheta{{\bm{\theta}}}
\def\vell{{\bm{\ell}}}
\def\va{{\bm{a}}}
\def\vb{{\bm{b}}}
\def\vc{{\bm{c}}}

\def\ve{{\bm{e}}}

\def\vr{{\bm{r}}}

\def\vu{{\bm{u}}}
\def\vv{{\bm{v}}}
\def\vw{{\bm{w}}}

\def\vy{{\bm{y}}}
\def\vz{{\bm{z}}}

\def\mE{{\bm{E}}}

\def\mY{{\bm{Y}}}

\DeclareMathAlphabet{\mathsfit}{\encodingdefault}{\sfdefault}{m}{sl}
\SetMathAlphabet{\mathsfit}{bold}{\encodingdefault}{\sfdefault}{bx}{n}

\def\gB{{\mathcal{B}}}

\def\gF{{\mathcal{F}}}

\def\gR{{\mathcal{R}}}

\def\gT{{\mathcal{T}}}

\newcommand{\E}{\mathbb{E}}
\newcommand{\Ls}{\mathcal{L}}
\newcommand{\Lstar}{\Ls^{\star}} \newcommand{\dt}{{d}} \newcommand{\nv}[1]{{}^{\natural}\mkern-0.5mu #1} \newcommand{\R}{\mathbb{R}}

\DeclareMathOperator*{\argmax}{arg\,max}
\DeclareMathOperator*{\argmin}{arg\,min}

\DeclareMathOperator{\sign}{sign}

\usepackage{aliascnt}
\theoremstyle{plain}

\crefname{theorem}{Theorem}{Theorems}
\Crefname{theorem}{Theorem}{Theorems}
\newaliascnt{proposition}{theorem}
\newtheorem{proposition}[proposition]{Proposition}
\aliascntresetthe{proposition}
\crefname{proposition}{Proposition}{Propositions}
\Crefname{proposition}{Proposition}{Propositions}
\newaliascnt{lemma}{theorem}

\aliascntresetthe{lemma}
\crefname{lemma}{Lemma}{Lemmas}
\Crefname{lemma}{Lemma}{Lemmas}
\newaliascnt{corollary}{theorem}
\newtheorem{corollary}[corollary]{Corollary}
\aliascntresetthe{corollary}
\crefname{corollary}{Corollary}{Corollaries}
\Crefname{corollary}{Corollary}{Corollaries}
\theoremstyle{definition}
\newaliascnt{definition}{theorem}
\newtheorem{definition}[definition]{Definition}
\aliascntresetthe{definition}
\crefname{definition}{Definition}{Definitions}
\Crefname{definition}{Definition}{Definitions}
\newaliascnt{assumption}{theorem}

\aliascntresetthe{assumption}
\crefname{assumption}{Assumption}{Assumptions}
\Crefname{assumption}{Assumption}{Assumptions}
\theoremstyle{remark}
\newaliascnt{remark}{theorem}
\newtheorem{remark}[remark]{Remark}
\aliascntresetthe{remark}
\crefname{remark}{Remark}{Remarks}
\Crefname{remark}{Remark}{Remarks}

\newcommand{\egR}{\widehat{\gR}_{\eps}}

\usepackage{thmtools}

\title{Temporal Predictive Multiplicity: \\ Equally Accurate Time Series Models Yield Different Forecast Trajectories}
\author{\makebox[\dimexpr0.5\textwidth-2\tabcolsep\relax][l]{Emanuele Albini \thanks{Corresponding Author}}\\
J.P. Morgan AI Research\\
Imperial College London\\ 
London, UK \\
\texttt{emanuele.albini@jpmorgan.com} \\
\And
\makebox[\dimexpr0.5\textwidth-2\tabcolsep\relax][l]{Francesca Toni}\\
Imperial College London\\
London, UK \\
\texttt{ft@imperial.ac.uk} \\
\AND
\makebox[\dimexpr0.5\textwidth-2\tabcolsep\relax][l]{Saumitra Mishra}\\
J.P. Morgan AI Research \\
London, UK \\
\texttt{saumitra.mishra@jpmorgan.com} \\
\And
\makebox[\dimexpr0.5\textwidth-2\tabcolsep\relax][l]{Francesco Leofante}\\
Imperial College London \\
London, UK \\
\texttt{f.leofante@imperial.ac.uk} \\
}

\iclrfinalcopy
\begin{document}

\maketitle
\lhead{Technical Report}

\begin{abstract}
Models with near-identical predictive performance can yield substantially different predictions, a phenomenon known as \emph{predictive multiplicity}. Prior work has mostly studied this at the level of individual scalar outputs. In time-series forecasting, however, predictions across horizons jointly define a trajectory, and horizon-wise comparisons can hide important differences in predictive behavior.

To address this problem, we introduce \emph{temporal predictive multiplicity}, a framework that characterizes disagreement over complete forecast trajectories among models with near-identical predictive performance. We show that constraining predictive performance alone can still admit a broad range of different trajectories. We further show that constraining multiplicity at individual horizons partially reduces, but does not eliminate, trajectory-level multiplicity.

Experiments with 19 neural forecasting architectures on 11 datasets confirm that near-optimal models can exhibit substantial variability in the forecast trajectories they produce, and trajectory-level disagreement is largely unrelated to horizon-wise disagreement.
Our framework, therefore, exposes a gap in existing multiplicity studies: models with indistinguishable predictive performance imply fundamentally different temporal trajectories, with consequential downstream effects.

\end{abstract}

\section{Introduction}

In machine learning, multiple models can often achieve nearly indistinguishable predictive performance while producing different predictions. This phenomenon, known as the \emph{Rashomon Effect} \cite{Breiman2001}, gives rise to \emph{predictive multiplicity}: the fact that models with comparable performance can make different predictions for the same instance. A growing literature has studied the existence and consequences of competing models within the \emph{Rashomon set}, the collection of models whose predictive performance is close to optimal \citep{Fisher2019,Semenova2022}, showing that model choice can affect individual predictions \cite{Marx2020a}, feature attributions \cite{Dong2020}, and downstream decisions \cite{Du2025}, even when aggregate performance remains essentially unchanged.

Time-series forecasting is a particularly natural setting in which to study this phenomenon. Despite the proliferation of deep forecasting architectures \cite{Lim2021,Benidis2022}, no model class consistently dominates: simple linear models remain competitive with more complex neural architectures \cite{Zeng2023,Toner2024}, while model rankings shift considerably with hyper-parameter choices and evaluation protocols \cite{Brigato2026}. Forecasting practitioners therefore routinely train a heterogeneous pool of models with near-identical aggregate performance, yet must select a single model for deployment. In the absence of a principled framework that characterizes how these models differ temporally, this choice is effectively arbitrary.

\begin{figure}[t]
    \centering
    \newcommand{\trajaxis}{\draw[->,gray] (0,-2.9) -- (0,3.3);
        \node[left,font=\scriptsize] at (0,2.6) {$\hat{y}$};
        \draw[->,gray] (0,-2.9) -- (5.15,-2.9);
        \node[right,font=\scriptsize] at (5.15,-2.9) {$h$};
        \foreach \h/\tone in {1/8,2/15,3/22,4/29}{
            \fill[gray!\tone] (1.25*\h-0.894,-2.6) rectangle (1.25*\h-0.306,2.6);
            \draw[gray] (1.25*\h-0.6,-2.9) -- (1.25*\h-0.6,-3.2);
            \node[below,font=\scriptsize] at (1.25*\h-0.6,-3.2) {$\h$};
        }}
    \newcommand{\trajpair}[3]{\fill[niceblue] (#1,#2) circle (1.35pt);
        \fill[darkred] (#1,#3) circle (1.35pt);}
    \newcommand{\trajorigin}[5]{\draw[niceblue,line width=0.8pt,#1] (0.65,#2) -- (1.9,#3) -- (3.15,#4) -- (4.4,#5);
        \draw[darkred,line width=0.8pt,#1] (0.65,-0.7*#2) -- (1.9,-0.7*#3) -- (3.15,-0.7*#4) -- (4.4,-0.7*#5);
        \trajpair{0.65}{#2}{-0.7*#2}
        \trajpair{1.9}{#3}{-0.7*#3}
        \trajpair{3.15}{#4}{-0.7*#4}
        \trajpair{4.4}{#5}{-0.7*#5}}
    \begin{tikzpicture}[x=0.735cm,y=0.26cm]
        \useasboundingbox (-0.4,-4.2) rectangle (5.6,3.8);
        \trajaxis
        \trajorigin{solid}{2}{2}{2}{2}
        \trajorigin{dashed}{-2}{-2}{-2}{-2}
        \trajorigin{dotted}{0.4}{0.4}{0.4}{0.4}
    \end{tikzpicture}
    \hfill
    \begin{tikzpicture}[x=0.735cm,y=0.26cm]
        \useasboundingbox (-0.4,-4.2) rectangle (5.6,3.8);
        \trajaxis
        \trajorigin{solid}{2}{2}{2}{2}
        \trajorigin{dashed}{-2}{0.4}{-2}{0.4}
        \trajorigin{dotted}{0.4}{-2}{0.4}{-2}
    \end{tikzpicture}
    \hfill
    \begin{tikzpicture}[x=0.735cm,y=0.26cm]
        \useasboundingbox (-0.4,-4.2) rectangle (5.6,3.8);
        \trajaxis
        \trajorigin{solid}{2}{-2}{2}{-2}
        \trajorigin{dashed}{-2}{2}{-2}{2}
        \trajorigin{dotted}{0.4}{0.4}{0.4}{0.4}
    \end{tikzpicture}
    \\[2pt]
    \begin{tikzpicture}[baseline=-0.5ex,font=\scriptsize]
        \draw[niceblue,line width=0.8pt] (0,0) -- (0.6,0); \fill[niceblue] (0.3,0) circle (1.35pt); \node[right,inner sep=2pt] at (0.6,0) {Model 1};
        \draw[darkred,line width=0.8pt] (1.9,0) -- (2.5,0); \fill[darkred] (2.2,0) circle (1.35pt); \node[right,inner sep=2pt] at (2.5,0) {Model 2};
        \draw[gray!50,line width=0.4pt] (4.1,-0.18) -- (4.1,0.18);
        \draw[black!70,line width=0.8pt,solid] (4.4,0) -- (5.0,0); \node[right,inner sep=2pt] at (5.0,0) {Origin 1};
        \draw[black!70,line width=0.8pt,dashed] (6.3,0) -- (6.9,0); \node[right,inner sep=2pt] at (6.9,0) {Origin 2};
        \draw[black!70,line width=0.8pt,dotted] (8.2,0) -- (8.8,0); \node[right,inner sep=2pt] at (8.8,0) {Origin 3};
    \end{tikzpicture}
    \caption{\protect\EA{Examples of forecast trajectories that are indistinguishable under current multiplicity frameworks. At every horizon, the disagreement among forecasts is identical across the three panels. Yet, the panels exhibit markedly different patterns of temporal disagreement at the trajectory level.}}
    \label{fig:trajectories}
\end{figure}

Current multiplicity frameworks offer little guidance for selecting among competing forecasting models. 
Existing work has largely focused on \emph{individual predictive multiplicity}, in which models produce a single \EAX{scalar} prediction for each instance, and multiple \EAX{outputs} are treated as independent.
Forecasts, however, are \emph{temporal objects} that require a different treatment: they are sequences of predictions over multiple time points whose joint behavior defines a forecast trajectory, as illustrated in \cref{fig:trajectories}. \EAX{Ignoring this structure would render markedly different trajectories indistinguishable.}

Motivated by these observations, we introduce a new theoretical framework for predictive multiplicity in time-series forecasting. At its core, our framework organizes multiplicity into a three-level hierarchy: \emph{pointwise}, \emph{horizon-wise}, and \EA{\emph{temporal}}, making explicit which aspects of forecast structure each level can capture. We then use \EA{this} hierarchy to reveal two fundamental limitations of conventional notions of multiplicity. First, near-optimal predictive performance provides weak control over forecast trajectories: constraining aggregate performance alone allows models to differ substantially over an entire trajectory, with this freedom increasing at longer horizons. Second, even when disagreement is tightly controlled at individual horizons, \EA{at pointwise and horizon-wise levels,} substantial temporal differences can remain, including the direction of predicted changes, the timing of events, and the shape and frequency structure of forecast trajectories.

We complement these theoretical results with experiments on 19 neural forecasting architectures across 11 public datasets. We find that near-optimal models frequently produce substantially different forecast trajectories, including predictions of changes in opposite directions. Moreover, trajectory-level disagreement is largely unrelated to disagreement measured at individual horizons, confirming that temporal multiplicity cannot be recovered from conventional pointwise or horizon-wise diagnostics. Finally, through a case study, we show that these differences can translate into materially different downstream decisions. Taken together, these results show that model choice is consequential; our framework makes the underlying temporal differences explicit and measurable, giving practitioners a principled basis for auditing near-optimal candidates along application-relevant dimensions and for justifying the model they ultimately deploy.

\section{Related Work}
\label{sec:related}

\textbf{Model Multiplicity in Machine Learning.}
That many models may explain the same data approximately equally well was articulated by Breiman as the \emph{Rashomon Effect} \cite{Breiman2001}, and has resurfaced in modern machine learning through studies of underspecification and training instability \cite{Madhyastha2019,DAmour2022,Teney2022}.
Models within a Rashomon set have been shown to differ in variable importance and feature attributions \cite{Fisher2019,Dong2020,Donnelly2023,Laberge2023} and, at the prediction level, to assign conflicting predictions to the same instance, a phenomenon formalized by \citet{Marx2020a} as \emph{predictive multiplicity}.
A growing literature distinguishes predictive multiplicity from its procedural, explanatory, and dataset-induced forms \cite{Black2022b,Heljakka2023,Ganesh2025}, investigates its root causes \cite{Parikh2026}, weighs its opportunities and risks \cite{Rudin2024,Hsu2026}
and reconcile its implications for downstream decisions \cite{Du2025}.
All of these works treat each instance as an independent prediction problem: 
their notions of multiplicity live at the \emph{pointwise} level of our hierarchy, and are provably blind to the complex structure of time series, as we will later show.

\textbf{Model Equivalence and Multiplicity in Forecasting.}
Forecasting research has long recognized that several models may be statistically indistinguishable in predictive ability: the Model Confidence Set of \citet{Hansen2011} retains the models whose differences cannot be statistically established, \citet{Fosten2021} let this set vary with the horizon through \emph{horizon confidence sets}, while \citet{Quaedvlieg2021} and \citet{Monschang2026} select models with one joint constraint on all horizons. Such statistical tests compare models through their losses, either globally, per horizon, or jointly, and thus determine \emph{which} models are performance-equivalent; they do not, however, characterize how the forecasts of equivalent models differ, which is precisely the gap that this paper addresses.
Recent work has begun to bring a Rashomon-set perspective to forecasting. This includes studies of predictive and procedural multiplicity in recovery-rate forecasting \cite{Hibbeln2026}, functional differences among volatility forecasters with tied loss \cite{Cortesi2026}, and variation in feature reliance among time-series forecasting models \cite{Berry2024}. Closest to our setting, \citet{Kale2026} introduce horizon-constrained Rashomon sets to track predictive multiplicity across forecast horizons in chaotic dynamical systems. All these works sit at the \emph{horizon-wise} level of our hierarchy, which again cannot fully characterize trajectory-level multiplicity.

\section{A Hierarchy of Predictive Multiplicity Notions}
\label{sec:preliminaries}

This section develops our hierarchy of predictive multiplicity notions for forecasting. We begin by introducing the forecasting setup and notation. We then define three kinds of multiplicity: \emph{pointwise}, \emph{horizon-wise}, and \emph{temporal}, which differ in whether they compare isolated predictions, horizon-specific collections of predictions, or complete forecast trajectories. Finally, we show that these levels are not merely alternative aggregations of the same disagreement: each retains a different amount of information about the forecasts.

\textbf{Setup and Notation.} Consider a class of models $\gF$. 
Given a model $f\in\gF$,
let $\Ls(f)$ be the loss of $f$ evaluated over a collection of origins $\gT$.
For a tolerance $\eps\geq 0$, the corresponding \emph{Rashomon set} is \cite{Fisher2019}:
\begin{equation}
    \gR_{\eps}
    =
    \left\{
        f\in\gF:
        \Ls(f)\leq\Lstar+\eps
    \right\}
    \qquad
    \text{where}
    \quad
    \Lstar=\Ls(f^\star),
    \quad
    f^\star\in\argmin_{f\in\gF}\Ls(f)
    .
    \label{eq:rashomon-set}
\end{equation}
\EA{Given a model $f\in\gR_\eps$, let $\widehat{\mY}^{f}\in\R^{T\times H}$ be its \emph{forecast matrix}, $\mY\in\R^{T\times H}$ the \emph{target matrix}, and $\mE^{f}=\mY-\widehat{\mY}^{f}$ its \emph{error matrix}, \EA{with rows} indexed by \emph{origins} $t\in\gT$ and columns by forecast \emph{horizons} $h\in\{1,\ldots,H\}$. We consider three \EA{temporal} \EA{trajectories} extracted from these matrices:
\begin{itemize}[leftmargin=*,nosep,topsep=0pt]
    \item the \emph{forecast trajectory} $\widehat{\vy}^{f}_{t}=(\hat{y}^{f}_{t+1},\ldots,\hat{y}^{f}_{t+H})^{\top}$, the $t$-th row of $\widehat{\mY}^{f}$: forecasts across horizons;
    \item the \emph{revision trajectory} $\widehat{\vr}^{f}_{t}$, the $t$-th anti-diagonal of $\widehat{\mY}^{f}$: successive forecasts of the same target; \item the \emph{error trajectory} $\ve^{f}_{t}=\vy_t-\widehat{\vy}^{f}_{t}$, the $t$-th row of $\mE^{f}$: forecast errors across horizons. \end{itemize}
}
Forecast and revision trajectories also arise in the forecast-stability literature, where their \EA{fluctuations underlie the definitions of} \emph{horizontal} and \emph{vertical} (in)stability, respectively \cite{VanBelle2023,Godahewa2025}. That literature regularizes such fluctuations within the forecasts of a \emph{single} model; here, instead, we use the same geometry to characterize disagreement among \EA{the members of the Rashomon set:} \emph{distinct models with equivalent loss}.

\begin{figure}[t]
    \centering
    \captionsetup[sub]{justification=centering,font=footnotesize}
    \newcommand{\hierpair}[3]{\draw[gray!60,line width=0.4pt] (#1,#2) -- (#1,#3);
        \fill[niceblue] (#1,#2) circle (1.8pt);
        \fill[darkred] (#1,#3) circle (1.8pt);}
    \newcommand{\hieraxis}{\draw[->,gray] (0,-2.9) -- (0,3.3);
        \node[left,font=\scriptsize] at (0,2.6) {$\hat{y}$};}
    \newcommand{\hierbands}{\draw[->,gray] (0,-2.9) -- (3.9,-2.9);
        \node[right,font=\scriptsize] at (3.9,-2.9) {$h$};
        \foreach \h in {1,2,3}{
            \fill[gray!15] (1.25*\h-0.894,-2.6) rectangle (1.25*\h-0.306,2.6);
            \draw[gray] (1.25*\h-0.6,-2.9) -- (1.25*\h-0.6,-3.2);
            \node[below,font=\scriptsize] at (1.25*\h-0.6,-3.2) {$\h$};
        }
}
    \newcommand{\hierdots}{\foreach \h in {1,2,3}{
            \hierpair{1.25*\h-0.6}{2}{-1.4}
            \hierpair{1.25*\h-0.6}{-2}{1.4}
        }}
    \begin{tikzpicture}[baseline=-0.5ex,font=\scriptsize]
        \draw[niceblue,line width=0.8pt] (0,0) -- (0.6,0); \fill[niceblue] (0.3,0) circle (1.8pt); \node[right,inner sep=2pt] at (0.6,0) {Model 1};
        \draw[darkred,line width=0.8pt] (1.9,0) -- (2.5,0); \fill[darkred] (2.2,0) circle (1.8pt); \node[right,inner sep=2pt] at (2.5,0) {Model 2};
        \draw[gray!50,line width=0.4pt] (4.1,-0.18) -- (4.1,0.18);
        \draw[black!70,line width=0.8pt,solid] (4.4,0) -- (5.0,0); \node[right,inner sep=2pt] at (5.0,0) {Origin 1};
        \draw[black!70,line width=0.8pt,dashed] (6.3,0) -- (6.9,0); \node[right,inner sep=2pt] at (6.9,0) {Origin 2};
    \end{tikzpicture}
    \\[-3pt]
    \begin{subfigure}[t]{0.18\textwidth}
        \centering
        \begin{tikzpicture}[x=0.52cm,y=0.22cm]
            \useasboundingbox (-0.4,-4.6) rectangle (4.3,4.3);
            \hieraxis
            \draw[rounded corners=5pt,fill=niceblue!8,draw=niceblue!50] (0.3,-2.9) rectangle (2.1,2.9);
            \draw[rounded corners=5pt,fill=darkred!8,draw=darkred!50] (2.4,-2.9) rectangle (4.2,2.9);
            \foreach \x/\y in {0.75/2, 1.2/2, 1.65/2, 0.75/-2, 1.2/-2, 1.65/-2}
                \fill[niceblue] (\x,\y) circle (1.8pt);
            \foreach \x/\y in {2.85/-1.4, 3.3/-1.4, 3.75/-1.4, 2.85/1.4, 3.3/1.4, 3.75/1.4}
                \fill[darkred] (\x,\y) circle (1.8pt);
        \end{tikzpicture}
        \caption{{Loss}\\[-1pt]{\scriptsize \emph{one error bag per model}}} 
        \label{fig:hierarchy-loss}
    \end{subfigure}
    \hfill
    \begin{subfigure}[t]{0.24\textwidth}
        \centering
        \begin{tikzpicture}[x=0.68cm,y=0.22cm]
            \useasboundingbox (-0.4,-4.6) rectangle (4.3,4.3);
            \hieraxis
            \draw[rounded corners=5pt,fill=gray!8,draw=gray!50] (0.3,-2.9) rectangle (4.2,2.9);
            \foreach \x/\yb/\yr in {0.55/2/-1.4, 1.19/-2/1.4, 1.83/-2/1.4, 2.47/2/-1.4, 3.11/-2/1.4, 3.75/2/-1.4}{
                \hierpair{\x}{\yb}{\yr}
            }
        \end{tikzpicture}
        \caption{{Pointwise Multiplicity}\\[-1pt]{\scriptsize\begin{tabular}[t]{@{}c@{}}\emph{per-cell connections}\\ \emph{among models}\end{tabular}}}
        \label{fig:hierarchy-pointwise} 
    \end{subfigure}
    \hfill
    \begin{subfigure}[t]{0.28\textwidth}
        \centering
        \begin{tikzpicture}[x=0.8cm,y=0.22cm]
            \useasboundingbox (-0.4,-4.6) rectangle (4.3,4.3);
            \hieraxis
            \hierbands
            \hierdots
        \end{tikzpicture}
        \caption{{Horizon-Wise Multiplicity}\\[-1pt]{\scriptsize \emph{per-horizon bands}}}
        \label{fig:hierarchy-horizon}
    \end{subfigure}
    \hfill
    \begin{subfigure}[t]{0.28\textwidth}
        \centering
        \begin{tikzpicture}[x=0.8cm,y=0.22cm]
            \useasboundingbox (-0.4,-4.6) rectangle (4.3,4.3);
            \hieraxis
            \hierbands
            \draw[niceblue,line width=0.8pt] (0.65,2) -- (1.9,-2) -- (3.15,2);
            \draw[darkred,line width=0.8pt] (0.65,-1.4) -- (1.9,1.4) -- (3.15,-1.4);
            \draw[niceblue,line width=0.8pt,dashed] (0.65,-2) -- (1.9,2) -- (3.15,-2);
            \draw[darkred,line width=0.8pt,dashed] (0.65,1.4) -- (1.9,-1.4) -- (3.15,1.4);
            \hierdots
        \end{tikzpicture}
        \caption{{Temporal Multiplicity}\\[-1pt]{\scriptsize \emph{whole trajectories}}}
        \label{fig:hierarchy-temporal}
    \end{subfigure}
    \caption{Pictorial representation of the structure that each of the loss and the three levels of the multiplicity hierarchy is aware of on a running example.}
    \label{fig:hierarchy}
\end{figure}

\textbf{A Hierarchy of Predictive Multiplicity in Forecasting.}
Having established the forecasting setup, we now introduce the three levels of our hierarchy. The \emph{pointwise} level treats every prediction in isolation, while the \emph{horizon-wise} level assesses disagreement separately at each forecast horizon. We then introduce \emph{temporal predictive multiplicity}, a trajectory-level notion that compares how competing models behave jointly across horizons. The first two levels generalize existing multiplicity measures to forecasting and compare scalar predictions through their absolute difference \citep[e.g.,][]{Watson-Daniels2023,Hibbeln2026}; temporal multiplicity instead compares complete trajectories using distances designed to capture distinct aspects of their temporal behavior.

\begin{definition}[Pointwise Predictive Multiplicity]
\label{def:pointwise-pm}
Let $\tau\geq0$.
The \emph{pointwise prediction range} $\nu$, \emph{pointwise prediction diameter} $\omega$, \emph{pointwise ambiguity} $\alpha_{\tau}$, and \emph{pointwise discrepancy} $\varsigma_{\tau}$ of $\gR_\eps$ are: \begin{equation*}
\begin{array}{@{}c@{\quad}c@{}}
    \displaystyle \nu = \mathop{\E}_{\substack{t\in\gT\\ 1\leq h\leq H}}\Big[\max_{f,g\in\gR_\eps}\big|\hat{y}^{f}_{t+h}-\hat{y}^{g}_{t+h}\big|\Big]
    &
    \displaystyle \omega = \max_{f,g\in\gR_\eps}\, \mathop{\E}_{\substack{t\in\gT\\ 1\leq h\leq H}}\Big[\big|\hat{y}^{f}_{t+h}-\hat{y}^{g}_{t+h}\big|\Big]
    \\[1ex]
    \displaystyle \alpha_{\tau} = \mathop{\E}_{\substack{t\in\gT\\ 1\leq h\leq H}}\Big[\1\Big\{\max_{f,g\in\gR_\eps}\big|\hat{y}^{f}_{t+h}-\hat{y}^{g}_{t+h}\big|>\tau\Big\}\Big]
    &
    \displaystyle \varsigma_{\tau} = \max_{f,g\in\gR_\eps}\, \mathop{\E}_{\substack{t\in\gT\\ 1\leq h\leq H}}\Big[\1\Big\{\big|\hat{y}^{f}_{t+h}-\hat{y}^{g}_{t+h}\big|>\tau\Big\}\Big]
\end{array}
\end{equation*}
\end{definition}

The range and ambiguity \emph{average} the per-instance maximal disagreement, while the diameter and discrepancy ask how far a \emph{single} pair of models drifts apart on average, hence $\omega\leq\nu$ and $\varsigma\leq\alpha$.

Intuitively, pointwise {ambiguity} ($\alpha_{\tau}$) and {discrepancy} ($\varsigma_{\tau}$) generalize \citet{Marx2020a} and \citet{Watson-Daniels2023} by maximizing over all pairs of competing models, not against $f^\star$, to capture \emph{all} the multiplicity in $\gR_\eps$, and by treating the $(t,h)$ cells as instances.
The {prediction range} ($\nu$) averages the width of the viable prediction range of \citet{Watson-Daniels2023} over the forecast matrix; the {prediction diameter} ($\omega$) is a novel metric that we propose as its complement.

Following a similar pattern, we 
formalize the per-horizon notion of multiplicity of \citet{Kale2026}.

\begin{definition}[Horizon-Wise Predictive Multiplicity]
\label{def:horizon-pm}
Let $\tau\geq0$.
For each horizon $h\in\{1,\ldots,H\}$, the \emph{horizon-wise prediction range} $\nu^{(h)}$, \emph{prediction diameter} $\omega^{(h)}$, \emph{ambiguity} $\alpha^{(h)}_{\tau}$, and \emph{discrepancy} $\varsigma^{(h)}_{\tau}$ of $\gR_\eps$, producing profiles indexed by $h$, are defined as follows:
\begin{equation*}
\begin{array}{@{}c@{\quad}c@{}}
    \displaystyle \nu^{(h)} = \mathop{\E}_{t\in\gT}\Big[\max_{f,g\in\gR_\eps}\big|\hat{y}^{f}_{t+h}-\hat{y}^{g}_{t+h}\big|\Big]
    &
    \displaystyle \omega^{(h)} = \max_{f,g\in\gR_\eps}\, \mathop{\E}_{t\in\gT}\Big[\big|\hat{y}^{f}_{t+h}-\hat{y}^{g}_{t+h}\big|\Big]
    \\[1ex]
    \displaystyle \alpha^{(h)}_{\tau} = \mathop{\E}_{t\in\gT}\Big[\1\Big\{\max_{f,g\in\gR_\eps}\big|\hat{y}^{f}_{t+h}-\hat{y}^{g}_{t+h}\big|>\tau\Big\}\Big]
    &
    \displaystyle \varsigma^{(h)}_{\tau} = \max_{f,g\in\gR_\eps}\, \mathop{\E}_{t\in\gT}\Big[\1\Big\{\big|\hat{y}^{f}_{t+h}-\hat{y}^{g}_{t+h}\big|>\tau\Big\}\Big]
\end{array}
\end{equation*}
\end{definition}

\label{sec:hierarchy}

We now introduce \emph{temporal} predictive multiplicity. Unlike the pointwise and horizon-wise levels, it compares complete forecast trajectories and can therefore capture how competing models disagree across forecast horizons or across successive revisions.

\begin{definition}[Temporal Predictive Multiplicity]
\label{def:temporal-multiplicity}
Let $\tau\geq0$\EA{, let $\vv^{f}_{t}$ denote any of the trajectories we defined earlier (forecast $\widehat{\vy}^{f}_{t}$, revision $\widehat{\vr}^{f}_{t}$, or error $\ve^{f}_{t}$) and let $\dt:\R^{H}\times\R^{H}\rightarrow[0,\infty)$ be a \emph{trajectory distance function}}.
The \emph{temporal trajectory range} $\nu_{\dt}$, \emph{trajectory diameter} $\omega_{\dt}$, \emph{ambiguity} $\alpha_{\dt,\tau}$, and \emph{discrepancy} $\varsigma_{\dt,\tau}$ of $\gR_\eps$ \EA{for such a trajectory} are defined as:
\begin{equation*}
\begin{array}{@{}c@{\quad}c@{}}
    \displaystyle \nu_{\dt} = \mathop{\E}_{t\in\gT}\Big[\max_{f,g\in\gR_\eps}\dt\big(\vv^{f}_{t},\vv^{g}_{t}\big)\Big]
    &
    \displaystyle \omega_{\dt} = \max_{f,g\in\gR_\eps}\, \mathop{\E}_{t\in\gT}\Big[\dt\big(\vv^{f}_{t},\vv^{g}_{t}\big)\Big]
    \\[1ex]
    \displaystyle \alpha_{\dt,\tau} = \mathop{\E}_{t\in\gT}\Big[\1\Big\{\max_{f,g\in\gR_\eps}\dt\big(\vv^{f}_{t},\vv^{g}_{t}\big)>\tau\Big\}\Big]
    &
    \displaystyle \varsigma_{\dt,\tau} = \max_{f,g\in\gR_\eps}\, \mathop{\E}_{t\in\gT}\Big[\1\Big\{\dt\big(\vv^{f}_{t},\vv^{g}_{t}\big)>\tau\Big\}\Big]
\end{array}
\end{equation*}
\end{definition}

The choice of $\dt$ determines which aspect of temporal disagreement is assessed. Models may differ in the magnitude, direction, volatility, shape, spectral structure, or timing of their trajectories. No single distance captures all of these forms of disagreement without conflating them. We therefore leave $\dt$ unspecified in the general definition and provide concrete instantiations in \cref{sec:temporal-trajectory-distances}.

\textbf{Information Awareness Across Multiplicity Levels.} The temporal statistics defined above summarize disagreement over complete trajectories. We next show that the three levels retain different information about the forecast matrix. We formalize this distinction through their sensitivity to rearrangements of forecast data, demonstrating that pointwise and horizon-wise statistics are blind to aspects of disagreement that depend on how predictions are coupled across horizons.

\begin{definition}[Multiplicity-, Horizon-, and Trajectory-Aware Statistics]
\label{def:awareness}
A \emph{statistic} is any function of the target matrix $\mY$ and the forecast matrices $\{\widehat{\mY}^{f}\}_{f\in\gR_\eps}$.
\EA{A statistic is:}
\begin{itemize}[topsep=0pt,itemsep=0pt]
    \item \emph{multiplicity-aware} if not invariant under \emph{per-model cell permutations} $\mE^{f}\mapsto\pi^{f}(\mE^{f})$, with $\pi^{f}$ a permutation of the cells $\gT\times\{1,\ldots,H\}$ that may depend on $f$;
    \item \emph{horizon-aware} if not invariant under \emph{cell permutations}, per-model cell permutations with a common $\pi^{f}\equiv\pi$;
    \item \emph{trajectory-aware} if not invariant under \emph{row permutations}, cell permutations $\pi(t,h)=(\pi_h(t),h)$ that preserve horizons and re-index the origins within each horizon.
\end{itemize}
\end{definition}

These statistics are nested, since row permutations are cell permutations, and cell permutations are per-model cell permutations. As 
illustrated in \cref{fig:hierarchy}, the loss retains none of this structure; pointwise statistics retain model alignment but treat all $TH$ cells as an unordered collection; and horizon-wise statistics additionally distinguish horizons, but not the ordering of origins within them. Only temporal statistics retain full trajectory structure. For example, swapping origins at one horizon flattens the trajectories in the final panel of \cref{fig:hierarchy} without changing any lower-level statistic, despite changing whether models agree on the direction of change. \EA{The proof is in \cref{app:horizon-wise-invariance}}.

\section{How Loosely Is Temporal Multiplicity Constrained?}
\label{sec:theory}

Knowing what each level captures, we now quantify how loosely information below the temporal level constrains temporal multiplicity.
\label{sec:bounds}
We consider losses of the following common form:
\begin{equation}
    \Ls(f)
    =
    \frac{1}{TH}
    \sum_{t\in\gT}
    \sum_{h=1}^{H}
    \ell\big(e^{f}_{t,h}\big),
    \label{eq:decomposable-loss}
\end{equation}
\EA{for some continuous $\ell:\R\rightarrow[0,\infty)$ with $\ell(0)=0$ that is nondecreasing in $|e|$ and unbounded ($\ell(e)\rightarrow\infty$ as $|e|\rightarrow\infty$), with inverse $\ell^{-1}(L)=\max\{e\geq 0:\ell(e)\leq L\}$.
This covers most standard losses, including MSE ($\ell(e)=e^2$), MAE ($\ell(e)=|e|$)}, as well as MASE-type losses.

We begin by quantifying how loosely the performance constraint defining the Rashomon set constrains temporal multiplicity.
The bound below is tight for sufficiently rich model classes: we call $\gF$ \emph{shell-rich} (at level $\eps$) if for every $\mE\in\R^{T\times H}$ with $\Lstar\leq\Ls(\mE)\leq\Lstar+\eps$ there exists $f\in\gF$ with $\mE^{f}=\mE$.
This is a mild condition, far weaker than universal approximation \citep{Hornik1991}: $\gF$ need only realize error patterns with loss near $\Lstar$.
It is readily satisfied by sufficiently overparameterized networks, which can interpolate arbitrary values at finitely many inputs \citep{Zhang2017}.

\begin{restatable}[Multiplicity Bounds Under Rashomon Set Membership]{theorem}{restateTemporalMultiplicity}
\label{thm:temporal-multiplicity}
For all $f,g\in\gR_\eps$:
\[
    \big|
    \hat{y}^{f}_{t+h}
    -
    \hat{y}^{g}_{t+h}
    \big|
    \;\leq\;
    2\,\ell^{-1}\!\big(TH\,(\Lstar+\eps)\big)
    \qquad\forall t\in\gT,\ h\in\{1,\ldots,H\}.
\]
If $\gF$ is also \emph{shell-rich}, the bound is attained, i.e., $\max_{f,g\in\gR_\eps}\big|\hat{y}^{f}_{t+h}-\hat{y}^{g}_{t+h}\big|=2\,\ell^{-1}\!\big(TH\,(\Lstar+\eps)\big)$.
\end{restatable}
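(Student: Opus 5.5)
The plan is to bound each model's individual error at a cell, and then combine the two models through the triangle inequality.

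\emph{Per-cell error bound.} Fix $f\in\gR_\eps$ and a cell $(t,h)$. Since $\ell\geq 0$, every other summand in \eqref{eq:decomposable-loss} is nonnegative, so
\[
\ell(e^{f}_{t,h})\;\leq\;\sum_{s,k}\ell(e^{f}_{s,k}) \;=\; TH\,\Ls(f)\;\leq\; TH(\Lstar+\eps).
\]
By the definition $\ell^{-1}(L)=\max\{e\geq0:\ell(e)\leq L\}$, this gives $|e^{f}_{t,h}|\leq \ell^{-1}(TH(\Lstar+\eps))$. This needs $\ell$ to depend only on $|e|$, i.e.\ to be symmetric, which is how I read ``nondecreasing in $|e|$''. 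The maximum in the definition of $\ell^{-1}$ exists: the set is nonempty because $\ell(0)=0$, closed by continuity of $\ell$, and bounded because $\ell$ is unbounded.

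\emph{Upper bound.} Both $f$ and $g$ forecast the same target $y_{t+h}$, so
\[
\hat{y}^{f}_{t+h}-\hat{y}^{g}_{t+h}=e^{g}_{t,h}-e^{f}_{t,h}.
\]
Applying the per-cell bound to each term gives the factor $2$.

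\emph{Attainment under shell-richness.} Write $L=TH(\Lstar+\eps)$ and $e^{\ast}=\ell^{-1}(L)$, so that $\ell(e^{\ast})\leq L$. Continuity of $\ell$ gives $\ell(e^{\ast})\geq L$: otherwise a slightly larger $e$ would still satisfy $\ell(e)\leq L$, contradicting maximality. Hence $\ell(e^{\ast})=L$. Now define $\mE^{+}$ to be the matrix equal to $e^{\ast}$ at cell $(t,h)$ and $0$ elsewhere, and $\mE^{-}$ the same with $-e^{\ast}$. By symmetry of $\ell$, both have loss exactly $L/(TH)=\Lstar+\eps$. This lies in the shell $[\Lstar,\Lstar+\eps]$, so shell-richness supplies $f,g\in\gF$ with $\mE^{f}=\mE^{+}$ and $\mE^{g}=\mE^{-}$. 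Both are in $\gR_\eps$, and their forecasts at $(t,h)$ differ by exactly $2e^{\ast}$, which matches the upper bound.

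\emph{Main obstacle.} The hard step is attainment, because it relies on hidden assumptions about $\ell$. Symmetry is needed so that $\pm e^{\ast}$ have equal loss; if $\ell$ were asymmetric, I would instead use $\ell^{-1}$ separately on each side. The argument also uses that the shell condition is stated with the upper endpoint $\Lstar+\eps$ included. I would state these points explicitly. I would also note that concentrating all of the loss budget on a single cell is what makes the bound scale with $TH$.
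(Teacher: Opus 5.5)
Your proposal is correct and follows the paper's proof essentially step for step. It uses the same ingredients: the per-cell bound from nonnegativity of the summands, the triangle inequality once the common target cancels, the continuity argument giving $\ell(\ell^{-1}(B))=B$, and attainment via two error matrices carrying $\pm\ell^{-1}(B)$ at the single cell $(t,h)$ and zeros elsewhere. Your check that the maximum defining $\ell^{-1}$ exists is a small refinement the paper leaves implicit. Symmetry of $\ell$ is also not an extra assumption: if $\ell$ is nondecreasing in $|e|$, comparing $e$ with $-e$ in both directions forces $\ell(e)=\ell(-e)$.
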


Theorem~\ref{thm:temporal-multiplicity} shows that Rashomon set membership imposes a single scalar budget on the $TH$-dimensional error matrix, without constraining where errors fall within such a matrix.
It thus uncovers a failure mode of performance-based \EA{model selection} in temporal settings: beyond leaving open how errors distribute across instances \citep{Marx2020a}, the loss is blind to how they distribute \emph{across horizons within each instance}, i.e., to the shape of every forecast trajectory.
Moreover, this freedom \emph{scales with the horizon}: for $\ell(e)=|e|^p$, the theorem's pointwise bound grows as $H^{1/p}$ ($\sqrt{H}$ under MSE and linearly under MAE).
Longer horizons therefore weaken the control that Rashomon membership provides over individual forecasts, limiting what membership alone guarantees about the resulting trajectories.

\label{sec:enforcing} 

At the opposite extreme of our hierarchy lies horizon-wise multiplicity. We now quantify how constraining at this level affects temporal multiplicity by imposing the strongest requirement afforded by its definition, \emph{vanishing horizon-wise ambiguity}: $\alpha^{(h)}_{\tau_h}=0$ for every horizon $h$, at tolerances $\bm{\tau}=(\tau_1,\ldots,\tau_H)$. This implies $\nu^{(h)},\omega^{(h)}\leq\tau_h$ and $\varsigma^{(h)}_{\tau_h}=0$. When $\tau_h\equiv\tau$, this condition also recovers the pointwise case, $\alpha_{\tau}=0$.

\begin{restatable}[Multiplicity Bound Under Horizon-Wise Control]{theorem}{restateStructuralFreedom}
\label{thm:structural-freedom}
Fix tolerances $\bm{\tau}=(\tau_1,\ldots,\tau_H)$.
Then, every pair $f,g\in\gF$ with $\alpha^{(h)}_{\tau_h}(\{f,g\})=0$ for all $h$ satisfies:
\begin{equation*} 
    \big|
    \hat{y}^{f}_{t+h}
    -
    \hat{y}^{g}_{t+h}
    \big|
    \;\leq\;
    \tau_h
    \qquad\forall t\in\gT,\ h\in\{1,\ldots,H\}.
\end{equation*}
If, moreover, $\gF$ is \emph{shell-rich}, the bound is attained by pairs in $\gR_\eps$ whenever it is \emph{loss-feasible}, i.e., whenever the loss budget of \cref{thm:temporal-multiplicity} does not bind first: $\sum_{h}\ell(\tau_h/2)\leq TH(\Lstar+\eps)$.
\end{restatable}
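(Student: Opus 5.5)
The first part of the statement is immediate from \cref{def:horizon-pm}; the real work is the attainment claim. For the upper bound, I will unpack $\alpha^{(h)}_{\tau_h}(\{f,g\})=0$. It says that the average over $t\in\gT$ of the indicator $\1\{\max_{f',g'\in\{f,g\}}|\hat y^{f'}_{t+h}-\hat y^{g'}_{t+h}|>\tau_h\}$ is zero. Since $\gT$ is finite and the indicators are nonnegative, every indicator must vanish. Hence $|\hat y^{f}_{t+h}-\hat y^{g}_{t+h}|\le\tau_h$ for every $t$ and $h$, which is exactly the inequality. No loss information is needed for this direction.

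For attainment, I will construct two error matrices explicitly and then invoke shell-richness to realise them. Fix a target cell $(t_0,h_0)$, or all cells at once. The natural symmetric construction is to set $\mE^{f}$ with entries $e^{f}_{t_0,h}=+\tau_h/2$ and $\mE^{g}$ with entries $e^{g}_{t_0,h}=-\tau_h/2$ for all $h$ along the row $t_0$. Then $\hat y^{f}_{t_0+h}-\hat y^{g}_{t_0+h}=e^{g}_{t_0,h}-e^{f}_{t_0,h}=-\tau_h$, so the gap equals $\tau_h$ at every horizon. All remaining cells are used to tune the loss. The loss of each matrix is at least $\frac{1}{TH}\sum_h\ell(\tau_h/2)$ from row $t_0$, by symmetry of $\ell$ in $|e|$. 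This quantity is at most $\Lstar+\eps$ precisely under the loss-feasibility hypothesis $\sum_h\ell(\tau_h/2)\le TH(\Lstar+\eps)$.

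The main obstacle is meeting the lower requirement of shell-richness, $\Ls(\mE)\ge\Lstar$. The other cells may be needed to raise the loss into the shell $[\Lstar,\Lstar+\eps]$. I will handle this by continuity and unboundedness of $\ell$. If $\frac{1}{TH}\sum_h\ell(\tau_h/2)<\Lstar$, I set one other cell (with $T\ge2$) to a value $c$ with $\ell(c)$ exactly filling the gap; such $c$ exists by the intermediate value theorem, because $\ell(0)=0$ and $\ell\to\infty$. I use the same $c$ in both matrices so that their predictions agree there. If $T=1$, I instead scale up errors within the row while keeping the difference $\tau_h$. Concretely, I use $e^{f}=a_h+\tau_h/2$ and $e^{g}=a_h-\tau_h/2$ and choose the shift $a_h$ by continuity to reach loss $\Lstar$. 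In this case I need to check that both matrices can be placed in the shell simultaneously. I expect that choosing $\mE^{g}=-\mE^{f}$ will work, since it gives equal losses by symmetry of $\ell$.

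Once both $\mE^{f}$ and $\mE^{g}$ lie in the shell, shell-richness supplies $f,g\in\gF$ realising them. Their losses are at most $\Lstar+\eps$, so $f,g\in\gR_\eps$. By construction they satisfy $\alpha^{(h)}_{\tau_h}(\{f,g\})=0$ and attain $|\hat y^{f}_{t+h}-\hat y^{g}_{t+h}|=\tau_h$, which shows the bound is tight.
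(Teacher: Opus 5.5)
For $T\ge 2$ your argument is the paper's proof. It places opposite half-tolerance errors $\pm\tau_h/2$ on one row. An identical padding error at a second origin then lifts both row losses from $S=\sum_h\ell(\tau_h/2)$ to at least $TH\Lstar$ without creating any forecast gap. Shell-richness realises the two matrices. Your upper-bound argument also coincides with the paper's. Note that the paper makes $T\ge 2$ a standing assumption for the attainment part (stated just before it restates the theorem in the appendix), precisely because the padding needs a second origin.

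Your $T=1$ branch fails, and it cannot be repaired. With $e^f_h=a_h+\tau_h/2$ and $e^g_h=a_h-\tau_h/2$, the losses $\sum_h\ell(a_h+\tau_h/2)$ and $\sum_h\ell(a_h-\tau_h/2)$ differ in general. Your proposed fix $\mE^g=-\mE^f$ forces $a_h=-a_h$, i.e.\ $a_h=0$, which returns you to the under-budget rows you started from.

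More fundamentally, attainment is false for $T=1$. Take $T=H=1$, $\ell(e)=e^2$, $\eps=0$, and let $\gF$ consist of two models with errors $+1$ and $-1$. Then $\Lstar=1$ and $\gF$ is shell-rich at level $0$. Pairwise forecast gaps in $\gR_0$ lie in $\{0,2\}$. For $\tau=1$ the feasibility condition $\ell(\tau/2)=1/4\le 1$ holds, yet no admissible pair has gap exactly $1$. The same failure occurs with $\eps=0.21$ and $\gF$ realising every error with $|e|\in[1,1.1]$: the achievable gaps are then $[0,0.1]\cup[2,2.2]$.

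So instead of a separate $T=1$ construction, you should assume $T\ge 2$ explicitly. Under that hypothesis your padding argument is complete.
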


Indeed, horizon-wise multiplicity imposes a stronger constraint on predictions, bounding the absolute difference between models at each horizon by $\tau_h$. Yet, except in the degenerate case where $\tau_h=0$ for every horizon and all models in the Rashomon set produce identical forecasts for every sample, this constraint leaves many temporal properties of the trajectories unrestricted. For example, it provides no control over differences in their direction, spectral behavior, or shape.

\section{Instantiations of Temporal Multiplicity}
\label{sec:temporal-trajectory-distances}

\cref{def:temporal-multiplicity} deliberately leaves the choice of \emph{trajectory distance function} unspecified. We now instantiate this general definition with a family of interpretable distances, each designed to capture a distinct aspect of temporal disagreement. As there is no unique trajectory distance appropriate for all forecasting applications, we select distances that are complementary along two criteria: each has a clear interpretation in terms of a recognizable aspect of temporal behavior, and together they distinguish differences in level, local dynamics, and global trajectory structure. 

We begin with a magnitude distance, which aggregates the pointwise gaps between two trajectories $\vu,\vv\in\R^H$ and therefore retains the least temporal structure. It is consequently the temporal distance most closely related to pointwise and horizon-wise multiplicity.
\begin{itemize}[nosep,topsep=0pt]
    \item \emph{Magnitude}: $\dt_{\mathrm{magnitude}}(\vu,\vv)=\frac{1}{\sqrt{H}}\|\vu-\vv\|_2$, the root-mean-square pointwise gap.
\end{itemize}
The next three distances operate on trajectory increments and compare patterns of change independently of the trajectories' absolute levels.
\begin{itemize}[nosep,topsep=0pt]
    \item \emph{Increment}: $\dt_{\mathrm{increment}}(\vu,\vv)=\frac{1}{\sqrt{H-1}}\|\Delta\vu-\Delta\vv\|_2$, with $\Delta\vu=(\Delta u_1,\ldots,\Delta u_{H-1})$.

    \item \emph{Volatility}: $\dt_{\mathrm{volatility}}(\vu,\vv)=\frac{1}{H-1}\big|\sum_h|\Delta u_h|-\sum_h|\Delta v_h|\big|$, the gap between the mean absolute increments.

    \item \emph{Direction}: $\dt_{\mathrm{direction}}(\vu,\vv)=\frac{1}{H-1}\big|\{h:\sign(\Delta u_h)\neq\sign(\Delta v_h)\}\big|$, the fraction of increments of opposite sign.
\end{itemize}
These final three distances are scale-free, comparing the form of trajectories rather than their absolute values.\footnote{Here, $\rho$ denotes Pearson correlation, $\digamma$ the normalized discrete Fourier transform, and $\rho_\tau$ the lag-$\tau$ cross-correlation. See \cref{app:multiplicity-metrics} for precise numerical implementations.}
\begin{itemize}[topsep=1pt]
    \item \emph{Shape}: $\dt_{\mathrm{shape}}(\vu,\vv)=\tfrac{1}{2}\big(1-\rho(\vu,\vv)\big)\in[0,1]$ using the Pearson correlation.

    \item \emph{Spectral}: $\dt_{\mathrm{spectral}}(\vu,\vv)=\tfrac{1}{\sqrt{2}}\big\|\,|\digamma(\vu)|-|\digamma(\vv)|\,\big\|_2\in[0,1]$, the magnitude spectrum gap.

    \item \emph{Timing}: $\dt_{\mathrm{timing}}(\vu,\vv)=|\tau^\star|/(H-2)\in[0,1]$ with $\tau^\star=\argmax_{|\tau|\leq H-2}\rho_\tau(\vu,\vv)$, the best-alignment lag as a fraction of the largest admissible lag.
\end{itemize}

Each distance can be applied to forecast, revision, or error trajectories. On forecast trajectories $\widehat{\vy}^{f}_{t}$, it measures disagreement about the future path of the target; on revision trajectories $\widehat{\vr}^{f}_{t}$, it measures differences in how models update forecasts of a common target as new observations become available; on error trajectories $\ve^{f}_{t}$, it measures differences in the temporal structure of model errors.

The instantiations above also make the general results of \cref{sec:theory} operational: substituting each distance into our theoretical framework yields metric-specific bounds on temporal disagreement. We defer these bounds, their proofs, and an empirical assessment of their tightness to \cref{app:bounds}, and turn next to experiments that characterize temporal multiplicity among trained forecasting models.

\section{Experiments}
\label{sec:experiments}
We evaluate temporal predictive multiplicity in trained forecasting models guided by three questions:
\begin{enumerate}[label={\bfseries Q\arabic*:}, leftmargin=*, topsep=2pt, itemsep=0pt]
    \item How much do near-optimal models differ across the temporal aspects captured by our trajectory distances?
    \item Can pointwise or horizon-wise disagreement serve as a proxy for such differences?
    \item Can temporal multiplicity alter the conclusions or decisions drawn from a forecast?
\end{enumerate}
Across the experiments, we find that near-optimal models frequently produce markedly different trajectories, that these differences are largely invisible to conventional multiplicity measures, and that they can have consequential downstream implications.

\textbf{Experimental Setup.}
We consider 11 datasets from diverse domains. For each dataset, we train models spanning 19 neural forecasting architectures, including transformer, MLP/mixer-based, and recurrent families, over 10 random seeds and several hyper-parameter configurations. The empirical Rashomon set $\egR$ contains all models within $\eps=5\%$ of the best validation MASE. We compute all multiplicity statistics from rolling-origin test forecasts of its members. Full details of the datasets, models, training procedure, and evaluation protocol appear in \cref{app:experimental-setup}.

\textbf{Q1. How large is temporal multiplicity?}
We first quantify the extent to which near-optimal models differ in the temporal aspects captured by our trajectory distances. For each distance, we compute the trajectory range and diameter of the empirical Rashomon set. \Cref{tab:temporal-main} reports the results.\footnote{\Cref{app:horizon-wise-results} reports the complete pointwise, horizon-wise, and temporal multiplicity statistics, including ambiguity and discrepancy, per target.}
 
\begin{table}[t]
    \centering
    \caption{Pointwise, horizon-wise, and temporal predictive multiplicity in empirical Rashomon sets. Each cell reports the trajectory range / diameter ($\nu_{\dt} / \omega_{\dt}$), maximized across forecast, revision, and error trajectories. For multi-target datasets, multiplicity statistics are maximized and MASE is averaged across targets. Magnitude, increment, and volatility are reported in seasonal-naive units; direction, timing, shape, and spectral distances are scale-free and reported as percentages.}
    \label{tab:temporal-main}
    \footnotesize
    \setlength{\tabcolsep}{3pt}
    \resizebox{\textwidth}{!}{\begin{tabular}{llllllll|lll|l}
\toprule
Dataset & $\dt_{\mathrm{magnitude}}$ & $\dt_{\mathrm{increment}}$ & $\dt_{\mathrm{volatility}}$ & $\dt_{\mathrm{direction}}$ & $\dt_{\mathrm{shape}}$ & $\dt_{\mathrm{spectral}}$ & $\dt_{\mathrm{timing}}$ & $\nu^{(1)}$ / $\omega^{(1)}$ & $\nu^{(H)}$ / $\omega^{(H)}$ & $\nu$ / $\omega$ & MASE$^{*}$ \\
\midrule
ATM 1D & 1.38 / 0.66 & 1.55 / 0.81 & 0.87 / 0.35 & 100\% / 59\% & 96\% / 54\% & 85\% / 36\% & 100\% / 54\% & 1.79 / 0.59 & 1.94 / 0.62 & 1.85 / 0.55 & 1.05 \\
ATM 1W & 1.46 / 0.67 & 1.38 / 0.65 & 0.94 / 0.49 & 100\% / 55\% & 99\% / 56\% & 85\% / 32\% & 100\% / 58\% & 1.75 / 0.71 & 1.67 / 0.61 & 1.68 / 0.59 & 0.93 \\
Bitcoin & 7.00 / 2.99 & 3.66 / 2.08 & 2.01 / 0.88 & 100\% / 62\% & 99\% / 61\% & 82\% / 42\% & 100\% / 69\% & 5.16 / 2.35 & 9.41 / 3.34 & 7.10 / 2.63 & 2.67 \\
Electricity D. A & 1.56 / 0.62 & 0.47 / 0.33 & 0.21 / 0.15 & 79\% / 49\% & 83\% / 39\% & 82\% / 43\% & 98\% / 33\% & 1.59 / 0.46 & 2.19 / 0.58 & 1.93 / 0.53 & 0.46 \\
Electricity D. Z & 0.80 / 0.30 & 0.20 / 0.15 & 0.07 / 0.05 & 69\% / 50\% & 79\% / 44\% & 77\% / 49\% & 99\% / 40\% & 0.62 / 0.22 & 1.17 / 0.32 & 0.97 / 0.24 & 0.24 \\
Electricity Price & 0.95 / 0.45 & 0.40 / 0.29 & 0.18 / 0.11 & 81\% / 46\% & 72\% / 35\% & 76\% / 38\% & 96\% / 30\% & 0.98 / 0.34 & 1.29 / 0.48 & 1.24 / 0.37 & 0.82 \\
ETT 1H & 1.42 / 0.56 & 0.50 / 0.33 & 0.23 / 0.14 & 83\% / 47\% & 77\% / 33\% & 79\% / 37\% & 95\% / 31\% & 1.13 / 0.44 & 1.97 / 0.60 & 1.66 / 0.47 & 0.48 \\
Exchange Rate & 5.31 / 1.78 & 5.25 / 2.31 & 3.05 / 1.32 & 100\% / 100\% & 100\% / 100\% & 97\% / 87\% & 100\% / 100\% & 4.09 / 1.59 & 6.26 / 2.02 & 5.35 / 1.36 & 2.36 \\
FRED-MD & 0.16 / 0.12 & 0.13 / 0.09 & 0.09 / 0.05 & 51\% / 35\% & 49\% / 36\% & 0\% / 0\% & 61\% / 45\% & 0.11 / 0.08 & 0.21 / 0.13 & 0.16 / 0.11 & 0.14 \\
Gasoline & 2.74 / 1.33 & 2.13 / 1.00 & 1.23 / 0.56 & 100\% / 72\% & 100\% / 66\% & 94\% / 46\% & 100\% / 79\% & 3.03 / 1.32 & 3.29 / 1.29 & 3.10 / 1.22 & 0.74 \\
Weather & 2.41 / 1.25 & 0.72 / 0.51 & 0.39 / 0.27 & 66\% / 52\% & 88\% / 50\% & 77\% / 47\% & 99\% / 48\% & 2.73 / 1.39 & 3.12 / 1.10 & 2.96 / 1.06 & 0.95 \\
\midrule
Mean & 2.29 / 0.98 & 1.49 / 0.78 & 0.84 / 0.40 & 84\% / 57\% & 86\% / 52\% & 76\% / 41\% & 95\% / 53\% & 2.09 / 0.86 & 2.96 / 1.01 & 2.55 / 0.83 & 0.99 \\
\bottomrule
\end{tabular}
} 
\end{table} 

Temporal multiplicity is substantial across all temporal aspects. At a typical origin, some pair of near-optimal models moves in \emph{opposite} directions on $84\%$ of trajectory increments, reaching $100\%$ on Gasoline; even a fixed pair does so on $57\%$ of increments.A shape range of $86\%$ corresponds to a Pearson correlation of approximately $-0.7$ between the most divergent trajectories, with one trajectory tending to rise at the time steps where the other falls. The timing range reaches $95\%$, indicating that the most divergent trajectories often require a shift of about half the forecasting horizon to align, and the spectral range of $76\%$ lies well above the midpoint of its scale. Thus, models with indistinguishable validation performance can imply sharply different trajectories.

The pointwise and horizon-wise results are also consistent with prior work: pointwise multiplicity increases with the forecast horizon \citep{Marx2020a,Watson-Daniels2023,Kale2026}. However, the temporal results are not simply a rescaling of the pointwise ones. The two measures rank datasets differently; for example, Weather has the fourth-largest pointwise range but the second-smallest directional disagreement. We examine this distinction in the next experiment.

\textbf{Q2. Can conventional frameworks capture temporal multiplicity?}
The preceding theoretical results suggest that temporal multiplicity is not simply a rescaling of pointwise or horizon-wise multiplicity. We test this directly at the level of individual model pairs and forecast origins. For each pair of models in the empirical Rashomon set and each test forecast origin, we compute the Spearman correlation between their mean pointwise gap and their trajectory distance. We consider both a pointwise reading and a horizon-normalized reading, which scales gaps by the best validation MASE at each horizon:
\begin{equation}
    r_{\dt}
    =
    \varrho\Big(\big[\bar{\delta}^{fg}_{t}\big]_{\substack{\{f,g\}\in\binom{\egR}{2}\\ t\in\gT}},\;\big[\dt(\vv^{f}_{t},\vv^{g}_{t})\big]_{\substack{\{f,g\}\in\binom{\egR}{2}\\ t\in\gT}}\Big),
    \qquad
    \bar{\delta}^{fg}_{t}=\frac{1}{H}\sum_{h=1}^{H}\frac{\big|\nv{\hat{y}}^{f}_{t+h}-\nv{\hat{y}}^{g}_{t+h}\big|}{\Ls^{\star}_h},
    \label{eq:raw-correlation}
\end{equation}
where $\binom{\egR}{2}$ is the set of unordered pairs of distinct models in $\egR$, $\nv{\hat{y}}^{f}_{t+h}$ and $\nv{\hat{y}}^{g}_{t+h}$ are the forecasts in naive units, $\Ls^{\star}_h=\min_{f\in\egR}\Ls_h(f)$ is the best validation MASE at horizon $h$, and $\varrho$ denotes Spearman rank correlation.
The horizon-normalized reading places gaps from different horizons on a common scale; the unweighted pointwise reading is obtained by setting $\Ls^\star_h\equiv1$. \Cref{tab:pointwise-n-horizon-wise-multiplicity-vs-temporal-correlation} reports the weakest and strongest correlations across trajectories and, for multi-target datasets, targets.

The results are aligned with our theory. Correlation is nearly perfect for magnitude, which directly aggregates pointwise gaps, and decreases for increment and volatility distances. In contrast, disagreement in direction, shape, spectral behavior, and timing has little or no association with pointwise or horizon-wise disagreement. On the least correlated trajectory, average correlations for these aspects range from $0.04$ to $0.16$, while even the strongest correlations do not exceed $0.47$. Thus, knowing how far apart two near-optimal forecasts are at individual horizons provides little information about whether they agree on the direction, shape, spectrum, or timing of the trajectory.

\begin{table}[t]
    \centering
    \caption{Spearman rank correlation, over all (model pair, origin) samples of the empirical Rashomon set, between the pair's \emph{mean pointwise gap} $\bar{\delta}^{fg}_{t}$ and its \emph{trajectory distance} $\dt(\vv^{f}_{t},\vv^{g}_{t})$ on the same forecast origin, read as in \eqref{eq:raw-correlation} with $\Ls^{\star}_h\equiv1$ (Pointwise) or as is (Horizon-Wise).
    Each row reports the minimum / maximum over the three trajectories and, for multi-target datasets, its targets.}
    \label{tab:pointwise-n-horizon-wise-multiplicity-vs-temporal-correlation}
    \footnotesize
    \setlength{\tabcolsep}{3pt}
    \resizebox{\textwidth}{!}{\begin{tabular}{llllllll|lllllll}
\toprule
Dataset & \multicolumn{7}{l}{Pointwise} & \multicolumn{7}{l}{Horizon-Wise} \\
\cmidrule(lr){2-8} \cmidrule(lr){9-15}
 & $\dt_{\mathrm{magnitude}}$ & $\dt_{\mathrm{increment}}$ & $\dt_{\mathrm{volatility}}$ & $\dt_{\mathrm{direction}}$ & $\dt_{\mathrm{shape}}$ & $\dt_{\mathrm{spectral}}$ & $\dt_{\mathrm{timing}}$ & $\dt_{\mathrm{magnitude}}$ & $\dt_{\mathrm{increment}}$ & $\dt_{\mathrm{volatility}}$ & $\dt_{\mathrm{direction}}$ & $\dt_{\mathrm{shape}}$ & $\dt_{\mathrm{spectral}}$ & $\dt_{\mathrm{timing}}$ \\
\midrule
ATM 1D & 0.98 / 0.98 & 0.58 / 0.75 & 0.35 / 0.37 & 0.15 / 0.28 & 0.24 / 0.61 & 0.05 / 0.42 & 0.07 / 0.24 & 0.98 / 0.98 & 0.58 / 0.75 & 0.35 / 0.37 & 0.15 / 0.28 & 0.24 / 0.61 & 0.05 / 0.42 & 0.07 / 0.24 \\
ATM 1W & 0.99 / 0.99 & 0.56 / 0.70 & 0.39 / 0.47 & 0.13 / 0.18 & 0.17 / 0.40 & 0.04 / 0.24 & 0.14 / 0.20 & 0.99 / 0.99 & 0.56 / 0.70 & 0.39 / 0.47 & 0.13 / 0.18 & 0.17 / 0.40 & 0.04 / 0.24 & 0.14 / 0.20 \\
Bitcoin & 0.99 / 1.00 & 0.61 / 0.69 & 0.43 / 0.49 & -0.02 / 0.19 & -0.01 / 0.32 & -0.05 / 0.26 & 0.02 / 0.14 & 0.96 / 0.98 & 0.60 / 0.67 & 0.41 / 0.48 & -0.03 / 0.17 & -0.03 / 0.30 & -0.05 / 0.25 & -0.01 / 0.13 \\
Electricity D. A & 0.99 / 0.99 & 0.46 / 0.66 & 0.16 / 0.36 & -0.03 / 0.26 & -0.00 / 0.64 & -0.08 / 0.49 & 0.05 / 0.36 & 0.96 / 0.97 & 0.47 / 0.65 & 0.16 / 0.36 & -0.03 / 0.26 & -0.03 / 0.63 & -0.10 / 0.48 & 0.04 / 0.35 \\
Electricity D. Z & 0.99 / 0.99 & 0.23 / 0.52 & 0.09 / 0.28 & -0.08 / 0.23 & 0.07 / 0.52 & -0.05 / 0.38 & 0.10 / 0.37 & 0.98 / 0.99 & 0.24 / 0.52 & 0.09 / 0.28 & -0.07 / 0.23 & 0.07 / 0.51 & -0.06 / 0.38 & 0.10 / 0.37 \\
Electricity Price & 0.99 / 0.99 & 0.59 / 0.64 & 0.20 / 0.31 & -0.02 / 0.06 & 0.17 / 0.35 & 0.08 / 0.23 & 0.10 / 0.14 & 0.98 / 0.98 & 0.59 / 0.64 & 0.20 / 0.31 & -0.02 / 0.06 & 0.15 / 0.34 & 0.07 / 0.22 & 0.10 / 0.14 \\
ETT 1H & 0.99 / 0.99 & 0.56 / 0.68 & 0.30 / 0.39 & 0.32 / 0.36 & 0.43 / 0.51 & 0.31 / 0.39 & 0.17 / 0.29 & 0.98 / 0.98 & 0.58 / 0.69 & 0.31 / 0.40 & 0.33 / 0.38 & 0.41 / 0.50 & 0.30 / 0.39 & 0.17 / 0.29 \\
Exchange Rate & 0.99 / 0.99 & 0.50 / 0.83 & 0.29 / 0.51 & 0.09 / 0.22 & 0.14 / 0.47 & 0.01 / 0.28 & 0.06 / 0.22 & 0.96 / 0.98 & 0.49 / 0.82 & 0.29 / 0.51 & 0.09 / 0.21 & 0.14 / 0.45 & 0.01 / 0.27 & 0.06 / 0.20 \\
FRED-MD & 0.98 / 1.00 & 0.61 / 0.69 & 0.45 / 0.64 & 0.03 / 0.40 & 0.05 / 0.42 & -0.11 / 0.04 & 0.02 / 0.40 & 0.97 / 0.99 & 0.59 / 0.67 & 0.44 / 0.62 & 0.03 / 0.39 & 0.04 / 0.42 & -0.10 / 0.04 & 0.03 / 0.39 \\
Gasoline & 0.99 / 0.99 & 0.56 / 0.61 & 0.35 / 0.45 & 0.09 / 0.22 & 0.12 / 0.43 & -0.01 / 0.29 & 0.10 / 0.23 & 0.99 / 0.99 & 0.56 / 0.61 & 0.35 / 0.45 & 0.09 / 0.22 & 0.12 / 0.43 & -0.01 / 0.29 & 0.10 / 0.23 \\
Weather & 1.00 / 1.00 & 0.69 / 0.72 & 0.51 / 0.52 & 0.35 / 0.43 & 0.34 / 0.52 & 0.24 / 0.43 & 0.27 / 0.35 & 0.99 / 0.99 & 0.70 / 0.73 & 0.52 / 0.54 & 0.37 / 0.44 & 0.34 / 0.52 & 0.24 / 0.43 & 0.27 / 0.35 \\
\midrule
Mean & 0.99 / 0.99 & 0.54 / 0.68 & 0.32 / 0.44 & 0.09 / 0.26 & 0.16 / 0.47 & 0.04 / 0.31 & 0.10 / 0.27 & 0.98 / 0.98 & 0.54 / 0.68 & 0.32 / 0.44 & 0.09 / 0.26 & 0.15 / 0.46 & 0.04 / 0.31 & 0.10 / 0.26 \\
\bottomrule
\end{tabular}
}
\end{table}

\textbf{Q3. Does temporal multiplicity affect downstream decisions?} We finally examine whether trajectory-level disagreement can lead to different downstream decisions. Such decisions often depend on properties of an entire forecast trajectory, rather than on individual forecast cells. As an illustrative case study, we consider disagreement in the value and timing of trajectory extrema and in the direction of trajectory increments. For a forecast trajectory, define its maximum $v^{f,M}_{t}$, the location of its maximum $\ell^{f,M}_{t}$, and its increments $\Delta^{f}_{t,h}$ as:
\begin{equation}
    v^{f,M}_{t}=\max_{1\leq h\leq H}\nv{\hat{y}}^{f}_{t+h},
    \qquad
    \ell^{f,M}_{t}=\frac{1}{H}\argmax_{1\leq h\leq H}\nv{\hat{y}}^{f}_{t+h},
    \qquad
    \Delta^{f}_{t,h}=\nv{\hat{y}}^{f}_{t+h}-\nv{\hat{y}}^{f}_{t+h-1}.
    \label{eq:consequences-readings}
\end{equation}
We define $v^{f,m}_{t}$ and $\ell^{f,m}_{t}$ analogously for the minimum and its location. For each origin, disagreement about an extremum is measured by its range across members of $\egR$, in both value and location:
\begin{equation}
    \mathrm{range}_{t}(\vv)=\frac{\max_{f\in\egR}v^{f}_{t}-\min_{f\in\egR}v^{f}_{t}}{\Ls(f^\star)},
    \qquad
    \mathrm{range}_{t}(\vell)=\max_{f\in\egR}\ell^{f}_{t}-\min_{f\in\egR}\ell^{f}_{t}.
    \label{eq:consequences-extremes}
\end{equation}
The value range is expressed in multiples of the best validation MASE, while the location range is expressed as a fraction of the horizon. For directional disagreement, we consider the minority share of models moving against the majority direction, the share of \emph{contested cells} on which at least one model predicts an increase and another a decrease, and the range of increment angles $\theta^{f}_{t,h}=\arctan\Delta^{f}_{t,h}$ across models. \Cref{tab:consequences} reports these quantities for each dataset.

\begin{table}[t]
    \centering
    \caption{Disagreement across empirical Rashomon-set members about the value and location of forecast minima and maxima, and about the direction of forecast increments. Value ranges are expressed in multiples of the best validation MASE; locations are fractions of the forecast horizon; and increment-angle ranges are in degrees (up to $180^{\circ}$, opposite steep moves). Ranges are reported as mean (maximum) across origins, or across forecast cells for increment angles; directional shares are averaged across cells. We report the maximum for multi-target datasets. }
    \label{tab:consequences}
    \footnotesize
    \setlength{\tabcolsep}{3pt}
    \resizebox{\textwidth}{!}{\begin{tabular}{llllllll}
\toprule
Dataset & \multicolumn{2}{l}{Extremes Value} & \multicolumn{2}{l}{Extremes Location} & \multicolumn{3}{l}{Increment Direction} \\
\cmidrule(lr){2-3} \cmidrule(lr){4-5} \cmidrule(lr){6-8}
 & $\mathrm{range}(\vv^{M})$ & $\mathrm{range}(\vv^{m})$ & $\mathrm{range}(\vell^{M})$ & $\mathrm{range}(\vell^{m})$ & Contested Cells & Minority Share & $\mathrm{range}(\vtheta)$ \\
\midrule
ATM 1D & 1.91 (6.22) & 1.43 (5.40) & 18.7\% (85.7\%) & 48.3\% (85.7\%) & 51.0\% & 9.2\% & 55.8 (159) \\
ATM 1W & 1.83 (6.82) & 1.56 (8.17) & 68.3\% (75.0\%) & 70.7\% (75.0\%) & 89.8\% & 21.8\% & 69.2 (153) \\
Bitcoin & 2.83 (7.35) & 2.48 (9.21) & 84.7\% (85.7\%) & 85.2\% (85.7\%) & 100\% & 28.7\% & 101 (164) \\
Electricity D. A & 5.44 (39.4) & 3.13 (17.1) & 43.9\% (97.9\%) & 28.2\% (97.9\%) & 60.9\% & 10.3\% & 34.6 (143) \\
Electricity D. Z & 4.18 (20.9) & 2.19 (7.52) & 30.3\% (99.0\%) & 17.7\% (99.0\%) & 70.1\% & 13.3\% & 17.2 (89.3) \\
Electricity Price & 1.45 (4.13) & 1.53 (4.59) & 45.3\% (95.8\%) & 30.8\% (95.8\%) & 66.0\% & 10.0\% & 34.4 (110) \\
ETT 1H & 3.33 (8.40) & 3.24 (8.07) & 76.6\% (95.8\%) & 85.1\% (95.8\%) & 100\% & 23.2\% & 46.2 (105) \\
Exchange Rate & 2.72 (482) & 2.28 (64.5) & 80.0\% (80.0\%) & 80.0\% (80.0\%) & 100\% & 41.3\% & 118 (177) \\
FRED-MD & 1.30 (3.12) & 1.01 (4.82) & 27.7\% (66.7\%) & 27.4\% (66.7\%) & 47.0\% & 8.2\% & 5.50 (19.0) \\
Gasoline & 3.91 (17.2) & 4.26 (21.0) & 74.4\% (75.0\%) & 74.0\% (75.0\%) & 92.7\% & 21.9\% & 83.6 (160) \\
Weather & 4.09 (15.0) & 2.11 (12.0) & 79.7\% (99.3\%) & 69.6\% (99.3\%) & 100\% & 33.4\% & 60.3 (157) \\
\midrule
Mean & 3.00 (55.5) & 2.29 (14.8) & 57.2\% (86.9\%) & 56.1\% (86.9\%) & 79.8\% & 20.1\% & 56.8 (131) \\
\bottomrule
\end{tabular}
}
\end{table}

The consequences are substantial. On average, the most and least optimistic near-optimal models place the forecast peak $3.00$ error budgets apart and disagree about its timing by $57\%$ of the horizon; at the worst origin, they locate it at opposite ends of the horizon on every dataset. Moreover, at least one model predicts an increase while another predicts a decrease on $80\%$ of forecast cells. 
At a typical cell, $20\%$ of models oppose the majority direction and the members' increment angles span $57^{\circ}$ on average, i.e., steep rises against steep falls.
Thus, performance-indistinguishable models can support different downstream decisions. Horizon-wise statistics reveal disagreement at individual steps, but not whether models agree on the direction of changes or on which peak is largest.

\Cref{fig:consequences-examples} illustrates these differences. For Bitcoin, 75 of 173 near-optimal models predict that the price will rise over the following week, while 98 predict a fall. For Spanish day-ahead electricity prices, all 208 models predict the same intraday shape, yet 98 forecast a higher price after 24 hours and 110 a lower one. The demand examples show analogous disagreement about peaks: in New South Wales, predicted peak increases range from $1.2$ to $10.9$ naive units; in Queensland, 165 of 332 models place the peak on the same evening and 167 on the following morning. Comparable predictive performance therefore leaves unresolved both the magnitude and timing of peak demand.

\begin{figure}[t]
    \centering
    \includegraphics[width=\textwidth]{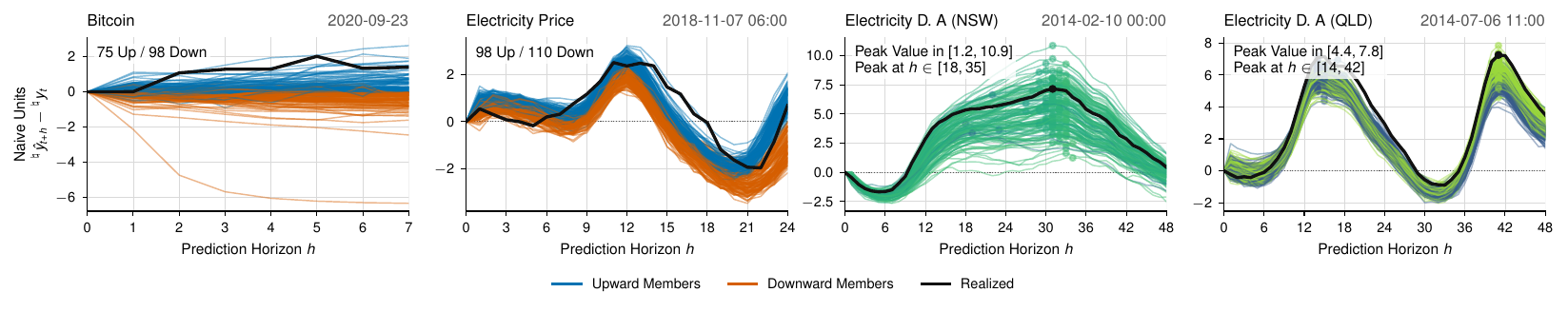}
    \caption{Examples of temporal multiplicity within empirical Rashomon sets. Each panel shows member forecast trajectories at one origin, as deviations from the realized level at that origin, together with the realized trajectory (black). Left: Bitcoin and Spanish electricity-price forecasts, colored by net direction over the horizon (blue: up; orange: down). Right: Australian electricity-demand forecasts, colored by the timing of their maximum (dots); annotations report disagreement in peak value and timing.}
    \label{fig:consequences-examples}
\end{figure}

\section{Conclusions}
\label{sec:conclusions}

We introduced a framework for characterizing predictive multiplicity in time-series forecasting at the level of complete trajectories. By distinguishing pointwise, horizon-wise, and temporal disagreement, we revealed differences in direction, shape, timing, and other temporal features that conventional multiplicity metrics cannot capture. Using this framework, we showed that near-optimal performance, and even tight horizon-wise agreement, provide limited control over the trajectories competing models may produce. Experiments across forecasting architectures and datasets confirmed that these differences arise in practice and can affect downstream decisions. Our framework therefore provides practitioners with a means to assess temporal differences among models that standard accuracy measures deem interchangeable.

Several directions follow. First, time-series attributions and counterfactuals are themselves temporal objects \cite{Tonekaboni2020,Ates2021}; applying our framework to them could reveal a temporal dimension of explanatory multiplicity \cite{Dong2020,Laberge2023}. Second, extending the framework to quantile and probabilistic forecasts would enable the study of distributional disagreement. For time-series foundation models \cite{Woo2024,Ansari2025}, competing forecasts may also result from different contexts, prompts, or covariates, raising the question of whether Rashomon sets should encompass such conditioning choices, as a temporal counterpart to prompt multiplicity in language models \cite{Sclar2023,Ganesh2026}. Finally, combining deterministic Rashomon-set thresholds with model confidence sets \cite{Hansen2011} could account for uncertainty in model performance when estimating temporal multiplicity.

\subsection*{Acknowledgements}
Toni was partially funded by J.P. Morgan and by the Royal Academy of Engineering under the Research Chairs and Senior Research Fellowships scheme. Toni was partially funded by the European Research Council (ERC) under the European Union’s Horizon 2020 Research and Innovation Programme (Grant Agreement No. 101020934). 

\subsection*{Disclaimer}
This paper was prepared for informational purposes by the Artificial Intelligence Research group of JPMorgan Chase \& Co. and its affiliates (''JP Morgan'') and is not a product of the Research Department of JP Morgan. JP Morgan makes no representation and warranty whatsoever and disclaims all liability, for the completeness, accuracy or reliability of the information contained herein. This document is not intended as investment research or investment advice, or a recommendation, offer or solicitation for the purchase or sale of any security, financial instrument, financial product or service, or to be used in any way for evaluating the merits of participating in any transaction, and shall not constitute a solicitation under any jurisdiction or to any person, if such solicitation under such jurisdiction or to such person would be unlawful.

Any views or opinions expressed herein are solely those of the authors listed.

\textcopyright{} 2026 JP Morgan Chase \& Co. All rights reserved.

\clearpage

\bibliography{zotero}

\begin{thebibliography}{76}
\providecommand{\natexlab}[1]{#1}
\providecommand{\url}[1]{\texttt{#1}}
\expandafter\ifx\csname urlstyle\endcsname\relax
  \providecommand{\doi}[1]{doi: #1}\else
  \providecommand{\doi}{doi: \begingroup \urlstyle{rm}\Url}\fi

\bibitem[AEMO()]{AEMO}
AEMO.
\newblock Aggregated price and demand data from the {{Australian Energy Market Operator}}.

\bibitem[Aksu et~al.(2024)Aksu, Woo, Liu, Liu, Liu, Savarese, Xiong, and Sahoo]{Aksu2024}
Taha Aksu, Gerald Woo, Juncheng Liu, Xu~Liu, Chenghao Liu, Silvio Savarese, Caiming Xiong, and Doyen Sahoo.
\newblock {{GIFT-Eval}}: {{A Benchmark For General Time Series Forecasting Model Evaluation}}.
\newblock In \emph{{{NeurIPS TSALM Workshop}}}, 2024.

\bibitem[Alexandrov et~al.(2020)Alexandrov, Benidis, {Bohlke-Schneider}, Flunkert, Gasthaus, Januschowski, Maddix, Rangapuram, Salinas, Schulz, Stella, T{\"u}rkmen, and Wang]{Alexandrov2020}
Alexander Alexandrov, Konstantinos Benidis, Michael {Bohlke-Schneider}, Valentin Flunkert, Jan Gasthaus, Tim Januschowski, Danielle~C. Maddix, Syama Rangapuram, David Salinas, Jasper Schulz, Lorenzo Stella, Ali~Caner T{\"u}rkmen, and Yuyang Wang.
\newblock {{GluonTS}}: {{Probabilistic}} and {{Neural Time Series Modeling}} in {{Python}}.
\newblock \emph{Journal of Machine Learning Research}, 21\penalty0 (116):\penalty0 1--6, 2020.

\bibitem[Ansari et~al.(2025)Ansari, Shchur, K{\"u}ken, Auer, Han, Mercado, Rangapuram, Shen, Stella, Zhang, Goswami, Kapoor, Maddix, Guerron, Hu, Yin, Erickson, Desai, Wang, Rangwala, Karypis, Wang, and {Bohlke-Schneider}]{Ansari2025}
Abdul~Fatir Ansari, Oleksandr Shchur, Jaris K{\"u}ken, Andreas Auer, Boran Han, Pedro Mercado, Syama~Sundar Rangapuram, Huibin Shen, Lorenzo Stella, Xiyuan Zhang, Mononito Goswami, Shubham Kapoor, Danielle~C. Maddix, Pablo Guerron, Tony Hu, Junming Yin, Nick Erickson, Prateek~Mutalik Desai, Hao Wang, Huzefa Rangwala, George Karypis, Yuyang Wang, and Michael {Bohlke-Schneider}.
\newblock Chronos-2: {{From Univariate}} to {{Universal Forecasting}}, 2025.

\bibitem[Ates et~al.(2021)Ates, Aksar, Leung, and Coskun]{Ates2021}
Emre Ates, Burak Aksar, Vitus~J. Leung, and Ayse~K. Coskun.
\newblock Counterfactual {{Explanations}} for {{Multivariate Time Series}}.
\newblock In \emph{2021 {{International Conference}} on {{Applied Artificial Intelligence}} ({{ICAPAI}})}, pp.\  1--8, 2021.

\bibitem[Bai et~al.(2018)Bai, Kolter, and Koltun]{Bai2018}
Shaojie Bai, J.~Zico Kolter, and Vladlen Koltun.
\newblock An {{Empirical Evaluation}} of {{Generic Convolutional}} and {{Recurrent Networks}} for {{Sequence Modeling}}, 2018.

\bibitem[Beck et~al.(2024)Beck, P{\"o}ppel, Spanring, Auer, Prudnikova, Kopp, Klambauer, Brandstetter, and Hochreiter]{Beck2024}
Maximilian Beck, Korbinian P{\"o}ppel, Markus Spanring, Andreas Auer, Oleksandra Prudnikova, Michael Kopp, G{\"u}nter Klambauer, Johannes Brandstetter, and Sepp Hochreiter.
\newblock {{xLSTM}}: {{Extended Long Short-Term Memory}}.
\newblock In \emph{Advances in {{Neural Information Processing Systems}}}, volume~37, pp.\  107547--107603, 2024.

\bibitem[Benidis et~al.(2022)Benidis, Rangapuram, Flunkert, Wang, Maddix, Turkmen, Gasthaus, {Bohlke-Schneider}, Salinas, Stella, Aubet, Callot, and Januschowski]{Benidis2022}
Konstantinos Benidis, Syama~Sundar Rangapuram, Valentin Flunkert, Yuyang Wang, Danielle Maddix, Caner Turkmen, Jan Gasthaus, Michael {Bohlke-Schneider}, David Salinas, Lorenzo Stella, Fran{\c c}ois-Xavier Aubet, Laurent Callot, and Tim Januschowski.
\newblock Deep {{Learning}} for {{Time Series Forecasting}}: {{Tutorial}} and {{Literature Survey}}.
\newblock \emph{ACM Computing Surveys}, 55\penalty0 (6):\penalty0 121:1--121:36, 2022.

\bibitem[Berry et~al.(2024)Berry, Cevik, and Ozyegen]{Berry2024}
Sean Berry, Mucahit Cevik, and Ozan Ozyegen.
\newblock Interpreting time series forecasting models using model classreliance.
\newblock \emph{Proceedings of the Canadian Conference on Artificial Intelligence}, 2024.

\bibitem[Black et~al.(2022)Black, Raghavan, and Barocas]{Black2022b}
Emily Black, Manish Raghavan, and Solon Barocas.
\newblock Model {{Multiplicity}}: {{Opportunities}}, {{Concerns}}, and {{Solutions}}.
\newblock In \emph{2022 {{ACM Conference}} on {{Fairness Accountability}} and {{Transparency}}}, pp.\  850--863, 2022.

\bibitem[Breiman(2001)]{Breiman2001}
Leo Breiman.
\newblock Statistical {{Modeling}}: {{The Two Cultures}}.
\newblock \emph{Statistical Science}, 16\penalty0 (3):\penalty0 199--215, 2001.

\bibitem[Brigato et~al.(2026)Brigato, Morand, Str{\o}mmen, Panagiotou, Schmidt, and Mougiakakou]{Brigato2026}
Lorenzo Brigato, Rafael Morand, Knut~Joar Str{\o}mmen, Maria Panagiotou, Markus Schmidt, and Stavroula Mougiakakou.
\newblock There are no {{Champions}} in {{Long-Term Time Series Forecasting}}.
\newblock \emph{TMLR}, 2026.

\bibitem[Brouwer \& Haemers(2012)Brouwer and Haemers]{Brouwer2012}
Andries~E. Brouwer and Willem~H. Haemers.
\newblock \emph{Spectra of {{Graphs}}}.
\newblock Universitext. 2012.

\bibitem[Challu et~al.(2023)Challu, Olivares, Oreshkin, Ramirez, Canseco, and Dubrawski]{Challu2023}
Cristian Challu, Kin~G. Olivares, Boris~N. Oreshkin, Federico~Garza Ramirez, Max~Mergenthaler Canseco, and Artur Dubrawski.
\newblock {{NHITS}}: {{Neural Hierarchical Interpolation}} for {{Time Series Forecasting}}.
\newblock \emph{Proceedings of the AAAI Conference on Artificial Intelligence}, 37\penalty0 (6):\penalty0 6989--6997, 2023.

\bibitem[Chang et~al.(2017)Chang, Zhang, Han, Yu, Guo, Tan, Cui, Witbrock, {Hasegawa-Johnson}, and Huang]{Chang2017}
Shiyu Chang, Yang Zhang, Wei Han, Mo~Yu, Xiaoxiao Guo, Wei Tan, Xiaodong Cui, Michael Witbrock, Mark~A {Hasegawa-Johnson}, and Thomas~S Huang.
\newblock Dilated {{Recurrent Neural Networks}}.
\newblock In \emph{Advances in {{Neural Information Processing Systems}}}, volume~30, 2017.

\bibitem[Chen et~al.(2023)Chen, Li, Yoder, Arik, and Pfister]{Chen2023a}
Si-An Chen, Chun-Liang Li, Nate Yoder, Sercan~O. Arik, and Tomas Pfister.
\newblock {{TSMixer}}: {{An All-MLP Architecture}} for {{Time Series Forecasting}}.
\newblock \emph{TMLR}, 2023/09, 2023.

\bibitem[Chen et~al.(2026)Chen, Jin, Huang, and Feng]{Chen2026}
Xinyang Chen, Huidong Jin, Yu~Huang, and Zaiwen Feng.
\newblock {{XLinear}}: {{A Lightweight}} and {{Accurate MLP-Based Model}} for {{Long-Term Time Series Forecasting}} with {{Exogenous Inputs}}.
\newblock \emph{Proceedings of the AAAI Conference on Artificial Intelligence}, 40\penalty0 (24):\penalty0 20325--20335, 2026.

\bibitem[Cho et~al.(2014)Cho, {van Merrienboer}, Gulcehre, Bahdanau, Bougares, Schwenk, and Bengio]{Cho2014}
Kyunghyun Cho, Bart {van Merrienboer}, Caglar Gulcehre, Dzmitry Bahdanau, Fethi Bougares, Holger Schwenk, and Yoshua Bengio.
\newblock Learning {{Phrase Representations}} using {{RNN Encoder-Decoder}} for {{Statistical Machine Translation}}.
\newblock In \emph{{{EMNLP}}}, 2014.

\bibitem[Cortesi et~al.(2026)Cortesi, Iannone, Crippa, Poggio, and Beneventano]{Cortesi2026}
Federico~Vittorio Cortesi, Giuseppe Iannone, Giulia Crippa, Tomaso Poggio, and Pierfrancesco Beneventano.
\newblock Same {{Error}}, {{Different Function}}: {{The Optimizer}} as an {{Implicit Prior}} in {{Financial Time Series}}, 2026.

\bibitem[Crone(2008)]{Crone2008}
Sven~F. Crone.
\newblock 2008 {{Time Series Forecasting Competition}} for {{Computational Intelligence}} ({{NN5}}), 2008.

\bibitem[D'Amour et~al.(2022)D'Amour, Heller, Moldovan, Adlam, Alipanahi, Beutel, Chen, Deaton, Eisenstein, Hoffman, Hormozdiari, Houlsby, Hou, Jerfel, Karthikesalingam, Lucic, Ma, McLean, Mincu, Mitani, Montanari, Nado, Natarajan, Nielson, Osborne, Raman, Ramasamy, Sayres, Schrouff, Seneviratne, Sequeira, Suresh, Veitch, Vladymyrov, Wang, Webster, Yadlowsky, Yun, Zhai, and Sculley]{DAmour2022}
Alexander D'Amour, Katherine Heller, Dan Moldovan, Ben Adlam, Babak Alipanahi, Alex Beutel, Christina Chen, Jonathan Deaton, Jacob Eisenstein, Matthew~D. Hoffman, Farhad Hormozdiari, Neil Houlsby, Shaobo Hou, Ghassen Jerfel, Alan Karthikesalingam, Mario Lucic, Yian Ma, Cory McLean, Diana Mincu, Akinori Mitani, Andrea Montanari, Zachary Nado, Vivek Natarajan, Christopher Nielson, Thomas~F. Osborne, Rajiv Raman, Kim Ramasamy, Rory Sayres, Jessica Schrouff, Martin Seneviratne, Shannon Sequeira, Harini Suresh, Victor Veitch, Max Vladymyrov, Xuezhi Wang, Kellie Webster, Steve Yadlowsky, Taedong Yun, Xiaohua Zhai, and D.~Sculley.
\newblock Underspecification {{Presents Challenges}} for {{Credibility}} in {{Modern Machine Learning}}.
\newblock \emph{Journal of Machine Learning Research}, 23\penalty0 (226):\penalty0 1--61, 2022.

\bibitem[Das et~al.(2023)Das, Kong, Leach, Mathur, Sen, and Yu]{Das2023}
Abhimanyu Das, Weihao Kong, Andrew Leach, Shaan~K. Mathur, Rajat Sen, and Rose Yu.
\newblock Long-term {{Forecasting}} with {{TiDE}}: {{Time-series Dense Encoder}}.
\newblock \emph{Transactions on Machine Learning Research}, 2023.

\bibitem[Dong \& Rudin(2020)Dong and Rudin]{Dong2020}
Jiayun Dong and Cynthia Rudin.
\newblock Exploring the cloud of variable importance for the set of all good models.
\newblock \emph{Nature Machine Intelligence}, 2\penalty0 (12):\penalty0 810--824, 2020.

\bibitem[Donnelly et~al.(2023)Donnelly, Katta, Rudin, and Browne]{Donnelly2023}
Jon Donnelly, Srikar Katta, Cynthia Rudin, and Edward~P Browne.
\newblock The {{Rashomon Importance Distribution}}: {{Getting RID}} of {{Unstable}}, {{Single Model-based Variable Importance}}.
\newblock In \emph{Advances in {{Neural Information Processing Systems}} ({{NeurIPS}} 2023)}, 2023.

\bibitem[Du et~al.(2025)Du, Ngo, and Wu]{Du2025}
Ally~Yalei Du, Dung~Daniel Ngo, and Zhiwei~Steven Wu.
\newblock Reconciling {{Model Multiplicity}} for {{Downstream Decision Making}}.
\newblock In \emph{International {{Conference}} on {{Learning Representations}} ({{ICLR}} 2025)}, 2025.

\bibitem[EIA()]{Gasoline}
U.S. EIA.
\newblock Weekly {{U}}.{{S}}. {{Product Supplied}} of {{Finished Motor Gasoline}} ({{Thousand Barrels}} per {{Day}}).

\bibitem[Fisher et~al.(2019)Fisher, Rudin, and Dominici]{Fisher2019}
Aaron Fisher, Cynthia Rudin, and Francesca Dominici.
\newblock All {{Models}} are {{Wrong}}, but {{Many}} are {{Useful}}: {{Learning}} a {{Variable}}'s {{Importance}} by {{Studying}} an {{Entire Class}} of {{Prediction Models Simultaneously}}.
\newblock \emph{Journal of Machine Learning Research}, 2019.

\bibitem[Fosten \& Gutknecht(2021)Fosten and Gutknecht]{Fosten2021}
Jack Fosten and Daniel Gutknecht.
\newblock Horizon confidence sets.
\newblock \emph{Empirical Economics}, 61\penalty0 (2):\penalty0 667--692, 2021.

\bibitem[Ganesh et~al.(2025)Ganesh, Taik, and Farnadi]{Ganesh2025}
Prakhar Ganesh, Afaf Taik, and Golnoosh Farnadi.
\newblock Systemizing {{Multiplicity}}: {{The Curious Case}} of {{Arbitrariness}} in {{Machine Learning}}.
\newblock In \emph{Proceedings of the {{AAAI}}/{{ACM Conference}} on {{AI}}, {{Ethics}}, and {{Society}} ({{AIES}} 2025)}, 2025.

\bibitem[Ganesh et~al.(2026)Ganesh, Shokri, and Farnadi]{Ganesh2026}
Prakhar Ganesh, Reza Shokri, and Golnoosh Farnadi.
\newblock Rethinking {{Hallucinations}}: {{Correctness}}, {{Consistency}}, and {{Prompt Multiplicity}}.
\newblock In \emph{Proceedings of the 19th {{Conference}} of the {{European Chapter}} of the {{Association}} for {{Computational Linguistics}} ({{Volume}} 1: {{Long Papers}})}, pp.\  6959--6978, 2026.

\bibitem[Godahewa et~al.(2021)Godahewa, Bergmeir, Webb, Hyndman, and {Montero-Manso}]{Godahewa2021}
Rakshitha Godahewa, Christoph Bergmeir, Geoffrey~I. Webb, Rob~J. Hyndman, and Pablo {Montero-Manso}.
\newblock Monash {{Time Series Forecasting Archive}}.
\newblock In \emph{{{NeurIPS}}}, 2021.

\bibitem[Godahewa et~al.(2025)Godahewa, Bergmeir, Erkin~Baz, Zhu, Song, Garc{\'i}a, and Benavides]{Godahewa2025}
Rakshitha Godahewa, Christoph Bergmeir, Zeynep Erkin~Baz, Chengjun Zhu, Zhangdi Song, Salvador Garc{\'i}a, and Dario Benavides.
\newblock On forecast stability.
\newblock \emph{International Journal of Forecasting}, 41\penalty0 (4):\penalty0 1539--1558, 2025.

\bibitem[Han et~al.(2024)Han, Chen, Ye, and Zhan]{Han2024}
Lu~Han, Xu-Yang Chen, Han-Jia Ye, and De-Chuan Zhan.
\newblock {{SOFTS}}: {{Efficient Multivariate Time Series Forecasting}} with {{Series-Core Fusion}}.
\newblock In \emph{{{NeurIPS}}}, 2024.

\bibitem[Hansen et~al.(2011)Hansen, Lunde, and Nason]{Hansen2011}
Peter~R. Hansen, Asger Lunde, and James~M. Nason.
\newblock The {{Model Confidence Set}}.
\newblock \emph{Econometrica}, 79\penalty0 (2):\penalty0 453--497, 2011.

\bibitem[Heljakka et~al.(2023)Heljakka, Trapp, Kannala, and Solin]{Heljakka2023}
Ari Heljakka, Martin Trapp, Juho Kannala, and Arno Solin.
\newblock Disentangling {{Model Multiplicity}} in {{Deep Learning}}, 2023.

\bibitem[Herzen et~al.(2022)Herzen, L{\"a}ssig, Piazzetta, Neuer, Tafti, Raille, Van~Pottelbergh, Pasieka, Skrodzki, Huguenin, Dumonal, Ko{\'s}cisz, Bader, Gusset, Benheddi, Williamson, Kosinski, Petrik, and Grosch]{Herzen2022}
Julien Herzen, Francesco L{\"a}ssig, Samuele~Giuliano Piazzetta, Thomas Neuer, L{\'e}o Tafti, Guillaume Raille, Tomas Van~Pottelbergh, Marek Pasieka, Andrzej Skrodzki, Nicolas Huguenin, Maxime Dumonal, Jan Ko{\'s}cisz, Dennis Bader, Fr{\'e}d{\'e}rick Gusset, Mounir Benheddi, Camila Williamson, Michal Kosinski, Matej Petrik, and Ga{\"e}l Grosch.
\newblock Darts: {{User-Friendly Modern Machine Learning}} for {{Time Series}}.
\newblock \emph{Journal of Machine Learning Research}, 2022.

\bibitem[Hibbeln et~al.(2026)Hibbeln, Kopp, and Urban]{Hibbeln2026}
Martin~T. Hibbeln, Raphael~M. Kopp, and Noah Urban.
\newblock Predictive multiplicity, procedural multiplicity, and heterogeneous machine learning ensembles in recovery rate forecasting.
\newblock \emph{Journal of Financial Stability}, 83:\penalty0 101510, 2026.

\bibitem[Hornik(1991)]{Hornik1991}
Kurt Hornik.
\newblock Approximation capabilities of multilayer feedforward networks.
\newblock \emph{Neural Networks}, 4\penalty0 (2):\penalty0 251--257, 1991.

\bibitem[Hsu et~al.(2026)Hsu, Chen, Zhong, and Semenova]{Hsu2026}
Ethan Hsu, Harry Chen, Chudi Zhong, and Lesia Semenova.
\newblock The {{Double-Edged Nature}} of the {{Rashomon Set}} for {{Trustworthy Machine Learning}}.
\newblock In \emph{{{ICML}}}, 2026.

\bibitem[Hyndman \& Koehler(2006)Hyndman and Koehler]{Hyndman2006}
Rob~J. Hyndman and Anne~B. Koehler.
\newblock Another look at measures of forecast accuracy.
\newblock \emph{International Journal of Forecasting}, 22\penalty0 (4):\penalty0 679--688, 2006.

\bibitem[Jhana(2019)]{Jhana2019}
Nicolas Jhana.
\newblock Hourly energy demand generation and weather, 2019.

\bibitem[Kale et~al.(2026)Kale, Vishwakarma, Diamond, Hedayatipour, and Rezaei]{Kale2026}
Gauri Kale, Rahul Vishwakarma, Holly Diamond, Ava Hedayatipour, and Amin Rezaei.
\newblock Horizon-{{Constrained Rashomon Sets}} for {{Chaotic Forecasting}}.
\newblock \emph{AIP Advances}, 16\penalty0 (4):\penalty0 045208, 2026.

\bibitem[Kolle(2025)]{Kolle2025}
Olaf Kolle.
\newblock Documentation of the {{Weather Station}} on {{Top}} of the {{Roof}} of the {{Institute Building}} of the {{Max-Planck-Institute}} for {{Biogeochemistry}}, 2025.

\bibitem[Laberge et~al.(2023)Laberge, Pequignot, Mathieu, Khomh, and Marchand]{Laberge2023}
Gabriel Laberge, Yann Pequignot, Alexandre Mathieu, Foutse Khomh, and Mario Marchand.
\newblock Partial {{Order}} in {{Chaos}}: {{Consensus}} on {{Feature Attributions}} in the {{Rashomon Set}}.
\newblock \emph{Journal of Machine Learning Research}, 24\penalty0 (364):\penalty0 1--50, 2023.

\bibitem[Lai et~al.(2018)Lai, Chang, Yang, and Liu]{Lai2018}
Guokun Lai, Wei-Cheng Chang, Yiming Yang, and Hanxiao Liu.
\newblock Modeling {{Long-}} and {{Short-Term Temporal Patterns}} with {{Deep Neural Networks}}.
\newblock In \emph{The 41st {{International ACM SIGIR Conference}} on {{Research}} \& {{Development}} in {{Information Retrieval}}}, {{SIGIR}} '18, pp.\  95--104, 2018.

\bibitem[Lim \& Zohren(2021)Lim and Zohren]{Lim2021}
Bryan Lim and Stefan Zohren.
\newblock Time {{Series Forecasting With Deep Learning}}: {{A Survey}}.
\newblock \emph{Philosophical Transactions of the Royal Society A: Mathematical, Physical and Engineering Sciences}, 379\penalty0 (2194):\penalty0 20200209, 2021.

\bibitem[Lim et~al.(2021)Lim, Ar{\i}k, Loeff, and Pfister]{Lim2021a}
Bryan Lim, Sercan~{\"O}. Ar{\i}k, Nicolas Loeff, and Tomas Pfister.
\newblock Temporal {{Fusion Transformers}} for interpretable multi-horizon time series forecasting.
\newblock \emph{International Journal of Forecasting}, 37\penalty0 (4):\penalty0 1748--1764, 2021.

\bibitem[Liu et~al.(2024)Liu, Hu, Zhang, Wu, Wang, Ma, and Long]{Liu2024a}
Yong Liu, Tengge Hu, Haoran Zhang, Haixu Wu, Shiyu Wang, Lintao Ma, and Mingsheng Long.
\newblock {{iTransformer}}: {{Inverted Transformers Are Effective}} for {{Time Series Forecasting}}.
\newblock In \emph{{{ICLR}}}, 2024.

\bibitem[Madhyastha \& Jain(2019)Madhyastha and Jain]{Madhyastha2019}
Pranava Madhyastha and Rishabh Jain.
\newblock On {{Model Stability}} as a {{Function}} of {{Random Seed}}.
\newblock In \emph{Proceedings of the 23rd {{Conference}} on {{Computational Natural Language Learning}} ({{CoNLL}})}, 2019.

\bibitem[Makridakis et~al.(2022)Makridakis, Spiliotis, and Assimakopoulos]{Makridakis2022}
Spyros Makridakis, Evangelos Spiliotis, and Vassilios Assimakopoulos.
\newblock M5 accuracy competition: {{Results}}, findings, and conclusions.
\newblock \emph{International Journal of Forecasting}, 38\penalty0 (4):\penalty0 1346--1364, 2022.

\bibitem[Marx et~al.(2020)Marx, Calmon, and Ustun]{Marx2020a}
Charles Marx, Flavio Calmon, and Berk Ustun.
\newblock Predictive {{Multiplicity}} in {{Classification}}.
\newblock In \emph{Proceedings of the 37th {{International Conference}} on {{Machine Learning}}}, pp.\  6765--6774, 2020.

\bibitem[McCracken \& Ng(2016)McCracken and Ng]{McCracken2016}
Michael~W. McCracken and Serena Ng.
\newblock {{FRED-MD}}: {{A Monthly Database}} for {{Macroeconomic Research}}.
\newblock \emph{Journal of Business \& Economic Statistics}, 34\penalty0 (4):\penalty0 574--589, 2016.

\bibitem[Monschang et~al.(2026)Monschang, Trede, and Wilfling]{Monschang2026}
Verena Monschang, Mark Trede, and Bernd Wilfling.
\newblock Multi-{{Horizon Uniform Superior Predictive Ability Revisited}}.
\newblock \emph{Journal of Business \& Economic Statistics}, 44\penalty0 (3):\penalty0 897--903, 2026.

\bibitem[Nie et~al.(2022)Nie, Nguyen, Sinthong, and Kalagnanam]{Nie2022}
Yuqi Nie, Nam~H. Nguyen, Phanwadee Sinthong, and Jayant Kalagnanam.
\newblock A {{Time Series}} is {{Worth}} 64 {{Words}}: {{Long-term Forecasting}} with {{Transformers}}.
\newblock In \emph{The {{Eleventh International Conference}} on {{Learning Representations}}}, 2022.

\bibitem[Olivares et~al.(2022)Olivares, Challu, Garza, Canseco, and Dubrawski]{Olivares2022}
Kin~G. Olivares, Cristian Challu, Azul Garza, Max~Mergenthaler Canseco, and Artur Dubrawski.
\newblock {{NeuralForecast}}: {{User}} friendly state-of-the-art neural forecasting models, 2022.

\bibitem[Olivares et~al.(2023)Olivares, Challu, Marcjasz, Weron, and Dubrawski]{Olivares2023}
Kin~G. Olivares, Cristian Challu, Grzegorz Marcjasz, Rafa{\l} Weron, and Artur Dubrawski.
\newblock Neural basis expansion analysis with exogenous variables: {{Forecasting}} electricity prices with {{NBEATSx}}.
\newblock \emph{International Journal of Forecasting}, 39\penalty0 (2):\penalty0 884--900, 2023.

\bibitem[Parikh(2026)]{Parikh2026}
Harsh Parikh.
\newblock Why are there many equally good models? {{An Anatomy}} of the {{Rashomon Effect}}, 2026.

\bibitem[Quaedvlieg(2021)]{Quaedvlieg2021}
Rogier Quaedvlieg.
\newblock Multi-{{Horizon Forecast Comparison}}.
\newblock \emph{Journal of Business \& Economic Statistics}, 39\penalty0 (1):\penalty0 40--53, 2021.

\bibitem[Rudin et~al.(2024)Rudin, Zhong, Semenova, Seltzer, Parr, Liu, Katta, Donnelly, Chen, and Boner]{Rudin2024}
Cynthia Rudin, Chudi Zhong, Lesia Semenova, Margo Seltzer, Ronald Parr, Jiachang Liu, Srikar Katta, Jon Donnelly, Harry Chen, and Zachery Boner.
\newblock Amazing {{Things Come From Having Many Good Models}}.
\newblock In \emph{Proceedings of the 41st {{International Conference}} on {{Machine Learning}} ({{ICML}} 2024)}, 2024.

\bibitem[Sclar et~al.(2023)Sclar, Choi, Tsvetkov, and Suhr]{Sclar2023}
Melanie Sclar, Yejin Choi, Yulia Tsvetkov, and Alane Suhr.
\newblock Quantifying {{Language Models}}' {{Sensitivity}} to {{Spurious Features}} in {{Prompt Design}} or: {{How I}} learned to start worrying about prompt formatting.
\newblock In \emph{The {{Twelfth International Conference}} on {{Learning Representations}}}, 2023.

\bibitem[Semenova et~al.(2022)Semenova, Rudin, and Parr]{Semenova2022}
Lesia Semenova, Cynthia Rudin, and Ronald Parr.
\newblock On the {{Existence}} of {{Simpler Machine Learning Models}}.
\newblock In \emph{2022 {{ACM Conference}} on {{Fairness}}, {{Accountability}}, and {{Transparency}}}, pp.\  1827--1858, 2022.

\bibitem[Sprangers et~al.(2023)Sprangers, Schelter, and {de Rijke}]{Sprangers2023}
Olivier Sprangers, Sebastian Schelter, and Maarten {de Rijke}.
\newblock Parameter-efficient deep probabilistic forecasting.
\newblock \emph{International Journal of Forecasting}, 39\penalty0 (1):\penalty0 332--345, 2023.

\bibitem[{Stadt-Z{\"u}rich}()]{SZurich}
{Stadt-Z{\"u}rich}.
\newblock Viertelstundenwerte des {{Stromverbrauchs}} in den {{Netzebenen}} 5 und 7 in der {{Stadt Z\"urich}}.

\bibitem[Teney et~al.(2022)Teney, Peyrard, and Abbasnejad]{Teney2022}
Damien Teney, Maxime Peyrard, and Ehsan Abbasnejad.
\newblock Predicting {{Is Not Understanding}}: {{Recognizing}} and {{Addressing Underspecification}} in {{Machine Learning}}.
\newblock In \emph{Computer {{Vision}} -- {{ECCV}} 2022}, volume 13683, pp.\  458--476. 2022.

\bibitem[Tonekaboni et~al.(2020)Tonekaboni, Joshi, Campbell, Duvenaud, and Goldenberg]{Tonekaboni2020}
Sana Tonekaboni, Shalmali Joshi, Kieran Campbell, David~K Duvenaud, and Anna Goldenberg.
\newblock What went wrong and when? {{Instance-wise}} feature importance for time-series black-box models.
\newblock In \emph{Advances in {{Neural Information Processing Systems}}}, volume~33, pp.\  799--809, 2020.

\bibitem[Toner \& Darlow(2024)Toner and Darlow]{Toner2024}
William Toner and Luke~Nicholas Darlow.
\newblock An {{Analysis}} of {{Linear Time Series Forecasting Models}}.
\newblock In \emph{Proceedings of the 41st {{International Conference}} on {{Machine Learning}}}, pp.\  48404--48427, 2024.

\bibitem[Van~Belle et~al.(2023)Van~Belle, Crevits, and Verbeke]{VanBelle2023}
Jente Van~Belle, Ruben Crevits, and Wouter Verbeke.
\newblock Improving forecast stability using deep learning.
\newblock \emph{International Journal of Forecasting}, 39\penalty0 (3):\penalty0 1333--1350, 2023.

\bibitem[Wang et~al.(2024{\natexlab{a}})Wang, Wu, Shi, Hu, Luo, Ma, Zhang, and Zhou]{Wang2024a}
Shiyu Wang, Haixu Wu, Xiaoming Shi, Tengge Hu, Huakun Luo, Lintao Ma, James~Y Zhang, and Jun Zhou.
\newblock {{TimeMixer}}: {{Decomposable Multiscale Mixing}} for {{Time Series Forecasting}}.
\newblock In \emph{{{ICLR}}}, 2024{\natexlab{a}}.

\bibitem[Wang et~al.(2024{\natexlab{b}})Wang, Wu, Dong, Qin, Zhang, Liu, Qiu, Wang, and Long]{Wang2024c}
Yuxuan Wang, Haixu Wu, Jiaxiang Dong, Guo Qin, Haoran Zhang, Yong Liu, Yunzhong Qiu, Jianmin Wang, and Mingsheng Long.
\newblock {{TimeXer}}: {{Empowering Transformers}} for {{Time Series Forecasting}} with {{Exogenous Variables}}.
\newblock In \emph{{{NeurIPS}}}, 2024{\natexlab{b}}.

\bibitem[{Watson-Daniels} et~al.(2023){Watson-Daniels}, Parkes, and Ustun]{Watson-Daniels2023}
Jamelle {Watson-Daniels}, David~C. Parkes, and Berk Ustun.
\newblock Predictive {{Multiplicity}} in {{Probabilistic Classification}}.
\newblock In \emph{Proceedings of the {{AAAI Conference}} on {{Artificial Intelligence}}}, 2023.

\bibitem[Woo et~al.(2024)Woo, Liu, Kumar, Xiong, Savarese, and Sahoo]{Woo2024}
Gerald Woo, Chenghao Liu, Akshat Kumar, Caiming Xiong, Silvio Savarese, and Doyen Sahoo.
\newblock Unified {{Training}} of {{Universal Time Series Forecasting Transformers}}.
\newblock In \emph{{{ICML}}}, 2024.

\bibitem[Wu et~al.(2021)Wu, Xu, Wang, and Long]{Wu2021b}
Haixu Wu, Jiehui Xu, Jianmin Wang, and Mingsheng Long.
\newblock Autoformer: {{Decomposition Transformers}} with {{Auto-Correlation}} for {{Long-Term Series Forecasting}}.
\newblock In \emph{Advances in {{Neural Information Processing Systems}}}, 2021.

\bibitem[Wu et~al.(2022)Wu, Hu, Liu, Zhou, Wang, and Long]{Wu2022a}
Haixu Wu, Tengge Hu, Yong Liu, Hang Zhou, Jianmin Wang, and Mingsheng Long.
\newblock {{TimesNet}}: {{Temporal 2D-Variation Modeling}} for {{General Time Series Analysis}}.
\newblock In \emph{The {{Eleventh International Conference}} on {{Learning Representations}}}, 2022.

\bibitem[Zeng et~al.(2023)Zeng, Chen, Zhang, and Xu]{Zeng2023}
Ailing Zeng, Muxi Chen, Lei Zhang, and Qiang Xu.
\newblock Are transformers effective for time series forecasting?
\newblock In \emph{Proceedings of the {{Thirty-Seventh AAAI Conference}} on {{Artificial Intelligence}} and {{Thirty-Fifth Conference}} on {{Innovative Applications}} of {{Artificial Intelligence}} and {{Thirteenth Symposium}} on {{Educational Advances}} in {{Artificial Intelligence}}}, volume~37 of \emph{{{AAAI}}'23/{{IAAI}}'23/{{EAAI}}'23}, pp.\  11121--11128, 2023.

\bibitem[Zhang et~al.(2017)Zhang, Bengio, Hardt, Recht, and Vinyals]{Zhang2017}
Chiyuan Zhang, Samy Bengio, Moritz Hardt, Benjamin Recht, and Oriol Vinyals.
\newblock Understanding deep learning requires rethinking generalization.
\newblock In \emph{International {{Conference}} on {{Learning Representations}}}, 2017.

\bibitem[Zhou et~al.(2021)Zhou, Zhang, Peng, Zhang, Li, Xiong, and Zhang]{Zhou2021b}
Haoyi Zhou, Shanghang Zhang, Jieqi Peng, Shuai Zhang, Jianxin Li, Hui Xiong, and Wancai Zhang.
\newblock Informer: {{Beyond Efficient Transformer}} for {{Long Sequence Time-Series Forecasting}}.
\newblock \emph{Proceedings of the AAAI Conference on Artificial Intelligence}, 35\penalty0 (12):\penalty0 11106--11115, 2021.

\end{thebibliography}
\bibliographystyle{iclr2027_conference}

\clearpage

\clearpage

\appendix
\crefalias{section}{appendix}
\crefalias{subsection}{subappendix}
\crefalias{subsubsection}{subsubappendix}
\crefname{appendix}{Appendix}{Appendices}
\crefname{subappendix}{Appendix}{Appendices}
\crefname{subsubappendix}{Appendix}{Appendices}

\section{Experimental Setup}
\label{app:experimental-setup}

We provide in this appendix the details of our experimental setup.

\subsection{Datasets}
\label{app:datasets}

\begin{table}[h]
    \centering
    \caption{Provenance of the datasets used in our experiments.}
    \label{tab:datasets-provenance}
    \footnotesize
    \setlength{\tabcolsep}{3pt}
    \resizebox{\textwidth}{!}{\begin{tabular}{lllll}
\toprule
Dataset & Domain & Source & License & Distribution Channel \\
\midrule
UK ATM Withdrawals - Daily & finance & \citet{Crone2008,Godahewa2021} & CC BY 4.0 & GluonTS~\citep{Alexandrov2020} \\
UK ATM Withdrawals - Weekly & finance & \citet{Crone2008,Godahewa2021} & CC BY 4.0 & GluonTS~\citep{Alexandrov2020} \\
Bitcoin Price & finance & \citet{Godahewa2021} & CC BY 4.0 & GIFT-Eval~\citep{Aksu2024} \\
Australian Electricity Demand & energy & \citet{AEMO,Godahewa2021} & CC BY 4.0 & GluonTS~\citep{Alexandrov2020} \\
Zurich Electricity Consumption & energy & \citet{SZurich} & CC0 Public Domain & Darts~\citep{Herzen2022} \\
Hourly Energy Pricing - Spain & energy & \citet{Jhana2019} & CC0 Public Domain & Darts~\citep{Herzen2022} \\
Electricity Transformer Temperature 1 & industry & \citet{Zhou2021b} & CC BY-ND 4.0 & Darts~\citep{Herzen2022} \\
Currency Exchange Rates (USD) & finance & \citet{Lai2018} & -- & GluonTS~\citep{Alexandrov2020} \\
FRED-MD (US Macroeconomic Data) & economy & \citet{McCracken2016} & CC0 Public Domain & GluonTS~\citep{Alexandrov2020} \\
US Finished Motor Gasoline Supplied & energy & \citet{Gasoline} & CC0 Public Domain & Darts~\citep{Herzen2022} \\
Jena Weather & nature & \citet{Kolle2025,Wu2021b} & CC BY 4.0 & Darts~\citep{Herzen2022} \\
\bottomrule
\end{tabular}
}
\end{table}

\begin{table}[h]
    \centering
    \caption{Properties of the datasets used in our experiments.}
    \label{tab:datasets-properties}
    \footnotesize
    \setlength{\tabcolsep}{3pt}
    \resizebox{\textwidth}{!}{\begin{tabular}{lllllllll}
\toprule
Dataset & Frequency & Variables & Targets & Timestamps ($T$) & Input Size$^{\dagger}$ & Horizon & Seasonality & Poolable$^{*}$ \\
\midrule
UK ATM Withdrawals - Daily & D & 108 & 108 & 791 & 28 (4W) / 56 (8W) & 7 (1W) & 7 (1W) & Yes \\
UK ATM Withdrawals - Weekly & W & 108 & 108 & 113 & 16 (16W) / 26 (26W) & 4 (4W) & 52 (52W) & Only \\
Bitcoin Price & D & 18 & 1 & 2629 & 28 (4W) / 56 (8W) & 7 (1W) & -- & No \\
Australian Electricity Demand & 30min & 5 & 5 & 230736 & 336 (1W) & 48 (1D) & 48 (1D), 336 (1W), 17,520 (52W) & Yes \\
Zurich Electricity Consumption & 15min & 10 & 2 & 268705 & 672 (1W) & 96 (1D) & 96 (1D), 672 (1W), 35,040 (52W) & No \\
Hourly Energy Pricing - Spain & h & 20 & 1 & 35064 & 168 (1W) & 24 (1D) & 24 (1D), 168 (1W), 8,760 (52W) & No \\
Electricity Transformer Temperature 1 & h & 7 & 1 & 17420 & 168 (1W) / 192 (1.1W) & 24 (1D) & 24 (1D), 168 (1W) & No \\
Currency Exchange Rates (USD) & B & 8 & 8 & 6221 & 20 (4W) / 40 (8W) & 5 (1W) & -- & Yes \\
FRED-MD (US Macroeconomic Data) & ME & 107 & 3 & 728 & 12 (1Y) / 24 (2Y) & 3 (3M) & 12 (1Y) & No \\
US Finished Motor Gasoline Supplied & W-FRI & 1 & 1 & 1578 & 16 (16W) / 32 (32W) & 4 (4W) & 52 (52W) & -- \\
Jena Weather & 10min & 21 & 1 & 52704 & 1,008 (1W) & 144 (1D) & 144 (1D) & No \\
\bottomrule
\end{tabular}
}
\par\smallskip{\footnotesize\noindent $^{*}$Which representations a dataset's targets admit: \emph{No}, the targets are heterogeneous, so pooling is not meaningful; \emph{Yes}, they are homogeneous, so a pooled model is available alongside the per-target and joint ones; \emph{Only}, pooling is the one feasible representation, as for mixed panels of many unrelated series. A dash (--) marks a genuinely univariate dataset: a single series, with nothing to pool. $^{\dagger}$Every input size the sweep spans, in sweep order, where a dataset is swept at more than one; the runs of one dataset differ in nothing else.}

\end{table}

We use 11 publicly available forecasting datasets spanning finance, economics, energy, industry, and nature domains.
\Cref{tab:datasets-provenance} reports their provenance; \Cref{tab:datasets-properties} reports their properties; \Cref{tab:datasets-splits} reports the train/validation/test splits.
\begin{table}[h]
    \centering
    \caption{Train/validation/test splits of the datasets used in our experiments.}
    \label{tab:datasets-splits}
    \footnotesize
    \setlength{\tabcolsep}{3pt}
    \resizebox{\textwidth}{!}{\begin{tabular}{lllllllllll}
\toprule
Dataset & \multicolumn{2}{l}{Length} & \multicolumn{3}{l}{Samples} & \multicolumn{3}{l}{Samples$^{*}$} & \multicolumn{2}{l}{Forecasts} \\
\cmidrule(lr){2-3} \cmidrule(lr){4-6} \cmidrule(lr){7-9} \cmidrule(lr){10-11}
 & \textsc{Train} & \textsc{Val \& Test} & \textsc{Train} & \textsc{Train}$^{\dagger}$ & \textsc{Val \& Test} & \textsc{Train} & \textsc{Train}$^{\dagger}$ & \textsc{Val \& Test} & \textsc{Train} & \textsc{Val \& Test} \\
\midrule
UK ATM Withdrawals - Daily & 553 & 119 & 526 / 498 & 531 / 503 & 113 & 56808 / 53784 & 57348 / 54324 & 12204 & 56808 / 53784 & 12204 \\
UK ATM Withdrawals - Weekly & 79 & 17 & -- & -- & -- & 6912 / 5832 & 7128 / 6048 & 1512 & 6912 / 5832 & 1512 \\
Bitcoin Price & 1841 & 394 & 1814 / 1786 & 1819 / 1791 & 388 & -- & -- & -- & 1814 / 1786 & 388 \\
Australian Electricity Demand & 161516 & 34610 & 161181 & 161227 & 34563 & 805905 & 806135 & 172815 & 805905 & 172815 \\
Zurich Electricity Consumption & 188093 & 40306 & 187422 & 187516 & 40211 & -- & -- & -- & 374844 & 80422 \\
Hourly Energy Pricing - Spain & 24544 & 5260 & 24377 & 24399 & 5237 & -- & -- & -- & 24377 & 5237 \\
Electricity Transformer Temperature 1 & 12194 & 2613 & 12027 / 12003 & 12049 / 12025 & 2590 & -- & -- & -- & 12027 / 12003 & 2590 \\
Currency Exchange Rates (USD) & 4355 & 933 & 4336 / 4316 & 4339 / 4319 & 929 & 34688 / 34528 & 34712 / 34552 & 7432 & 34688 / 34528 & 7432 \\
FRED-MD (US Macroeconomic Data) & 510 & 109 & 499 / 487 & 500 / 488 & 107 & -- & -- & -- & 1497 / 1461 & 321 \\
US Finished Motor Gasoline Supplied & 1104 & 237 & 1089 / 1073 & 1091 / 1075 & 234 & -- & -- & -- & 1089 / 1073 & 234 \\
Jena Weather & 36892 & 7906 & 35885 & 36027 & 7763 & -- & -- & -- & 35885 & 7763 \\
\bottomrule
\end{tabular}
}
\par\smallskip{\footnotesize\noindent $^{*}$Samples of the pooled representation, windows $\times$ targets; the unmarked samples are those of the per-target and joint representations, one per rolling-origin window. A count is empty where its representation is infeasible: the pooled one where the dataset is not poolable, the unmarked one where pooling is its one feasible representation. $^{\dagger}$Training samples of the recurrent architectures, which unfold training windows of the input size plus 2 steps rather than the input size plus the horizon; every other count is shared by all architectures. A cell listing several counts holds one per input size the sweep spans, in the order of the properties table's input sizes.}

\end{table}

\paragraph{Pre-processing.}
Placeholder codes (sentinel values, fabricated calendar days such as the 31st of a 30-day month) are converted to missing values before any other step.
Three datasets whose series are strictly positive and span orders of magnitude (the currency exchange rates, the Bitcoin indicators, and the FRED-MD macroeconomic levels) are modeled in natural-log space, as is standard for those series in such domains; the nine FRED-MD series that take non-positive values (interest-rate spreads and non-borrowed reserves) are kept in raw levels. 

\paragraph{Missing Values Handling.} Missing values are handled without future leakage: leading gaps are trimmed, interior gaps are forward-filled, and an availability mask records which positions were originally missing.
The mask follows the data through the whole pipeline: imputed target positions do not contribute to the training loss, to the seasonal-naive scale of MASE and RMSSE, or to any evaluation metric.

\paragraph{Standardization.} Every architecture standardizes each input window by its own mean and standard deviation and inverts the transformation on its outputs, so all models see the same raw series. Refer to the model training section for more details on normalization.

\paragraph{Exogenous Variables.} Non-target variables are offered to models as historical exogenous inputs whenever the architecture allows it (see \cref{app:models}).

\paragraph{Seasonality.} When a dataset declares one or more seasonal periods $s$, 4 Fourier terms of those periods are passed to the model as future-known exogenous features whenever the architecture allows it:
$$
\sin\!\left(\frac{2\pi t}{s}\right),\quad
\cos\!\left(\frac{2\pi t}{s}\right),\quad
\sin\!\left(\frac{4\pi t}{s}\right),\quad
\cos\!\left(\frac{4\pi t}{s}\right)
$$

\paragraph{Dataset Splits.} Each dataset is split chronologically into a training, a validation, and a test portion, the validation portion immediately preceding the test tail and both holding out, by default, 15\% of the calendar (see \cref{tab:datasets-splits} for more details). 
Within each held-out portion the forecasts are rolling-origin forecasts at a dense stride of one timestamp, every origin producing one $H$-step trajectory, so that the multiplicity statistics of \cref{sec:hierarchy} are computed over every origin of the test portion.

\subsection{Model Training}
\label{app:models}

\paragraph{Hardware Setup.} To train and evaluate our models we used a Ray cluster of machines with one \emph{NVIDIA RTX PRO 6000 Blackwell Server Edition} GPU, an $8$-core CPU running at $3.2$ GHz, and $128$ GB of RAM. To run our experiments on the trained models we instead used a CPU-only machine with $64$ cores, running at $3.7$ GHz, and $256$ GB of RAM.

\paragraph{Model Architectures.}
To generate the empirical Rashomon set we train models spanning the main families of neural forecasting architectures: Transformer-based, MLP/mixer-based, convolutional, recurrent, and near-linear models.
\Cref{tab:architectures} lists the 19 architectures used in our experiments together with the references of the original proposals.
We use NeuralForecast \cite{Olivares2022} to train all such model architectures.

\begin{table}[h]
    \centering
    \caption{Neural forecasting architectures used to generate the empirical Rashomon set, grouped by model family.}
    \label{tab:architectures}
    \begin{tabular}{lll}
        \toprule
        Family & Model & Reference \\
        \midrule
        \multirow{4}{*}{Transformer}
        & PatchTST & \citet{Nie2022} \\
        & iTransformer & \citet{Liu2024a} \\
        & TimeXer & \citet{Wang2024c} \\
        & TFT & \citet{Lim2021a} \\
        \midrule
        \multirow{7}{*}{MLP / Mixer}
        & NHITS & \citet{Challu2023} \\ 
        & NBEATSx & \citet{Olivares2023} \\
        & TiDE & \citet{Das2023} \\
        & TimeMixer & \citet{Wang2024a} \\
        & TSMixer & \citet{Chen2023a} \\
        & TSMixerx & \citet{Chen2023a} \\
        & SOFTS & \citet{Han2024} \\
        \midrule
        \multirow{3}{*}{Convolutional}
        & TimesNet & \citet{Wu2022a} \\
        & BiTCN & \citet{Sprangers2023} \\
        & TCN & \citet{Bai2018} \\
        \midrule
        \multirow{3}{*}{Recurrent}
        & xLSTM & \citet{Beck2024} \\
        & GRU & \citet{Cho2014} \\
        & DilatedRNN & \citet{Chang2017} \\
        \midrule
        \multirow{2}{*}{Near-linear}
        & DLinear & \citet{Zeng2023} \\
        & xLinear & \citet{Chen2026} \\
        \bottomrule
    \end{tabular}
\end{table}
 
\paragraph{\emph{Fixed} Hyper-Parameters.}
The following hyper-parameters are fixed on all trained models:
\begin{itemize}
    \item \textbf{Loss:} MSE (mean squared error).
    \item \textbf{Optimizer:} Adam with no decay schedule.
    \item \textbf{Regularization:} every dropout rate set to zero.
    \item \textbf{Input scaling:} per-window standard scaling of the inputs.
    \item \textbf{Precision:} train-time operations in \texttt{bfloat16} mixed precision whenever possible (parameters and optimizer state stay in \texttt{fp32}); every evaluation---for every model and every checkpoint---is run in \texttt{fp32} so that no precision confound enters the comparison of models.
\end{itemize}

\paragraph{Hyper-Parameters \textit{Swept} to Generate the Full Set of Models.}
For every dataset the full set of trained models is the Cartesian product of the following hyper-parameters:
\begin{itemize}
    \item \emph{Model Architectures:} the 19 architectures of \cref{tab:architectures}
    \item \emph{Random Seeds:} ten random seeds ($0$ to $9$).
    \item \emph{Learning Rate:} $10^{-3}$ and $3 \cdot 10^{-4}$
    \item \emph{Batch Size:} $32$ and $64$
    \item \emph{Input Size:} four and eight times the forecast horizon, where the dataset admits both. \cref{tab:datasets-properties} reports the corresponding input sizes, while \cref{tab:datasets-splits} lists the training samples each input size yields.
    \item \emph{Target:} \texttt{Univariate} and \texttt{Multivariate} $^{* \dagger}$
    \item \emph{Exogenous Variables:} \texttt{Disabled} and \texttt{Enabled} $^*$
    \item \emph{Pooled Targets:} \texttt{Unpooled} and \texttt{Pooled} targets $^*$ (as supported by each dataset, see \cref{tab:datasets-properties})
\end{itemize}
$^*$ Note that the combinations an architecture cannot realize (e.g.\ a purely univariate architecture in the multivariate representation) are pruned per (architecture, dataset) pair.

$\dagger$ Also note that when multi-target datasets are trained in univariate mode the training fans out into one model per target, which share one run's compute budget as described below. 

Finally, note that dataset–architecture combinations that exceeded the RAM or VRAM capacity of the hardware described above were excluded.

All other hyper-parameters that are not mentioned in these two paragraphs are left at their default value in NeuralForecast {3.1.9} \cite{Olivares2022}.

\paragraph{Randomness Control.} The random seed drives every source of randomness in a run: parameter initialization, the sampling of training origins, and any stochastic kernel. Deterministic kernels are requested wherever PyTorch offers them.

\paragraph{Compute Budget.}
The step budget of every (architecture, dataset, representation) cell is determined by the following rules.
\begin{itemize}
    \item \textbf{Wall-time budget:} a short profiling run measures the isolated training throughput (steps per second) on a dedicated \emph{NVIDIA RTX PRO 6000 Blackwell Server Edition} GPU, and the step budget is the number of steps that spends $900$ seconds ($15$ minutes) at that throughput.
    \item \textbf{Epoch cap:} the step budget is capped at 250 expected-coverage epochs, an epoch being the number of steps after which every training origin has been sampled once in expectation, i.e., $\lceil N_{\mathrm{origins}}/N_{\mathrm{batch}} \rceil$.
    \item \textbf{Per-target fan-out:} the step budget is divided by the number of per-target models a run fans out into, so that a per-target univariate run spends the same total compute as a joint or pooled run on the same dataset.
\end{itemize}

We estimate that training and evaluating our models required \(3{,}000\) GPU-hours, with an additional \(300\) hours of CPU-only compute used to run experiments.

\paragraph{Checkpoints.}
Along every run we save periodic snapshots of the weights, and the end-of-training weights as a final checkpoint.
The snapshot interval is $1000$ optimizer steps, shortened for slow models to at most $60$ seconds of isolated wall-time and to at most one expected-coverage epoch, whichever is smallest.

\paragraph{Checkpoint Selection.} Every checkpoint, including the final one, is scored with the above rolling-origin protocol. We keep the single checkpoint with the lowest validation \textbf{MASE} \cite{Hyndman2006}, aggregated over the whole horizon and over every validation origin.

\begin{table}[h]
    \centering
    \caption{Validation and test performance of the full set of trained models of each dataset: minimum and mean across all models of the horizon-averaged MAE, RMSE, MASE, and RMSSE, on the validation portion (left) and on the test portion (right).}
    \label{tab:performance-datasets}
    \footnotesize
    \setlength{\tabcolsep}{3pt}
    \resizebox{\textwidth}{!}{\begin{tabular}{llllllllllllllllll}
\toprule
Dataset & \multicolumn{1}{l}{} & \multicolumn{8}{l}{\textsc{Val}} & \multicolumn{8}{l}{\textsc{Test}} \\
\cmidrule(lr){3-10} \cmidrule(lr){11-18}
 & \multicolumn{1}{l}{} & \multicolumn{2}{l}{MAE} & \multicolumn{2}{l}{RMSE} & \multicolumn{2}{l}{MASE} & \multicolumn{2}{l}{RMSSE} & \multicolumn{2}{l}{MAE} & \multicolumn{2}{l}{RMSE} & \multicolumn{2}{l}{MASE} & \multicolumn{2}{l}{RMSSE} \\
\cmidrule(lr){3-4} \cmidrule(lr){5-6} \cmidrule(lr){7-8} \cmidrule(lr){9-10} \cmidrule(lr){11-12} \cmidrule(lr){13-14} \cmidrule(lr){15-16} \cmidrule(lr){17-18}
 & Models & Min & Mean & Min & Mean & Min & Mean & Min & Mean & Min & Mean & Min & Mean & Min & Mean & Min & Mean \\
\midrule
ATM 1D & 2080 & 4.23 & 4.68 & 6.88 & 7.56 & 1.05 & 1.16 & 1.09 & 1.20 & 3.16 & 3.45 & 4.95 & 5.23 & 0.73 & 0.80 & 0.71 & 0.76 \\
ATM 1W & 1520 & 15.4 & 18.0 & 22.3 & 25.9 & 0.93 & 1.11 & 1.05 & 1.26 & 12.9 & 13.9 & 18.2 & 19.5 & 0.75 & 0.81 & 0.80 & 0.86 \\
Bitcoin & 2800 & 0.06 & 0.07 & 0.08 & 0.10 & 2.67 & 3.26 & 2.58 & 3.03 & 0.05 & 0.06 & 0.07 & 0.09 & 2.35 & 3.01 & 2.16 & 2.76 \\
Electricity D. A (NSW) & 2160 & 171 & 206 & 274 & 323 & 0.38 & 0.45 & 0.41 & 0.48 & 171 & 206 & 274 & 320 & 0.38 & 0.46 & 0.41 & 0.48 \\
Electricity D. A (QLD) & 2160 & 100 & 118 & 153 & 176 & 0.41 & 0.48 & 0.40 & 0.46 & 114 & 141 & 174 & 205 & 0.47 & 0.58 & 0.46 & 0.54 \\
Electricity D. A (SA) & 2160 & 61.8 & 73.0 & 103 & 116 & 0.52 & 0.62 & 0.55 & 0.63 & 74.7 & 87.2 & 121 & 136 & 0.63 & 0.73 & 0.65 & 0.73 \\
Electricity D. A (TAS) & 2160 & 22.6 & 24.7 & 31.4 & 34.2 & 0.53 & 0.58 & 0.50 & 0.54 & 23.5 & 25.7 & 32.5 & 35.2 & 0.56 & 0.62 & 0.53 & 0.57 \\
Electricity D. A (VIC) & 2160 & 151 & 179 & 246 & 285 & 0.43 & 0.52 & 0.46 & 0.53 & 166 & 200 & 275 & 321 & 0.48 & 0.57 & 0.51 & 0.60 \\
Electricity D. Z (Value\_NE5) & 1680 & 524 & 864 & 745 & 1195 & 0.21 & 0.35 & 0.18 & 0.28 & 543 & 888 & 819 & 1244 & 0.23 & 0.37 & 0.20 & 0.31 \\
Electricity D. Z (Value\_NE7) & 1680 & 1019 & 1823 & 1485 & 2481 & 0.27 & 0.49 & 0.23 & 0.39 & 934 & 1734 & 1421 & 2400 & 0.26 & 0.48 & 0.23 & 0.39 \\
Electricity Price & 1400 & 4.88 & 5.65 & 6.88 & 7.90 & 0.82 & 0.95 & 0.81 & 0.93 & 2.65 & 3.12 & 3.63 & 4.17 & 0.43 & 0.51 & 0.42 & 0.48 \\
ETT 1H & 2800 & 1.14 & 1.27 & 1.50 & 1.64 & 0.48 & 0.53 & 0.46 & 0.50 & 1.27 & 1.46 & 1.75 & 1.96 & 0.57 & 0.65 & 0.57 & 0.64 \\
Exchange Rate (AUD) & 4320 & 0.01 & 0.01 & 0.02 & 0.02 & 2.78 & 3.35 & 2.86 & 3.40 & 0.01 & 0.01 & 0.01 & 0.01 & 1.66 & 2.03 & 1.50 & 1.84 \\
Exchange Rate (CAD) & 4320 & 0.01 & 0.01 & 0.01 & 0.02 & 3.61 & 4.35 & 2.89 & 3.47 & 0.01 & 0.01 & 0.01 & 0.01 & 1.70 & 2.04 & 1.39 & 1.74 \\
Exchange Rate (CHF) & 4320 & 0.01 & 0.01 & 0.01 & 0.01 & 1.56 & 1.90 & 1.51 & 1.82 & 0.01 & 0.01 & 0.01 & 0.01 & 1.51 & 1.89 & 1.55 & 1.91 \\
Exchange Rate (CNY) & 4320 & 0.00 & 0.00 & 0.00 & 0.00 & 1.83 & 3.77 & 0.21 & 0.43 & 0.00 & 0.00 & 0.00 & 0.00 & 2.52 & 4.75 & 0.28 & 0.60 \\
Exchange Rate (GBP) & 4320 & 0.01 & 0.01 & 0.01 & 0.02 & 2.21 & 2.65 & 2.16 & 2.56 & 0.01 & 0.01 & 0.01 & 0.01 & 1.31 & 1.59 & 1.13 & 1.40 \\
Exchange Rate (JPY) & 4320 & 0.01 & 0.01 & 0.01 & 0.02 & 1.90 & 2.28 & 1.78 & 2.08 & 0.01 & 0.01 & 0.01 & 0.01 & 1.22 & 1.51 & 1.11 & 1.36 \\
Exchange Rate (NZD) & 4320 & 0.01 & 0.02 & 0.02 & 0.02 & 2.98 & 3.59 & 2.90 & 3.43 & 0.01 & 0.01 & 0.01 & 0.01 & 1.75 & 2.14 & 1.55 & 1.89 \\
Exchange Rate (SGD) & 4320 & 0.00 & 0.01 & 0.01 & 0.01 & 1.99 & 2.43 & 1.62 & 1.98 & 0.00 & 0.00 & 0.01 & 0.01 & 1.77 & 2.13 & 1.34 & 1.65 \\
FRED-MD (CPIAUCSL) & 3240 & 0.00 & 0.01 & 0.01 & 0.01 & 0.09 & 0.13 & 0.12 & 0.16 & 0.00 & 0.00 & 0.00 & 0.00 & 0.06 & 0.08 & 0.06 & 0.09 \\
FRED-MD (INDPRO) & 3240 & 0.01 & 0.01 & 0.01 & 0.02 & 0.14 & 0.21 & 0.17 & 0.27 & 0.00 & 0.01 & 0.01 & 0.01 & 0.10 & 0.15 & 0.11 & 0.16 \\
FRED-MD (UNRATE) & 3240 & 0.02 & 0.03 & 0.03 & 0.04 & 0.19 & 0.28 & 0.18 & 0.28 & 0.02 & 0.03 & 0.03 & 0.04 & 0.17 & 0.22 & 0.15 & 0.22 \\
Gasoline & 1520 & 207 & 224 & 262 & 286 & 0.74 & 0.80 & 0.73 & 0.80 & 330 & 400 & 538 & 700 & 1.15 & 1.39 & 1.48 & 1.93 \\
Weather & 1240 & 8.22 & 9.14 & 12.0 & 13.3 & 0.95 & 1.06 & 0.88 & 0.98 & 8.93 & 10.4 & 12.3 & 14.3 & 1.02 & 1.19 & 0.89 & 1.03 \\
\bottomrule
\end{tabular}
}
\end{table}

\paragraph{Model Performance.}
\Cref{tab:performance-datasets} reports, for each dataset, the minimum and the mean of the validation and of the test performance across the whole set of trained models, under the four error metrics MAE, RMSE, MASE, and RMSSE \cite{Hyndman2006,Makridakis2022}, each averaged over the whole horizon and over every origin of the portion.

\subsection{Rashomon Set}
\label{app:rashomon-set}

\paragraph{Selection Rule.}
The empirical Rashomon set $\egR$ of a dataset is selected once again using the validation \emph{MASE} as our $\Ls(f)$.

\begin{table}[h]
    \centering
    \caption{Empirical Rashomon sets: forecast horizon $H$, the size of the full set of trained models $|\widehat{\gF}|$, Rashomon set size $|\egR|$, the Rashomon ratio, and the best and worst validation score of the set's members on the selection metric (MASE, in bold) and on the other reported metrics.}
    \label{tab:rashomon-set}
    \footnotesize
    \setlength{\tabcolsep}{3pt}
    \resizebox{\textwidth}{!}{\begin{tabular}{lllllllllllll}
\toprule
Dataset & \multicolumn{4}{l}{} & \multicolumn{2}{l}{\textbf{MASE}} & \multicolumn{2}{l}{MAE} & \multicolumn{2}{l}{RMSE} & \multicolumn{2}{l}{RMSSE} \\
\cmidrule(lr){6-7} \cmidrule(lr){8-9} \cmidrule(lr){10-11} \cmidrule(lr){12-13}
 & $H$ & $|\hat{\gF}|$ & $|\gR_\eps|$ & Ratio & Best & Worst & Best & Worst & Best & Worst & Best & Worst \\
\midrule
ATM 1D & 7 & 2080 & 88 & 0.04 & 1.05 & 1.10 & 4.23 & 4.44 & 6.88 & 7.51 & 1.09 & 1.19 \\
ATM 1W & 4 & 1520 & 81 & 0.05 & 0.93 & 0.98 & 15.4 & 16.2 & 22.3 & 24.5 & 1.05 & 1.16 \\
Bitcoin & 7 & 2800 & 173 & 0.06 & 2.67 & 2.81 & 0.06 & 0.06 & 0.08 & 0.09 & 2.58 & 2.81 \\
Electricity D. A (NSW) & 48 & 2160 & 135 & 0.06 & 0.38 & 0.40 & 171 & 179 & 274 & 298 & 0.41 & 0.44 \\
Electricity D. A (QLD) & 48 & 2160 & 332 & 0.15 & 0.41 & 0.43 & 100 & 105 & 153 & 168 & 0.40 & 0.44 \\
Electricity D. A (SA) & 48 & 2160 & 164 & 0.08 & 0.52 & 0.55 & 61.8 & 64.8 & 103 & 110 & 0.55 & 0.59 \\
Electricity D. A (TAS) & 48 & 2160 & 876 & 0.41 & 0.53 & 0.56 & 22.6 & 23.7 & 31.4 & 33.4 & 0.50 & 0.53 \\
Electricity D. A (VIC) & 48 & 2160 & 259 & 0.12 & 0.43 & 0.46 & 151 & 158 & 246 & 270 & 0.46 & 0.50 \\
Electricity D. Z (Value\_NE5) & 96 & 1680 & 107 & 0.06 & 0.21 & 0.22 & 524 & 550 & 745 & 827 & 0.18 & 0.20 \\
Electricity D. Z (Value\_NE7) & 96 & 1680 & 109 & 0.06 & 0.27 & 0.29 & 1019 & 1070 & 1485 & 1603 & 0.23 & 0.25 \\
Electricity Price & 24 & 1400 & 208 & 0.15 & 0.82 & 0.86 & 4.88 & 5.12 & 6.88 & 7.27 & 0.81 & 0.86 \\
ETT 1H & 24 & 2800 & 748 & 0.27 & 0.48 & 0.51 & 1.14 & 1.20 & 1.50 & 1.59 & 0.46 & 0.49 \\
Exchange Rate (AUD) & 5 & 4320 & 1435 & 0.33 & 2.78 & 2.92 & 0.01 & 0.01 & 0.02 & 0.02 & 2.86 & 3.10 \\
Exchange Rate (CAD) & 5 & 4320 & 1339 & 0.31 & 3.61 & 3.79 & 0.01 & 0.01 & 0.01 & 0.02 & 2.89 & 3.10 \\
Exchange Rate (CHF) & 5 & 4320 & 1326 & 0.31 & 1.56 & 1.64 & 0.01 & 0.01 & 0.01 & 0.01 & 1.51 & 1.60 \\
Exchange Rate (CNY) & 5 & 4320 & 100 & 0.02 & 1.83 & 1.92 & 0.00 & 0.00 & 0.00 & 0.00 & 0.21 & 0.23 \\
Exchange Rate (GBP) & 5 & 4320 & 1201 & 0.28 & 2.21 & 2.33 & 0.01 & 0.01 & 0.01 & 0.01 & 2.16 & 2.32 \\
Exchange Rate (JPY) & 5 & 4320 & 1334 & 0.31 & 1.90 & 2.00 & 0.01 & 0.01 & 0.01 & 0.01 & 1.78 & 1.90 \\
Exchange Rate (NZD) & 5 & 4320 & 1238 & 0.29 & 2.98 & 3.13 & 0.01 & 0.01 & 0.02 & 0.02 & 2.90 & 3.10 \\
Exchange Rate (SGD) & 5 & 4320 & 1208 & 0.28 & 1.99 & 2.09 & 0.00 & 0.00 & 0.01 & 0.01 & 1.62 & 1.83 \\
FRED-MD (CPIAUCSL) & 3 & 3240 & 128 & 0.04 & 0.09 & 0.09 & 0.00 & 0.00 & 0.01 & 0.01 & 0.12 & 0.13 \\
FRED-MD (INDPRO) & 3 & 3240 & 13 & 0.00 & 0.14 & 0.15 & 0.01 & 0.01 & 0.01 & 0.01 & 0.17 & 0.19 \\
FRED-MD (UNRATE) & 3 & 3240 & 1 & 0.00 & 0.19 & 0.19 & 0.02 & 0.02 & 0.03 & 0.03 & 0.18 & 0.18 \\
Gasoline & 4 & 1520 & 436 & 0.29 & 0.74 & 0.77 & 207 & 218 & 262 & 284 & 0.73 & 0.79 \\
Weather & 144 & 1240 & 153 & 0.12 & 0.95 & 1.00 & 8.22 & 8.63 & 12.0 & 13.0 & 0.88 & 0.96 \\
\midrule
Mean & -- & -- & -- & 0.14 & 0.99 & 1.03 & -- & -- & -- & -- & 0.95 & 1.03 \\
\bottomrule
\end{tabular}
}
\end{table}

Since the best achievable MASE differs from dataset to dataset, a common absolute tolerance would admit sets of very different quality across datasets; we therefore choose $\eps$ as a fixed fraction of the best validation score, so that the results remain comparable across datasets: $\eps = 5\% \cdot \Ls(f^\star)$, i.e., every model within 5\% of the best validation MASE is a member. 
For a dataset with several targets that admit per-target modeling, one set is selected and measured per target.

Every multiplicity statistic is then computed on the members' test forecasts, so that the selection and the reported disagreement never share data. 

\Cref{tab:rashomon-set} reports, for every dataset, several metrics on its Rashomon set including the size $|\egR|$, its Rashomon ratio, and the best and the worst validation score among the members of the set, on four metrics: MASE, MAE, RMSE, and RMSSE. 
The best MASE among the models of the Rashomon set is by construction the best of the full set of trained models, $\Ls(f^\star)$.

\subsection{Temporal Multiplicity}
\label{app:multiplicity-metrics}

\paragraph{Trajectory Scaling.}
Since the series differ from dataset to dataset (and, within a dataset, from series to series) in unit and in scale, a distance between two forecasts, or a threshold $\tau$ on such a distance, expressed in raw units would not be comparable across them; we therefore read every magnitude-like quantity in \emph{scale-free} units, so that the results remain comparable across series and datasets.

Unless stated otherwise, before any multiplicity statistic is computed, each prediction and each trajectory is divided by its series' in-sample seasonal-naive mean absolute error---the MASE denominator \cite{Hyndman2006}. The thresholds $\tau$ are also expressed in the same unit.

We note that ($\dt_{\mathrm{direction}}$, $\dt_{\mathrm{shape}}$, $\dt_{\mathrm{spectral}}$, $\dt_{\mathrm{timing}}$) are invariant to positive per-series rescaling and are unaffected.
For the other trajectory distance functions we note that scaling is a fixed positive constant per series, applied identically to every model and to the realized values: as such, every result of the paper holds verbatim in the rescaled space, while the statistics gain a unit-free, cross-dataset interpretation.

\paragraph{Numerical Implementation: Shape Distance.}
$\rho(\vu,\vv)$ is the Pearson correlation between the entries of $\vu$ and $\vv$, so that $\dt_{\mathrm{shape}}=\tfrac{1}{2}(1-\rho)$ ranges from $0$ (proportional trajectories) to $1$ (anti-proportional ones).
The Pearson correlation is undefined at constant trajectories and unstable near them.
Calling $\vu$ \emph{flat} if $\mathrm{sd}(\vu)\leq\zeta$ for a small threshold $\zeta\geq0$, with $\mathrm{sd}$ the population standard deviation of its entries, we adopt the convention $\rho(\vu,\vv)=1$ if both trajectories are flat and $\rho(\vu,\vv)=0$ if exactly one is; the same threshold governs the direction, spectral, and timing trajectory distance functions, as detailed below, and the increments.
We use $\zeta=10^{-8}$ (in naive units) in our experiments, so that a trajectory is flat only when it is constant up to floating-point noise.

\paragraph{Numerical Implementation: Direction Distance.}
The sign of an increment is read with the flatness threshold $\zeta$ above: an increment is \emph{flat}, $\sign(\Delta u_h)=0$, when it does not clear $\zeta$ in absolute value, $|\Delta u_h|\leq\zeta$, and it goes up or down, $\sign(\Delta u_h)=\pm1$, otherwise.
A flat increment thus counts as a third direction: two flat increments agree, whereas a flat increment against an upward or a downward one counts as a disagreement, exactly as an upward move against a downward one does.
Two flat trajectories are therefore at distance $0$ and a flat trajectory against a strictly monotone one at distance $1$, mirroring the convention adopted for $\rho$ above; the direction shares read the increments with the same rule.

\paragraph{Numerical Implementation: Spectral Distance.}
The trajectories are first standardized, $\check{\vu}=(\vu-\bar{u}\vone)/\|\vu-\bar{u}\vone\|_2$ with $\bar{u}=\frac1H\sum_h u_h$, and $\digamma$ is applied to $\check{\vu}$ and $\check{\vv}$; $\digamma$ denotes the \emph{normalized} discrete Fourier transform, $(\digamma\vu)_k=\frac{1}{\sqrt{H}}\sum_{h=1}^{H}u_h\,e^{-2\pi i k(h-1)/H}$, so that $\|\digamma\vu\|_2=\|\vu\|_2$.
The standardization is undefined at flat trajectories; with the flatness threshold $\zeta$ above, we set $\check{\vu}=\vzero$ whenever $\mathrm{sd}(\vu)\leq\zeta$.
Two unit-norm magnitude spectra are at most $\sqrt{2}$ apart, so $\dt_{\mathrm{spectral}}\in[0,1]$, and the maximum $1$ is attained exactly when the two spectra have disjoint supports.

\paragraph{Numerical Implementation: Timing Distance.}
For a lag $\tau\geq0$, the overlapping segments of $\vu$ and $\vv$ are $(u_{1+\tau},\ldots,u_H)$ and $(v_1,\ldots,v_{H-\tau})$, of length $H-\tau$.
Their means are denoted by $\bar{u}_\tau$ and $\bar{v}_\tau$, while $\bar{u}$ and $\bar{v}$ denote the means of the full trajectories.
Negative lags are handled by swapping the roles of $\vu$ and $\vv$.
The lag-$\tau$ Pearson correlation $\rho_\tau(\vu,\vv)$ is computed on the overlapping segments.
Following the shape-distance convention, a segment is flat if its population standard deviation is at most $\zeta$, computed over that segment's own length.
We set $\rho_\tau=1$ if both segments are flat and $\rho_\tau=0$ if exactly one is flat.

Pearson correlation measures agreement within the overlap alone, so a short overlap can achieve a high correlation even when the full trajectories align poorly.
We limit this effect in two ways.
First, we require the overlap to contain at least two points and cover at least half of the horizon, restricting admissible lags to $|\tau|\leq L$, where
\[
L=H-\max\{2,\lceil H/2\rceil\}.
\]
Second, we normalize the centered cross-product over the overlap by the norms of the two full centered trajectories.
For $\tau\geq0$, the resulting score is
\begin{equation*}
\tilde{\rho}_\tau(\vu,\vv)
=
\frac{
\sum_{k=1}^{H-\tau}
(u_{k+\tau}-\bar{u}_\tau)(v_k-\bar{v}_\tau)
}{
\sqrt{\sum_{i=1}^{H}(u_i-\bar{u})^2}
\sqrt{\sum_{i=1}^{H}(v_i-\bar{v})^2}
}.
\end{equation*}
The denominator is fixed across lags, so an overlap containing little of the full trajectories' variation receives a smaller score in magnitude, even if its Pearson correlation is high.
At zero lag, $\tilde{\rho}_0=\rho_0$.
If either full trajectory is flat, we use $\rho_\tau$ instead.

The best-alignment lag is $\tau^\star=\argmax_{|\tau|\leq L}\tilde{\rho}_\tau(\vu,\vv)$.
We define the timing distance as $\dt_{\mathrm{timing}}(\vu,\vv)=|\tau^\star|/L$, setting it to zero when $L=0$.
The distance therefore lies in $[0,1]$, and the thresholds $\tau_0=0.25$ and $\tau_1=0.5$ correspond to one quarter and one half of the largest admissible lag, respectively.

This implementation differs from the definition in \cref{sec:temporal-trajectory-distances} by capping the lag at $L$ rather than $H-2$, maximizing $\tilde{\rho}_\tau$ rather than $\rho_\tau$, and normalizing the selected lag by $L$.
The theoretical statements in \cref{app:bounds} apply to the definition in \cref{sec:temporal-trajectory-distances}; the modifications above concern the numerical implementation used in the experiments.

\paragraph{Error Trajectories.}
Consistent with the evaluation protocol of \cref{app:datasets}, a forecast cell whose ground truth was imputed at load has no observed error, so the error trajectories drop every forecast origin carrying at least one such cell, whereas the forecast and revision trajectories keep it (a prediction is a prediction whatever it is scored against).

\paragraph{Extremes.}
The location of the maximum and of the minimum of a forecast trajectory, $\ell^{f,M}_{t}$ and $\ell^{f,m}_{t}$ in \eqref{eq:consequences-readings}, is the first step at which the extreme is reached whenever several steps attain it.

\paragraph{Increments.} The first increment is $\Delta^{f}_{t,1}=\nv{\hat{y}}^{f}_{t+1}-\nv{y}_{t}$, i.e., its first term is the \emph{realized} level at the origin, that is, $\nv{\hat{y}}^{f}_{t}=\nv{y}_t$ in \eqref{eq:consequences-readings}.

\clearpage

\section{Additional Experimental Results}
\label{app:additional-results}

This appendix presents additional experimental results omitted from the main text due to space constraints.
Note that while the main text presents results aggregated at the dataset level, this section reports the results from \cref{sec:experiments} at a finer grain: separately for each target in multi-target datasets.

\Cref{app:horizon-wise-results} reports all pointwise and horizon-wise multiplicity metrics separately for each target.
Similarly, \Cref{app:temporal-multiplicity-experiment} reports all temporal multiplicity statistics separately for each target.

\Cref{app:pair-correlations} examines the relationship between pointwise or horizon-wise disagreement and temporal disagreement.
It reports the Spearman rank correlations separately for each target, together with their minimum and maximum across the three trajectory types, showing how the relationships summarized in the main text vary across targets and temporal aspects.

\subsection{Pointwise and Horizon-Wise Predictive Multiplicity}
\label{app:horizon-wise-results}

In this subsection we provide more details on the pointwise and horizon-wise multiplicity results, reported \emph{per target}. In particular, \cref{tab:pointwise-n-horizon-wise-multiplicity} reports \emph{range}, \emph{diameter}, \emph{ambiguity}, and \emph{discrepancy}, with horizon-wise statistics shown at the first and last horizons.
For ambiguity and discrepancy, we report results at two thresholds relative to the best validation MASE: $\tau_0=0.25\cdot\Ls(f^\star)$ and $\tau_1=0.5\cdot\Ls(f^\star)$.

\begin{table}[h]
    \centering
    \caption{Pointwise and horizon-wise predictive multiplicity statistics.
    Ambiguity $\alpha$ and discrepancy $\varsigma$ are measured at thresholds $\tau_0=0.25\cdot\Ls(f^\star)$ and $\tau_1=0.5\cdot\Ls(f^\star)$. Horizon-wise statistics (range, diameter, ambiguity, and discrepancy) are reported at the first and the last horizon only.}
    \label{tab:pointwise-n-horizon-wise-multiplicity}
    \footnotesize
    \setlength{\tabcolsep}{3pt}
    \resizebox{\textwidth}{!}{\begin{tabular}{lllllllllllllllllll}
\toprule
Dataset & \multicolumn{6}{l}{Pointwise} & \multicolumn{12}{l}{Horizon-Wise} \\
\cmidrule(lr){2-7} \cmidrule(lr){8-19}
 & $\nu$ & $\omega$ & $\alpha(\tau_{0})$ & $\varsigma(\tau_{0})$ & $\alpha(\tau_{1})$ & $\varsigma(\tau_{1})$ & $\nu^{(1)}$ & $\nu^{(H)}$ & $\omega^{(1)}$ & $\omega^{(H)}$ & $\alpha^{(1)}(\tau_{0})$ & $\alpha^{(H)}(\tau_{0})$ & $\varsigma^{(1)}(\tau_{0})$ & $\varsigma^{(H)}(\tau_{0})$ & $\alpha^{(1)}(\tau_{1})$ & $\alpha^{(H)}(\tau_{1})$ & $\varsigma^{(1)}(\tau_{1})$ & $\varsigma^{(H)}(\tau_{1})$ \\
\midrule
ATM 1D & 1.85 & 0.55 & 1.00 & 0.67 & 1.00 & 0.41 & 1.79 & 1.94 & 0.59 & 0.62 & 1.00 & 1.00 & 0.70 & 0.76 & 1.00 & 1.00 & 0.44 & 0.49 \\
ATM 1W & 1.68 & 0.59 & 0.99 & 0.72 & 0.98 & 0.48 & 1.75 & 1.67 & 0.71 & 0.61 & 0.99 & 0.99 & 0.76 & 0.75 & 0.98 & 0.98 & 0.55 & 0.51 \\
Bitcoin & 7.10 & 2.63 & 1.00 & 0.79 & 0.98 & 0.62 & 5.16 & 9.41 & 2.35 & 3.34 & 1.00 & 1.00 & 0.77 & 0.85 & 0.93 & 1.00 & 0.59 & 0.69 \\
Electricity D. A (NSW) & 1.38 & 0.35 & 1.00 & 0.78 & 1.00 & 0.58 & 0.76 & 1.64 & 0.24 & 0.42 & 1.00 & 1.00 & 0.76 & 0.83 & 1.00 & 1.00 & 0.53 & 0.67 \\
Electricity D. A (QLD) & 1.93 & 0.48 & 1.00 & 0.85 & 1.00 & 0.70 & 1.37 & 2.19 & 0.45 & 0.56 & 1.00 & 1.00 & 0.85 & 0.87 & 1.00 & 1.00 & 0.70 & 0.75 \\
Electricity D. A (SA) & 1.92 & 0.53 & 1.00 & 0.82 & 1.00 & 0.65 & 1.26 & 2.17 & 0.41 & 0.58 & 1.00 & 1.00 & 0.81 & 0.84 & 1.00 & 1.00 & 0.61 & 0.67 \\
Electricity D. A (TAS) & 1.90 & 0.44 & 1.00 & 0.79 & 1.00 & 0.60 & 1.59 & 2.04 & 0.46 & 0.48 & 1.00 & 1.00 & 0.83 & 0.83 & 1.00 & 1.00 & 0.65 & 0.66 \\
Electricity D. A (VIC) & 1.71 & 0.46 & 1.00 & 0.81 & 1.00 & 0.63 & 1.01 & 2.00 & 0.34 & 0.52 & 1.00 & 1.00 & 0.78 & 0.83 & 1.00 & 1.00 & 0.58 & 0.67 \\
Electricity D. Z (Value\_NE5) & 0.64 & 0.17 & 1.00 & 0.76 & 1.00 & 0.55 & 0.47 & 0.78 & 0.14 & 0.20 & 1.00 & 1.00 & 0.78 & 0.81 & 1.00 & 1.00 & 0.56 & 0.63 \\
Electricity D. Z (Value\_NE7) & 0.97 & 0.24 & 1.00 & 0.80 & 1.00 & 0.61 & 0.62 & 1.17 & 0.22 & 0.32 & 1.00 & 1.00 & 0.82 & 0.85 & 1.00 & 1.00 & 0.64 & 0.71 \\
Electricity Price & 1.24 & 0.37 & 1.00 & 0.63 & 1.00 & 0.35 & 0.98 & 1.29 & 0.34 & 0.48 & 1.00 & 1.00 & 0.63 & 0.72 & 1.00 & 1.00 & 0.32 & 0.47 \\
ETT 1H & 1.66 & 0.47 & 1.00 & 0.81 & 1.00 & 0.64 & 1.13 & 1.97 & 0.44 & 0.60 & 1.00 & 1.00 & 0.83 & 0.87 & 1.00 & 1.00 & 0.64 & 0.73 \\
Exchange Rate (AUD) & 5.35 & 1.19 & 1.00 & 0.57 & 0.99 & 0.29 & 4.09 & 6.26 & 1.23 & 1.74 & 1.00 & 1.00 & 0.68 & 0.86 & 0.99 & 1.00 & 0.35 & 0.54 \\
Exchange Rate (CAD) & 5.08 & 1.18 & 1.00 & 0.60 & 0.99 & 0.22 & 4.07 & 5.90 & 1.20 & 1.86 & 1.00 & 1.00 & 0.57 & 0.88 & 0.96 & 1.00 & 0.23 & 0.50 \\
Exchange Rate (CHF) & 3.67 & 0.94 & 1.00 & 0.79 & 1.00 & 0.48 & 3.07 & 4.37 & 1.05 & 1.34 & 1.00 & 1.00 & 0.92 & 0.98 & 1.00 & 1.00 & 0.55 & 0.71 \\
Exchange Rate (CNY) & 4.47 & 1.29 & 0.99 & 0.77 & 0.97 & 0.52 & 3.78 & 5.00 & 1.59 & 1.97 & 0.99 & 1.00 & 0.96 & 0.95 & 0.97 & 0.97 & 0.74 & 0.76 \\
Exchange Rate (GBP) & 3.58 & 1.02 & 1.00 & 0.72 & 1.00 & 0.37 & 3.09 & 4.18 & 1.06 & 1.43 & 1.00 & 1.00 & 0.78 & 0.91 & 1.00 & 1.00 & 0.41 & 0.54 \\
Exchange Rate (JPY) & 3.19 & 0.81 & 1.00 & 0.64 & 1.00 & 0.29 & 2.57 & 3.89 & 0.90 & 1.14 & 1.00 & 1.00 & 0.77 & 0.86 & 1.00 & 1.00 & 0.37 & 0.51 \\
Exchange Rate (NZD) & 4.85 & 1.36 & 1.00 & 0.73 & 1.00 & 0.34 & 4.03 & 5.84 & 1.43 & 2.02 & 1.00 & 1.00 & 0.79 & 0.97 & 1.00 & 1.00 & 0.42 & 0.65 \\
Exchange Rate (SGD) & 4.22 & 1.12 & 1.00 & 0.67 & 0.99 & 0.42 & 3.38 & 5.08 & 1.03 & 1.50 & 1.00 & 1.00 & 0.78 & 0.94 & 0.98 & 1.00 & 0.43 & 0.66 \\
FRED-MD (CPIAUCSL) & 0.09 & 0.04 & 1.00 & 0.82 & 0.93 & 0.33 & 0.06 & 0.12 & 0.04 & 0.05 & 1.00 & 1.00 & 0.73 & 1.00 & 0.82 & 1.00 & 0.36 & 0.69 \\
FRED-MD (INDPRO) & 0.16 & 0.11 & 0.99 & 0.80 & 0.88 & 0.58 & 0.11 & 0.21 & 0.08 & 0.13 & 0.98 & 1.00 & 0.84 & 0.91 & 0.79 & 0.94 & 0.55 & 0.68 \\
FRED-MD (UNRATE) & 0.00 & 0.00 & 0.00 & 0.00 & 0.00 & 0.00 & 0.00 & 0.00 & 0.00 & 0.00 & 0.00 & 0.00 & 0.00 & 0.00 & 0.00 & 0.00 & 0.00 & 0.00 \\
Gasoline & 3.10 & 1.22 & 1.00 & 0.89 & 1.00 & 0.74 & 3.03 & 3.29 & 1.32 & 1.29 & 1.00 & 1.00 & 0.91 & 0.94 & 1.00 & 1.00 & 0.75 & 0.78 \\
Weather & 2.96 & 1.06 & 1.00 & 0.81 & 1.00 & 0.64 & 2.73 & 3.12 & 1.39 & 1.10 & 1.00 & 1.00 & 0.88 & 0.85 & 1.00 & 1.00 & 0.76 & 0.68 \\
\midrule
Mean & 2.41 & 0.79 & 0.97 & 0.74 & 0.96 & 0.52 & 1.99 & 2.80 & 0.81 & 0.95 & 0.97 & 0.97 & 0.76 & 0.82 & 0.95 & 0.97 & 0.55 & 0.62 \\
\bottomrule
\end{tabular}
}
\end{table}

Two observations are worth noting:
\begin{enumerate}
    \item Pointwise multiplicity is pervasive: on most of the datasets the pointwise ambiguity is almost one at both thresholds, i.e., at almost every $(t,h)$ cell some pair of near-optimal models disagrees by more than half of the best MASE, and a single pair does so on close to half of the cells on average; the exceptions are datasets whose small and accurate sets fail to contain enough models to register large multiplicity.
    \item The horizon-wise statistics grow with the horizon on most datasets, in line with the findings of \citet{Kale2026}. However, there are a few exceptions, such as ATM 1W. These exceptions may reflect differences in the underlying dynamics: \citet{Kale2026} study chaotic systems, whereas not all forecasting tasks considered here necessarily exhibit chaotic behavior.
\end{enumerate}

\subsection{Temporal Predictive Multiplicity}
\label{app:temporal-multiplicity-experiment}

In this appendix we provide more details on the temporal multiplicity results. We report the temporal multiplicity statistics \emph{per target}. In particular, \cref{tab:temporal-main-pointwise} provides the equivalent per-target results for \emph{range} and \emph{diameter} in \cref{tab:temporal-main}. We also report the same results for \emph{ambiguity} and \emph{discrepancy} in \cref{tab:temporal-main-pointwise-ambiguity-tau1} using a threshold $\tau_1=0.5\cdot\Ls(f^\star)$.

\begin{table}[h]
    \centering
    \caption{Temporal multiplicity of the empirical Rashomon Sets.
    Each cell reports the \emph{range and diameter} ($\nu_{\dt} / \omega_{\dt}$), maximized over the three trajectories (forecast, revision, and error).
    $\dt_{\mathrm{magnitude}}$, $\dt_{\mathrm{increment}}$ and $\dt_{\mathrm{volatility}}$ are in seasonal-naive units while $\dt_{\mathrm{direction}}$, $\dt_{\mathrm{timing}}$, $\dt_{\mathrm{shape}}$, and $\dt_{\mathrm{spectral}}$ are scale-free on $[0,1]$ and reported as percentages to allow comparisons across different datasets.}
    \label{tab:temporal-main-pointwise}
    \footnotesize
    \setlength{\tabcolsep}{3pt}
    \resizebox{\textwidth}{!}{\begin{tabular}{llllllll}
\toprule
Dataset & $\dt_{\mathrm{magnitude}}$ & $\dt_{\mathrm{increment}}$ & $\dt_{\mathrm{volatility}}$ & $\dt_{\mathrm{direction}}$ & $\dt_{\mathrm{shape}}$ & $\dt_{\mathrm{spectral}}$ & $\dt_{\mathrm{timing}}$ \\
\midrule
ATM 1D & 1.38 / 0.66 & 1.55 / 0.81 & 0.87 / 0.35 & 99.7\% / 58.9\% & 95.9\% / 54.1\% & 85.1\% / 36.0\% & 99.9\% / 54.1\% \\
ATM 1W & 1.46 / 0.67 & 1.38 / 0.65 & 0.94 / 0.49 & 99.9\% / 55.3\% & 99.4\% / 56.0\% & 85.4\% / 31.6\% & 100\% / 57.6\% \\
Bitcoin & 7.00 / 2.99 & 3.66 / 2.08 & 2.01 / 0.88 & 99.9\% / 61.7\% & 99.4\% / 61.1\% & 81.5\% / 41.6\% & 100\% / 68.9\% \\
Electricity D. A (NSW) & 1.17 / 0.42 & 0.32 / 0.23 & 0.14 / 0.09 & 75.4\% / 48.5\% & 81.1\% / 39.2\% & 78.9\% / 42.9\% & 97.9\% / 32.7\% \\
Electricity D. A (QLD) & 1.55 / 0.57 & 0.41 / 0.28 & 0.19 / 0.13 & 76.6\% / 48.9\% & 83.3\% / 38.6\% & 81.7\% / 43.3\% & 98.3\% / 33.1\% \\
Electricity D. A (SA) & 1.56 / 0.62 & 0.47 / 0.33 & 0.21 / 0.15 & 73.4\% / 47.4\% & 75.1\% / 33.9\% & 76.8\% / 41.8\% & 94.9\% / 29.0\% \\
Electricity D. A (TAS) & 1.41 / 0.52 & 0.42 / 0.27 & 0.19 / 0.11 & 78.6\% / 49.4\% & 79.8\% / 33.8\% & 81.6\% / 40.4\% & 97.9\% / 28.5\% \\
Electricity D. A (VIC) & 1.45 / 0.56 & 0.40 / 0.26 & 0.17 / 0.10 & 76.6\% / 48.8\% & 80.5\% / 37.1\% & 80.1\% / 42.4\% & 97.8\% / 32.4\% \\
Electricity D. Z (Value\_NE5) & 0.54 / 0.20 & 0.12 / 0.08 & 0.04 / 0.02 & 65.9\% / 48.1\% & 76.7\% / 36.8\% & 75.9\% / 41.8\% & 97.8\% / 31.6\% \\
Electricity D. Z (Value\_NE7) & 0.80 / 0.30 & 0.20 / 0.15 & 0.07 / 0.05 & 68.8\% / 50.0\% & 79.4\% / 44.3\% & 77.3\% / 49.1\% & 99.0\% / 40.0\% \\
Electricity Price & 0.95 / 0.45 & 0.40 / 0.29 & 0.18 / 0.11 & 80.6\% / 45.8\% & 72.3\% / 35.0\% & 76.5\% / 38.2\% & 95.5\% / 30.2\% \\
ETT 1H & 1.42 / 0.56 & 0.50 / 0.33 & 0.23 / 0.14 & 83.4\% / 46.5\% & 76.6\% / 32.8\% & 78.7\% / 36.6\% & 95.5\% / 31.2\% \\
Exchange Rate (AUD) & 5.31 / 1.78 & 5.25 / 2.25 & 3.05 / 1.32 & 100\% / 100\% & 100\% / 99.6\% & 97.2\% / 70.9\% & 100\% / 100\% \\
Exchange Rate (CAD) & 4.79 / 1.41 & 4.46 / 1.75 & 2.61 / 1.22 & 100\% / 100\% & 100\% / 98.5\% & 97.0\% / 79.5\% & 100\% / 100\% \\
Exchange Rate (CHF) & 3.32 / 1.04 & 2.84 / 1.20 & 1.60 / 0.76 & 100\% / 100\% & 100\% / 97.2\% & 97.0\% / 76.0\% & 100\% / 100\% \\
Exchange Rate (CNY) & 3.85 / 1.70 & 4.84 / 2.31 & 2.83 / 1.05 & 100\% / 100\% & 99.7\% / 97.5\% & 89.9\% / 75.3\% & 100\% / 100\% \\
Exchange Rate (GBP) & 3.26 / 1.14 & 3.10 / 1.43 & 1.79 / 0.67 & 100\% / 100\% & 100\% / 97.2\% & 96.7\% / 71.9\% & 100\% / 100\% \\
Exchange Rate (JPY) & 2.93 / 0.95 & 2.38 / 1.00 & 1.36 / 0.61 & 100\% / 100\% & 100\% / 99.6\% & 96.9\% / 76.7\% & 100\% / 100\% \\
Exchange Rate (NZD) & 4.50 / 1.51 & 3.40 / 1.55 & 2.02 / 0.87 & 100\% / 100\% & 100\% / 98.5\% & 96.8\% / 86.7\% & 100\% / 100\% \\
Exchange Rate (SGD) & 3.90 / 1.23 & 3.18 / 1.20 & 1.83 / 0.56 & 100\% / 100\% & 100\% / 99.6\% & 96.3\% / 58.4\% & 100\% / 100\% \\
FRED-MD (CPIAUCSL) & 0.09 / 0.04 & 0.06 / 0.03 & 0.05 / 0.03 & 51.4\% / 27.1\% & 49.4\% / 24.4\% & 0.0\% / 0.0\% & 61.0\% / 23.8\% \\
FRED-MD (INDPRO) & 0.16 / 0.12 & 0.13 / 0.09 & 0.09 / 0.05 & 46.7\% / 34.6\% & 45.7\% / 35.6\% & 0.0\% / 0.0\% & 57.9\% / 44.9\% \\
FRED-MD (UNRATE) & 0.00 / 0.00 & 0.00 / 0.00 & 0.00 / 0.00 & 0.0\% / 0.0\% & 0.0\% / 0.0\% & 0.0\% / 0.0\% & 0.0\% / 0.0\% \\
Gasoline & 2.74 / 1.33 & 2.13 / 1.00 & 1.23 / 0.56 & 100\% / 71.6\% & 100\% / 65.6\% & 94.3\% / 45.8\% & 100\% / 78.6\% \\
Weather & 2.41 / 1.25 & 0.72 / 0.51 & 0.39 / 0.27 & 66.0\% / 51.9\% & 88.0\% / 49.5\% & 76.6\% / 46.9\% & 99.4\% / 48.3\% \\
\midrule
Mean & 2.14 / 0.92 & 1.33 / 0.70 & 0.75 / 0.35 & 82.3\% / 55.4\% & 83.7\% / 50.0\% & 75.5\% / 39.9\% & 93.2\% / 50.8\% \\
\bottomrule
\end{tabular}
}
\end{table}

\begin{table}[h]
    \centering
    \caption{Temporal multiplicity of the empirical Rashomon Sets.
    Each cell reports the \emph{ambiguity and discrepancy} ($\alpha_{\dt} / \varsigma_{\dt}$) with threshold $\tau_1=0.5\cdot\Ls(f^\star)$, maximized over the three trajectories (forecast, revision, and error).}
    \label{tab:temporal-main-pointwise-ambiguity-tau1}
    \footnotesize
    \setlength{\tabcolsep}{3pt}
    \resizebox{\textwidth}{!}{\begin{tabular}{llllllll}
\toprule
Dataset & $\dt_{\mathrm{magnitude}}$ & $\dt_{\mathrm{increment}}$ & $\dt_{\mathrm{volatility}}$ & $\dt_{\mathrm{direction}}$ & $\dt_{\mathrm{shape}}$ & $\dt_{\mathrm{spectral}}$ & $\dt_{\mathrm{timing}}$ \\
\midrule
ATM 1D & 100\% / 62\% & 100\% / 76\% & 90\% / 16\% & 100\% / 53\% & 100\% / 58\% & 100\% / 23\% & 100\% / 56\% \\
ATM 1W & 98\% / 63\% & 98\% / 61\% & 90\% / 44\% & 100\% / 58\% & 100\% / 58\% & 100\% / 23\% & 100\% / 41\% \\
Bitcoin & 99\% / 75\% & 96\% / 70\% & 73\% / 24\% & 100\% / 62\% & 100\% / 65\% & 100\% / 34\% & 100\% / 79\% \\
Electricity D. A (NSW) & 100\% / 95\% & 98\% / 67\% & 13\% / 2\% & 100\% / 42\% & 95\% / 27\% & 99\% / 30\% & 98\% / 29\% \\
Electricity D. A (QLD) & 100\% / 99\% & 100\% / 77\% & 31\% / 10\% & 100\% / 44\% & 96\% / 29\% & 99\% / 32\% & 98\% / 30\% \\
Electricity D. A (SA) & 100\% / 94\% & 99\% / 71\% & 20\% / 7\% & 100\% / 37\% & 88\% / 22\% & 97\% / 28\% & 95\% / 25\% \\
Electricity D. A (TAS) & 100\% / 96\% & 92\% / 48\% & 10\% / 1\% & 100\% / 47\% & 92\% / 21\% & 99\% / 25\% & 98\% / 25\% \\
Electricity D. A (VIC) & 100\% / 97\% & 98\% / 51\% & 19\% / 6\% & 100\% / 43\% & 94\% / 27\% & 99\% / 30\% & 98\% / 28\% \\
Electricity D. Z (Value\_NE5) & 100\% / 90\% & 41\% / 12\% & 3\% / 1\% & 100\% / 36\% & 93\% / 24\% & 99\% / 25\% & 98\% / 29\% \\
Electricity D. Z (Value\_NE7) & 100\% / 99\% & 95\% / 75\% & 4\% / 1\% & 100\% / 50\% & 97\% / 39\% & 99\% / 48\% & 99\% / 39\% \\
Electricity Price & 100\% / 50\% & 40\% / 13\% & 0\% / 0\% & 100\% / 34\% & 88\% / 21\% & 98\% / 18\% & 96\% / 27\% \\
ETT 1H & 100\% / 98\% & 99\% / 83\% & 37\% / 8\% & 100\% / 38\% & 89\% / 22\% & 96\% / 18\% & 96\% / 28\% \\
Exchange Rate (AUD) & 99\% / 34\% & 96\% / 45\% & 75\% / 8\% & 100\% / 100\% & 100\% / 100\% & 100\% / 100\% & 100\% / 100\% \\
Exchange Rate (CAD) & 98\% / 27\% & 95\% / 29\% & 55\% / 8\% & 100\% / 100\% & 100\% / 100\% & 100\% / 100\% & 100\% / 100\% \\
Exchange Rate (CHF) & 100\% / 60\% & 100\% / 75\% & 95\% / 35\% & 100\% / 100\% & 100\% / 100\% & 100\% / 100\% & 100\% / 100\% \\
Exchange Rate (CNY) & 97\% / 76\% & 98\% / 88\% & 92\% / 47\% & 100\% / 100\% & 100\% / 100\% & 100\% / 100\% & 100\% / 100\% \\
Exchange Rate (GBP) & 100\% / 40\% & 99\% / 46\% & 74\% / 12\% & 100\% / 100\% & 100\% / 100\% & 100\% / 100\% & 100\% / 100\% \\
Exchange Rate (JPY) & 100\% / 34\% & 100\% / 46\% & 78\% / 11\% & 100\% / 100\% & 100\% / 100\% & 100\% / 99\% & 100\% / 100\% \\
Exchange Rate (NZD) & 100\% / 38\% & 100\% / 46\% & 69\% / 10\% & 100\% / 100\% & 100\% / 100\% & 100\% / 100\% & 100\% / 100\% \\
Exchange Rate (SGD) & 98\% / 51\% & 98\% / 50\% & 84\% / 12\% & 100\% / 100\% & 100\% / 100\% & 100\% / 78\% & 100\% / 100\% \\
FRED-MD (CPIAUCSL) & 99\% / 42\% & 85\% / 21\% & 61\% / 19\% & 28\% / 10\% & 46\% / 21\% & 0\% / 0\% & 61\% / 24\% \\
FRED-MD (INDPRO) & 92\% / 66\% & 92\% / 54\% & 55\% / 23\% & 33\% / 18\% & 37\% / 29\% & 0\% / 0\% & 58\% / 45\% \\
FRED-MD (UNRATE) & 0\% / 0\% & 0\% / 0\% & 0\% / 0\% & 0\% / 0\% & 0\% / 0\% & 0\% / 0\% & 0\% / 0\% \\
Gasoline & 100\% / 92\% & 100\% / 90\% & 100\% / 61\% & 100\% / 79\% & 100\% / 74\% & 100\% / 49\% & 100\% / 70\% \\
Weather & 100\% / 89\% & 70\% / 50\% & 27\% / 9\% & 100\% / 66\% & 98\% / 51\% & 99\% / 41\% & 100\% / 49\% \\
\midrule
Mean & 96\% / 73\% & 84\% / 57\% & 51\% / 18\% & 93\% / 53\% & 90\% / 47\% & 90\% / 34\% & 93\% / 49\% \\
\bottomrule
\end{tabular}
}
\end{table}

\subsection{Temporal Multiplicity Is Uncorrelated with Other Notions of Multiplicity}
\label{app:pair-correlations}
\label{app:pair-correlations-raw}

In this appendix we report the correlation results of \cref{tab:pointwise-n-horizon-wise-multiplicity-vs-temporal-correlation} \emph{per target}.
\Cref{tab:raw-correlations-mean-per-target} reports the same Spearman rank correlations of \eqref{eq:raw-correlation}, for both the pointwise and the horizon-wise reading, but with one row per target rather than aggregated over the targets of each multi-target dataset.
The picture is the same as in the main text: the correlation is close to one for magnitude, decreases for increment and volatility, and is weak or absent for direction, shape, spectral behavior, and timing on every target.

\begin{table}[h]
    \centering
    \caption{Spearman rank correlation, over all (pair, origin) samples of the empirical Rashomon set, between the pair's \emph{mean pointwise gap} and its \emph{trajectory distance} $\dt$ on the same forecast origin, read as in \eqref{eq:raw-correlation} with $\Ls^{\star}_h\equiv1$ (Pointwise) or as is (Horizon-Wise).
    Each row reports the minimum / maximum over the three trajectories.}
    \label{tab:raw-correlations-mean-per-target}
    \footnotesize
    \setlength{\tabcolsep}{3pt}
    \resizebox{\textwidth}{!}{\begin{tabular}{llllllll|lllllll}
\toprule
Dataset & \multicolumn{7}{l}{Pointwise} & \multicolumn{7}{l}{Horizon-Wise} \\
\cmidrule(lr){2-8} \cmidrule(lr){9-15}
 & $\dt_{\mathrm{magnitude}}$ & $\dt_{\mathrm{increment}}$ & $\dt_{\mathrm{volatility}}$ & $\dt_{\mathrm{direction}}$ & $\dt_{\mathrm{shape}}$ & $\dt_{\mathrm{spectral}}$ & $\dt_{\mathrm{timing}}$ & $\dt_{\mathrm{magnitude}}$ & $\dt_{\mathrm{increment}}$ & $\dt_{\mathrm{volatility}}$ & $\dt_{\mathrm{direction}}$ & $\dt_{\mathrm{shape}}$ & $\dt_{\mathrm{spectral}}$ & $\dt_{\mathrm{timing}}$ \\
\midrule
ATM 1D & 0.98 / 0.98 & 0.58 / 0.75 & 0.35 / 0.37 & 0.15 / 0.28 & 0.24 / 0.61 & 0.05 / 0.42 & 0.07 / 0.24 & 0.98 / 0.98 & 0.58 / 0.75 & 0.35 / 0.37 & 0.15 / 0.28 & 0.24 / 0.61 & 0.05 / 0.42 & 0.07 / 0.24 \\
ATM 1W & 0.99 / 0.99 & 0.56 / 0.70 & 0.39 / 0.47 & 0.13 / 0.18 & 0.17 / 0.40 & 0.04 / 0.24 & 0.14 / 0.20 & 0.99 / 0.99 & 0.56 / 0.70 & 0.39 / 0.47 & 0.13 / 0.18 & 0.17 / 0.40 & 0.04 / 0.24 & 0.14 / 0.20 \\
Bitcoin & 0.99 / 1.00 & 0.61 / 0.69 & 0.43 / 0.49 & -0.02 / 0.19 & -0.01 / 0.32 & -0.05 / 0.26 & 0.02 / 0.14 & 0.96 / 0.98 & 0.60 / 0.67 & 0.41 / 0.48 & -0.03 / 0.17 & -0.03 / 0.30 & -0.05 / 0.25 & -0.01 / 0.13 \\
Electricity D. A (NSW) & 0.99 / 0.99 & 0.51 / 0.66 & 0.16 / 0.36 & -0.01 / 0.16 & 0.04 / 0.63 & -0.08 / 0.46 & 0.08 / 0.33 & 0.97 / 0.97 & 0.52 / 0.65 & 0.16 / 0.36 & -0.01 / 0.16 & 0.03 / 0.61 & -0.09 / 0.45 & 0.07 / 0.32 \\
Electricity D. A (QLD) & 0.99 / 0.99 & 0.53 / 0.60 & 0.19 / 0.33 & -0.01 / 0.23 & 0.04 / 0.63 & -0.05 / 0.44 & 0.08 / 0.36 & 0.97 / 0.97 & 0.54 / 0.59 & 0.19 / 0.33 & -0.01 / 0.23 & 0.03 / 0.63 & -0.06 / 0.44 & 0.07 / 0.35 \\
Electricity D. A (SA) & 0.99 / 0.99 & 0.46 / 0.61 & 0.16 / 0.29 & -0.03 / 0.26 & -0.00 / 0.61 & -0.08 / 0.45 & 0.05 / 0.34 & 0.97 / 0.97 & 0.47 / 0.58 & 0.16 / 0.30 & -0.03 / 0.26 & -0.03 / 0.59 & -0.10 / 0.43 & 0.04 / 0.33 \\
Electricity D. A (TAS) & 0.99 / 0.99 & 0.55 / 0.55 & 0.23 / 0.30 & 0.02 / 0.21 & 0.11 / 0.43 & 0.04 / 0.35 & 0.11 / 0.24 & 0.96 / 0.97 & 0.54 / 0.55 & 0.23 / 0.29 & 0.02 / 0.21 & 0.09 / 0.42 & 0.02 / 0.34 & 0.10 / 0.24 \\
Electricity D. A (VIC) & 0.99 / 0.99 & 0.53 / 0.66 & 0.21 / 0.31 & -0.02 / 0.19 & 0.04 / 0.64 & -0.07 / 0.49 & 0.08 / 0.33 & 0.96 / 0.97 & 0.55 / 0.64 & 0.21 / 0.31 & -0.01 / 0.18 & 0.02 / 0.62 & -0.08 / 0.48 & 0.07 / 0.32 \\
Electricity D. Z (Value\_NE5) & 0.99 / 0.99 & 0.33 / 0.51 & 0.19 / 0.28 & -0.08 / 0.23 & 0.08 / 0.40 & -0.05 / 0.26 & 0.12 / 0.28 & 0.98 / 0.99 & 0.33 / 0.51 & 0.19 / 0.28 & -0.07 / 0.23 & 0.08 / 0.39 & -0.06 / 0.26 & 0.11 / 0.28 \\
Electricity D. Z (Value\_NE7) & 0.99 / 0.99 & 0.23 / 0.52 & 0.09 / 0.24 & -0.01 / 0.18 & 0.07 / 0.52 & -0.03 / 0.38 & 0.10 / 0.37 & 0.98 / 0.98 & 0.24 / 0.52 & 0.09 / 0.24 & -0.00 / 0.18 & 0.07 / 0.51 & -0.03 / 0.38 & 0.10 / 0.37 \\
Electricity Price & 0.99 / 0.99 & 0.59 / 0.64 & 0.20 / 0.31 & -0.02 / 0.06 & 0.17 / 0.35 & 0.08 / 0.23 & 0.10 / 0.14 & 0.98 / 0.98 & 0.59 / 0.64 & 0.20 / 0.31 & -0.02 / 0.06 & 0.15 / 0.34 & 0.07 / 0.22 & 0.10 / 0.14 \\
ETT 1H & 0.99 / 0.99 & 0.56 / 0.68 & 0.30 / 0.39 & 0.32 / 0.36 & 0.43 / 0.51 & 0.31 / 0.39 & 0.17 / 0.29 & 0.98 / 0.98 & 0.58 / 0.69 & 0.31 / 0.40 & 0.33 / 0.38 & 0.41 / 0.50 & 0.30 / 0.39 & 0.17 / 0.29 \\
Exchange Rate (AUD) & 0.99 / 0.99 & 0.57 / 0.74 & 0.36 / 0.45 & 0.09 / 0.21 & 0.14 / 0.39 & 0.02 / 0.24 & 0.09 / 0.15 & 0.97 / 0.98 & 0.57 / 0.73 & 0.36 / 0.45 & 0.09 / 0.21 & 0.14 / 0.37 & 0.02 / 0.23 & 0.09 / 0.14 \\
Exchange Rate (CAD) & 0.99 / 0.99 & 0.56 / 0.71 & 0.34 / 0.43 & 0.10 / 0.17 & 0.16 / 0.38 & 0.01 / 0.24 & 0.07 / 0.12 & 0.97 / 0.98 & 0.55 / 0.70 & 0.34 / 0.43 & 0.09 / 0.17 & 0.14 / 0.36 & 0.01 / 0.22 & 0.07 / 0.11 \\
Exchange Rate (CHF) & 0.99 / 0.99 & 0.55 / 0.70 & 0.34 / 0.43 & 0.10 / 0.18 & 0.17 / 0.39 & 0.02 / 0.24 & 0.10 / 0.15 & 0.97 / 0.98 & 0.55 / 0.69 & 0.34 / 0.43 & 0.10 / 0.17 & 0.16 / 0.37 & 0.02 / 0.23 & 0.10 / 0.15 \\
Exchange Rate (CNY) & 0.99 / 0.99 & 0.72 / 0.83 & 0.45 / 0.51 & 0.10 / 0.20 & 0.15 / 0.39 & 0.01 / 0.26 & 0.11 / 0.22 & 0.97 / 0.98 & 0.71 / 0.82 & 0.44 / 0.51 & 0.10 / 0.19 & 0.14 / 0.38 & 0.01 / 0.25 & 0.10 / 0.20 \\
Exchange Rate (GBP) & 0.99 / 0.99 & 0.52 / 0.69 & 0.32 / 0.41 & 0.13 / 0.21 & 0.22 / 0.47 & 0.02 / 0.28 & 0.11 / 0.18 & 0.96 / 0.98 & 0.51 / 0.68 & 0.32 / 0.40 & 0.13 / 0.21 & 0.20 / 0.45 & 0.02 / 0.27 & 0.11 / 0.16 \\
Exchange Rate (JPY) & 0.99 / 0.99 & 0.50 / 0.67 & 0.29 / 0.38 & 0.11 / 0.22 & 0.17 / 0.38 & 0.01 / 0.23 & 0.12 / 0.17 & 0.96 / 0.98 & 0.49 / 0.66 & 0.29 / 0.38 & 0.10 / 0.21 & 0.16 / 0.35 & 0.02 / 0.22 & 0.12 / 0.16 \\
Exchange Rate (NZD) & 0.99 / 0.99 & 0.53 / 0.68 & 0.32 / 0.40 & 0.11 / 0.19 & 0.18 / 0.38 & 0.01 / 0.23 & 0.09 / 0.15 & 0.97 / 0.98 & 0.52 / 0.67 & 0.32 / 0.40 & 0.11 / 0.18 & 0.17 / 0.36 & 0.01 / 0.22 & 0.09 / 0.14 \\
Exchange Rate (SGD) & 0.99 / 0.99 & 0.58 / 0.74 & 0.38 / 0.47 & 0.10 / 0.13 & 0.19 / 0.32 & 0.04 / 0.19 & 0.06 / 0.16 & 0.97 / 0.98 & 0.58 / 0.73 & 0.37 / 0.47 & 0.10 / 0.12 & 0.18 / 0.30 & 0.04 / 0.19 & 0.06 / 0.15 \\
FRED-MD (CPIAUCSL) & 0.99 / 0.99 & 0.63 / 0.63 & 0.47 / 0.64 & 0.04 / 0.19 & 0.05 / 0.28 & -0.03 / 0.04 & 0.03 / 0.16 & 0.97 / 0.97 & 0.59 / 0.60 & 0.44 / 0.60 & 0.03 / 0.18 & 0.04 / 0.26 & -0.04 / 0.04 & 0.03 / 0.15 \\
FRED-MD (INDPRO) & 0.99 / 1.00 & 0.69 / 0.69 & 0.45 / 0.49 & 0.20 / 0.40 & 0.35 / 0.42 & 0.00 / 0.04 & 0.13 / 0.40 & 0.98 / 0.99 & 0.66 / 0.67 & 0.44 / 0.48 & 0.19 / 0.39 & 0.34 / 0.42 & 0.00 / 0.04 & 0.12 / 0.39 \\
FRED-MD (UNRATE) & 0.98 / 0.99 & 0.61 / 0.67 & 0.51 / 0.64 & 0.03 / 0.23 & 0.22 / 0.38 & -0.11 / -0.01 & 0.02 / 0.16 & 0.97 / 0.99 & 0.61 / 0.67 & 0.51 / 0.62 & 0.04 / 0.24 & 0.23 / 0.38 & -0.10 / 0.00 & 0.03 / 0.17 \\
Gasoline & 0.99 / 0.99 & 0.56 / 0.61 & 0.35 / 0.45 & 0.09 / 0.22 & 0.12 / 0.43 & -0.01 / 0.29 & 0.10 / 0.23 & 0.99 / 0.99 & 0.56 / 0.61 & 0.35 / 0.45 & 0.09 / 0.22 & 0.12 / 0.43 & -0.01 / 0.29 & 0.10 / 0.23 \\
Weather & 1.00 / 1.00 & 0.69 / 0.72 & 0.51 / 0.52 & 0.35 / 0.43 & 0.34 / 0.52 & 0.24 / 0.43 & 0.27 / 0.35 & 0.99 / 0.99 & 0.70 / 0.73 & 0.52 / 0.54 & 0.37 / 0.44 & 0.34 / 0.52 & 0.24 / 0.43 & 0.27 / 0.35 \\
\midrule
Mean & 0.99 / 0.99 & 0.54 / 0.68 & 0.32 / 0.44 & 0.09 / 0.26 & 0.16 / 0.47 & 0.04 / 0.31 & 0.10 / 0.27 & 0.98 / 0.98 & 0.54 / 0.68 & 0.32 / 0.44 & 0.09 / 0.26 & 0.15 / 0.46 & 0.04 / 0.31 & 0.10 / 0.26 \\
\bottomrule
\end{tabular}
}
\end{table}

\clearpage

\section{Predictive Multiplicity Hierarchy Characterization}
\label{app:proofs}
\label{app:horizon-wise-invariance}

\begin{table}
    \centering
    \caption{Classification of the loss and of the multiplicity statistics of the hierarchy.}
    \label{tab:dichotomy}
    \begin{tabular}{lccc}
        \toprule
        \textbf{Statistic} & \makecell{\textbf{Multiplicity-Aware} \\[-2pt] \tiny not invariant to \\[-4pt] \tiny per-model cell permutations} & \makecell{\textbf{Horizon-Aware} \\[-2pt] \tiny not invariant to \\[-4pt] \tiny cell permutations} & \makecell{\textbf{Trajectory-Aware} \\[-2pt] \tiny not invariant to \\[-4pt] \tiny row permutations} \\
        \midrule
        Loss $\Ls$ & $\times$ & $\times$ & $\times$ \\
        Pointwise Multiplicity & \checkmark & $\times$ & $\times$ \\
        Horizon-wise Multiplicity & \checkmark & \checkmark & $\times$ \\
        Temporal Multiplicity & \checkmark & \checkmark & \checkmark \\
        \bottomrule
    \end{tabular}
\end{table}

This appendix provides the deferred proofs and develops the theoretical results underlying our hierarchy of predictive multiplicity.
In particular, it establishes the distinction between the information retained by its three levels: pointwise and horizon-wise statistics are invariant to rearrangements that preserve comparisons within each horizon, whereas temporal statistics can detect the resulting changes in trajectory structure. \cref{tab:dichotomy} provides a summary of the theoretical results.

\EA{We begin by establishing what the loss and the pointwise and horizon-wise multiplicity statistics cannot detect.
The key observation is that predictions can be rearranged across origins within each horizon, changing the trajectories they form while preserving identical pointwise and horizon-wise multiplicity statistics.
The following proposition formalizes this limitation through the invariance notions of \cref{def:awareness}, showing why agreement at these levels does not fully characterize temporal disagreement.}

\begin{proposition}[Classification of Loss, and Pointwise and Horizon-Wise Multiplicity Statistics]
\label{prop:dichotomy}
The loss $\Ls$ is not multiplicity-aware; for every tolerance $\tau$, the horizon-wise statistics of \cref{def:horizon-pm} are not trajectory-aware, and the pointwise statistics of \cref{def:pointwise-pm} are neither horizon-aware nor trajectory-aware.
\end{proposition}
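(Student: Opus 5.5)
The proposition asserts three invariances, and each is checked by tracking how the relevant permutation acts on the per-cell quantities that define the statistic. First, note the conventions: a permutation of error matrices $\mE^{f}\mapsto\pi^{f}(\mE^{f})$ induces the corresponding permutation of forecast matrices, since $\widehat{\mY}^{f}=\mY-\mE^{f}$. When $\pi^{f}\equiv\pi$ is common to all models, $\pi$ acts on the target matrix and on every forecast matrix alike. \emph{Not aware} means \emph{invariant under the corresponding class}, so no counterexample is needed here. The nesting noted after \cref{def:awareness} also gives useful implications: invariance under a larger class implies invariance under a smaller one. Hence it suffices to show that $\Ls$ is invariant under per-model cell permutations, pointwise statistics under cell permutations, and horizon-wise statistics under row permutations.

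\textbf{Loss.} For the decomposable losses of \eqref{eq:decomposable-loss}, $\Ls(f)=\frac{1}{TH}\sum_{(t,h)}\ell(e^{f}_{t,h})$ is a symmetric function of the multiset of entries of $\mE^{f}$. Reindexing the sum by the bijection $\pi^{f}$ leaves it unchanged, for each $f$ separately. Consequently $\Lstar$ and the Rashomon set $\gR_\eps$ are unchanged as well.

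\textbf{Pointwise statistics.} Fix a common cell permutation $\pi$. For each pair $f,g$ the gap matrix $D^{fg}_{t,h}=|\hat y^{f}_{t+h}-\hat y^{g}_{t+h}|$ is permuted by the same $\pi$; this is where a common $\pi$ is essential, so that model alignment is preserved. Each of $\nu,\omega,\alpha_\tau,\varsigma_\tau$ is then one of two forms: either an average over all $TH$ cells of a cellwise function of $\{D^{fg}_{t,h}\}_{f,g}$ (for $\nu$ and $\alpha_\tau$, where the max over pairs is taken inside the cell), or a max over pairs of an average over cells of a function of $D^{fg}_{t,h}$ (for $\omega$ and $\varsigma_\tau$). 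Averages over all cells are invariant under any bijection of the cells. The set over which the maxima range is $\gR_\eps$, which is unchanged by the loss step.

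\textbf{Horizon-wise statistics.} Fix a row permutation $\pi(t,h)=(\pi_h(t),h)$. For each fixed $h$, column $h$ of every gap matrix is permuted by the same $\pi_h$. Each statistic $\nu^{(h)},\omega^{(h)},\alpha^{(h)}_\tau,\varsigma^{(h)}_\tau$ averages over $t\in\gT$ only within column $h$, so it is unchanged, again using that $\gR_\eps$ is fixed. This holds for every $\tau$ and every $h$, hence for the whole profile.

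The only subtle point, and the step I expect to require care, is that $\gR_\eps$ itself must not move under the permutations. It is defined through $\Ls$, which must therefore be handled first. Otherwise, the rest is bookkeeping. If desired, one can add the remark from \cref{fig:hierarchy} showing that the invariances are strict. Swapping origins at one horizon changes, for instance, $\dt_{\mathrm{direction}}$ on the two trajectories, while a per-model permutation changes $\omega$. This gives the checkmarks in \cref{tab:dichotomy}, though it is not needed for the proposition itself.
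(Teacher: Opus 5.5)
Your proposal is correct and follows the paper's proof step for step. The loss is a symmetric sum over each model's own cells. The pointwise statistics are averages, or maxima over pairs of averages, of per-cell gap tuples that a common permutation merely reorders. The horizon-wise statistics are the same kind of functions restricted to each column, which row permutations preserve. Your explicit check that $\gR_\eps$ is unchanged is left implicit in the paper. Your convention of permuting $\mY$ along with the errors is unnecessary: $|\hat{y}^{f}_{t+h}-\hat{y}^{g}_{t+h}|=|e^{f}_{t,h}-e^{g}_{t,h}|$, so the gap matrices are permuted by $\pi$ whatever happens to the targets.
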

\begin{proof}
(i) The loss (\eqref{eq:decomposable-loss}) of each model is the entrywise sum over its own error matrix, which is invariant to any permutation of that matrix's cells; hence $\Ls$ is invariant to per-model cell permutations, i.e., it is not multiplicity-aware, nor, a fortiori, horizon- or trajectory-aware.

(ii) Each pointwise statistic (\cref{def:pointwise-pm}) is built from $TH$ per-cell quantities $\big(|\hat{y}^{f}_{t+h}-\hat{y}^{g}_{t+h}|\big)_{f,g\in\gR_\eps}$, each of which depends only on the forecasts at the single pair $(t,h)$:
the prediction range and ambiguity are symmetric functions of these quantities, while the prediction diameter and discrepancy are maxima over pairs $(f,g)$ of symmetric functions of them.
A cell permutation moves each such pair as a unit and therefore merely permutes these $TH$ quantities, under which a symmetric function---and hence a maximum of symmetric functions---is invariant; hence no pointwise statistic is horizon-aware, nor, a fortiori, trajectory-aware, since every row permutation is a cell permutation.

(iii) The horizon-wise statistics (\cref{def:horizon-pm}) are, for each $h$, symmetric functions of the same per-cell quantities restricted to horizon $h$; a row permutation permutes these quantities within each horizon, so no horizon-wise statistic is trajectory-aware.
\end{proof}

Before proving that the temporal statistics proposed in the main text are trajectory-aware in the sense of \cref{def:awareness}, we establish a useful reduction that the proof will later use: for a pair of models $f$ and $g$, the four statistics at each level of multiplicity reduce to two, as range coincides with diameter and ambiguity with discrepancy.

\begin{remark}[Multiplicity Statistics of a Pair of Models]
\label{rem:pair}
Let $\dt$ be symmetric and vanish on the diagonal, i.e., $\dt(\vu,\vv)=\dt(\vv,\vu)$ and $\dt(\vu,\vu)=0$ for all $\vu,\vv\in\R^{H}$.
Then, for every pair $f,g\in\gF$, every $\tau\geq0$, every horizon $h\in\{1,\ldots,H\}$, and every trajectory $\vv_t$, we have:
\begin{equation*}
\nu(\{f,g\})=\omega(\{f,g\})=\mathop{\E}_{\substack{t\in\gT\\ 1\leq h\leq H}}\Big[\big|\hat{y}^{f}_{t+h}-\hat{y}^{g}_{t+h}\big|\Big]
\end{equation*}

\begin{equation*}
\alpha_{\tau}(\{f,g\})=\varsigma_{\tau}(\{f,g\})=\mathop{\E}_{\substack{t\in\gT\\ 1\leq h\leq H}}\Big[\1\Big\{\big|\hat{y}^{f}_{t+h}-\hat{y}^{g}_{t+h}\big|>\tau\Big\}\Big]
\end{equation*}

\begin{equation*}
\nu^{(h)}(\{f,g\})=\omega^{(h)}(\{f,g\})=\mathop{\E}_{t\in\gT}\Big[\big|\hat{y}^{f}_{t+h}-\hat{y}^{g}_{t+h}\big|\Big]
\end{equation*}

\begin{equation*}
\alpha^{(h)}_{\tau}(\{f,g\})=\varsigma^{(h)}_{\tau}(\{f,g\})=\mathop{\E}_{t\in\gT}\Big[\1\Big\{\big|\hat{y}^{f}_{t+h}-\hat{y}^{g}_{t+h}\big|>\tau\Big\}\Big]
\end{equation*}

\begin{equation*}
\nu_{\dt}(\{f,g\})=\omega_{\dt}(\{f,g\})=\mathop{\E}_{t\in\gT}\Big[\dt\big(\vv^{f}_{t},\vv^{g}_{t}\big)\Big]
\end{equation*}

\begin{equation*}
\alpha_{\dt,\tau}(\{f,g\})=\varsigma_{\dt,\tau}(\{f,g\})=\mathop{\E}_{t\in\gT}\Big[\1\Big\{\dt\big(\vv^{f}_{t},\vv^{g}_{t}\big)>\tau\Big\}\Big].
\end{equation*}
\begin{proof}
On a two-element set, every maximum over $f',g'\in\{f,g\}$ ranges over the two zero distances between a model and itself and the two copies of the pair's distance. The same conclusion therefore holds whether the maximum is taken inside the expectation (range, ambiguity) or outside it (diameter, discrepancy).
The hypotheses of the remark hold for every trajectory distance function $\dt$ of \cref{sec:temporal-trajectory-distances}, each of which is symmetric and vanishes at identical trajectories. They also hold trivially for the absolute difference used at the pointwise and horizon-wise levels.
\end{proof}
\end{remark}

For a pair of models, multiplicity is therefore summarized by the expected distance between their predictions and the rate at which that distance exceeds $\tau$. Consequently, the inequalities $\omega\leq\nu$ and $\varsigma\leq\alpha$ noted after \cref{def:pointwise-pm} can be strict only for sets of three or more models.

With that in mind, we now prove that the statistics induced by the trajectory distances of \cref{sec:temporal-trajectory-distances} are trajectory-aware and hence, by the nesting of \cref{def:awareness}, also horizon- and multiplicity-aware.

\begin{proposition}[Temporal Multiplicity Statistics Are Trajectory-Aware]
\label{prop:axes-temporal}
The seven distances in \cref{sec:temporal-trajectory-distances} give trajectory-aware range and diameter for forecast, revision, and error trajectories.
Their ambiguity and discrepancy are also trajectory-aware, with the following restrictions on the tolerance:
\begin{itemize}[nosep]
    \item for magnitude, increment, and volatility, any $\tau\geq0$;
    \item for direction, shape, spectral, and timing, any $0\leq\tau<1$.
\end{itemize}
For the latter four distances, ambiguity and discrepancy are always zero when $\tau\geq1$.

Specifically, for each statistic at the stated tolerances, we can choose a horizon length and two ensembles that differ only by a row permutation, yet have different temporal multiplicity.
Their pointwise and horizon-wise multiplicity remain unchanged.
\end{proposition}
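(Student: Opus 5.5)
The plan is to exhibit explicit two-model counterexamples and then reduce every statistic to a single per-origin quantity via \cref{rem:pair}. For a pair $\{f,g\}$, range equals diameter, namely $\E_t[\dt(\vv^f_t,\vv^g_t)]$, and ambiguity equals discrepancy, namely $\E_t[\1\{\dt(\vv^f_t,\vv^g_t)>\tau\}]$. It therefore suffices, for each distance, to build two ensembles related by a row permutation such that the multiset of per-origin distances changes. Pointwise and horizon-wise invariance is already given by \cref{prop:dichotomy}. The upper bound claim for $\tau\geq1$ is immediate: direction, shape, spectral and timing all take values in $[0,1]$, so $\dt>\tau$ never holds.

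\textbf{Template construction.} Take two origins $\gT=\{1,2\}$ and let $g\equiv0$, so that $\widehat{\mY}^g=\vzero$. Let $f$ have rows $\va,\vb\in\R^H$, and apply the row permutation that swaps the two origins only at a chosen set of horizons $S$. This produces rows $\va',\vb'$ with $a'_h=b_h$ and $b'_h=a_h$ for $h\in S$. The model $g$ is unaffected. For error trajectories I set $\mY=\vzero$, so the errors are just negated forecasts and the same example carries over. For revision trajectories, the anti-diagonals mix origins, so I take $T$ large and the permutation acting on a single horizon. Alternatively I choose $\widehat{\mY}^f$ constant along anti-diagonals on one block, so that revision trajectories inherit the same structure. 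This is where I expect bookkeeping friction.

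\textbf{Concrete choices per distance.} I use $H=3$, or $H=4$ for timing so that nonzero lags are admissible.
\begin{itemize}[nosep]
\item \emph{Direction, shape, spectral.} Take $\va=(1,2,3)$ and $\vb=(3,2,1)$ and swap horizon $3$. Then both rows $(1,2,1)$ and $(3,2,3)$ become non-monotone, whereas previously both were monotone. Against $g\equiv0$ the flat conventions make these cases degenerate, so instead I take $g$ with rows $\va$ and $\vb$ unpermuted. Before the permutation every distance is $0$. Afterwards, $\dt(\va',\va)>0$ and its value can be pushed close to $1$ by choosing entries so that the swapped row is anti-proportional or reversed in direction. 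For example, with $\va=(0,1)$, $\vb=(1,0)$ and $H=2$ the swap makes $\va'=\vb$, which gives direction $=1$ and shape $=1$. For spectral I need disjoint magnitude spectra, using a constant-versus-alternating pattern at $H=2$: standardized $(1,-1)$ against a flat row mapped to $\vzero$. This gives a distance of $1/\sqrt2\cdot\|\cdot\|$, which must be checked. Pushing the value above any $\tau<1$ requires it to reach $1$ exactly, and that is the main obstacle. I would first try $\va=(c,c,\dots)$ flat and $\vb$ alternating, so that the swap creates a flat row against an alternating row. The convention $\check{\vu}=\vzero$ then gives distance $\tfrac{1}{\sqrt2}\cdot1$, which is insufficient. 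So I would instead try two pure Fourier modes of different frequency, realizable via a permutation at $H=4$: swapping horizons $\{2,4\}$ between rows $(1,1,-1,-1)$ and $(1,-1,-1,1)$ turns them into $(1,-1,-1,1)$-like patterns with disjoint spectra.
\item \emph{Timing.} I use $H=4$, with rows that become lag-shifted copies after the swap, giving $|\tau^\star|=H-2$.
\item \emph{Magnitude, increment, volatility.} These are unbounded, so scaling the entries by a large constant exceeds any $\tau\geq0$. For example, $\va=(M,M)$ and $\vb=(0,0)$ with $g$ equal to the unpermuted $f$; swapping horizon $2$ gives increments $\pm M$ and a volatility gap of $M$.
\end{itemize}

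In every case ambiguity moves from $0$ to a positive value, and range and diameter strictly increase. This completes the plan modulo verifying the spectral and timing extremal values and the revision-trajectory transfer.
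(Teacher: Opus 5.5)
\textbf{There is a genuine gap, and it comes from the definition of a row permutation.} By \cref{def:awareness}, a row permutation is a \emph{cell} permutation. The same $\pi(t,h)=(\pi_h(t),h)$ is therefore applied to every model's matrix. This is exactly why it preserves each cross-model gap $|\hat y^f_{t+h}-\hat y^g_{t+h}|$, and hence, by \cref{prop:dichotomy}, every pointwise and horizon-wise statistic.

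Two of your constructions permute $f$ but not $g$: the direction/shape one (``$g$ with rows $\va$ and $\vb$ unpermuted'') and the magnitude/increment/volatility one (``$g$ equal to the unpermuted $f$''). That is a per-model permutation, which has two consequences:
\begin{itemize}
\item It only witnesses multiplicity-awareness.
\item It changes the horizon-wise statistics, contradicting the last sentence of the statement. In your magnitude example, $\nu^{(2)}$ jumps from $0$ to $M$.
\end{itemize}
If you instead apply the permutation to both models, as the definition requires, $f$ and $g$ stay identical and every distance stays $0$.

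Your $H=2$ example with $\va'=\vb$ has a further defect. It requires swapping both horizons, which exchanges the rows wholesale, i.e.\ relabels origins for all models. That merely permutes the per-origin terms $\dt(\vv^f_t,\vv^g_t)$ of forecast and error trajectories, so it cannot change any of the four statistics. A valid witness needs horizon-dependent $\pi_h$.

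The spectral attempt fails for the same reason, plus one more. Swapping horizons $\{2,4\}$ between $(1,1,-1,-1)$ and $(1,-1,-1,1)$ just exchanges the two rows. Moreover, these rows are cyclic shifts of each other, so their magnitude spectra coincide rather than being disjoint.

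\textbf{The missing idea} is to make both models nontrivial, so that one column swap applied to both re-stitches their trajectories while keeping each cell's pair $(\hat y^f,\hat y^g)$ intact.
\begin{itemize}
\item \emph{Direction and shape.} Take error rows $(1,1),(-1,-1)$ for $f$ and $(-1,-1),(1,1)$ for $g$. These are flat, so both distances are $0$ by the flat conventions. Swapping column $2$ in both gives opposite zigzags $(1,-1)$ versus $(-1,1)$, so both distances become $1$.
\item \emph{Spectral.} You need not push a value up to $1$. Take $H=4$, with $f$ having rows $(1,-1,1,-1)$ twice and $g$ having rows $(1,0,-1,0),(-1,0,1,0)$; the spectra are disjoint, so the distance is $1$. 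Swapping columns $3$ and $4$ turns $g$'s rows into $\pm(1,0,1,0)$, whose centered versions are multiples of $(1,-1,1,-1)$. The distance drops to $0$, which changes ambiguity and discrepancy for every $\tau<1$.
\item \emph{Timing.} $H=3$ already admits lags $\pm(H-2)=\pm1$, so $H=4$ is unnecessary. Take $f$ with rows $(1,0,-1)$ twice and $g$ with rows $(1,0,-1),(-1,0,1)$. Swapping column $3$ moves the timing distance from $0$ to $1$.
\item \emph{Magnitude, increment, volatility.} Your template with $g\equiv0$ is legitimate, since the zero matrix is fixed by every permutation. Used with that template, your example is correct once $M>\sqrt2\,\tau$.
\end{itemize}

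Two pieces remain to be supplied. First, both configurations must lie in a Rashomon set, e.g.\ $\gF=\{f,g\}$ with $\eps\geq|\Ls(f)-\Ls(g)|$, using that the loss is permutation-invariant. Second, the revision transfer: take $T=H+1$. The $k$-th entries of the two complete anti-diagonals then lie in cells $(k,H+1-k)$ and $(k+1,H+1-k)$, which are in the same column. Each required exchange is therefore a swap of adjacent origins within one column.
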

\begin{proof}
A row permutation exchanges origins within a horizon, using the same exchange for every model.
It preserves the comparisons between models at that horizon, but can change which values belong to the same trajectory.
We will show that this can change each of the temporal statistics.

It is enough to use two models, $f$ and $g$, and two trajectories per model.
By \cref{rem:pair}, their range and diameter both equal the average distance between the models' trajectories.
Their ambiguity and discrepancy both equal the fraction of trajectories for which this distance exceeds $\tau$.
We therefore only need examples where a row permutation changes this average or this fraction.

We begin with error trajectories, so each example consists of two error matrices with two rows.
Whenever we exchange entries within a column, we do so in both matrices.
We then explain how to use the same examples for forecasts and revisions, and how to include both models in a Rashomon set.

\begin{enumerate}[leftmargin=*,label=\arabic*.]
    \item \emph{Magnitude, increment, and volatility.}
    Consider a model whose errors are constant along each trajectory, and compare it with a model whose errors are all zero:
    \begin{equation*}
        \mE^f=\begin{pmatrix}1&1\\0&0\end{pmatrix},
        \qquad
        \mE^g=\begin{pmatrix}0&0\\0&0\end{pmatrix}.
    \end{equation*}
    Exchange the entries in the second column.
    The first matrix becomes
    \begin{equation*}
        \begin{pmatrix}1&0\\0&1\end{pmatrix},
    \end{equation*}
    while the second stays zero.
    Before the exchange, the two magnitude distances are $1$ and $0$; afterward, both are $1/\sqrt{2}$.
    Their average therefore changes from $1/2$ to $1/\sqrt{2}$, changing both range and diameter.

    The same exchange also changes increment and volatility distances.
    Initially, all trajectories are constant, so both distances are zero at both origins.
    Afterward, each trajectory of $f$ changes by $1$ in absolute value, while those of $g$ remain constant.
    Both distances are now $1$ at both origins, so their range and diameter change from $0$ to $1$.

    To handle ambiguity and discrepancy at any given $\tau\geq0$, replace every $1$ in this example by a number $a>\sqrt{2}\tau$.
    The magnitude distances are then $(a,0)$ before the exchange and $(a/\sqrt{2},a/\sqrt{2})$ afterward: the fraction exceeding $\tau$ changes from $1/2$ to $1$.
    For increment and volatility, this fraction changes from $0$ to $1$.

    \item \emph{Direction and shape.}
    This time, start with two models whose trajectories are constant but at opposite levels:
    \begin{equation*}
        \mE^f=\begin{pmatrix}1&1\\-1&-1\end{pmatrix},
        \qquad
        \mE^g=\begin{pmatrix}-1&-1\\1&1\end{pmatrix}.
    \end{equation*}
    At each origin, both models have zero increments, so their direction distance is zero.
    Their shape distance is also zero by the convention for two constant trajectories in \cref{app:multiplicity-metrics}.

    Exchange the entries in the second column.
    At each origin, the two models now have opposite trajectories, $(1,-1)$ and $(-1,1)$.
    One decreases while the other increases, and their correlation is $-1$.
    Thus both direction and shape distances change from $0$ to $1$ at both origins.
    Their range and diameter change from $0$ to $1$, as do their ambiguity and discrepancy for every $0\leq\tau<1$.

    \item \emph{Spectral.}
    We need four horizons for this example:
    \begin{equation*}
        \mE^f=\begin{pmatrix}1&-1&1&-1\\1&-1&1&-1\end{pmatrix},
        \qquad
        \mE^g=\begin{pmatrix}1&0&-1&0\\-1&0&1&0\end{pmatrix}.
    \end{equation*}
    The two models initially vary at different frequencies.
    More explicitly, after standardizing each trajectory, their Fourier magnitude spectra are
    \begin{equation*}
        (0,0,1,0)\quad\text{for }f,
        \qquad
        (0,1/\sqrt{2},0,1/\sqrt{2})\quad\text{for }g.
    \end{equation*}
    These spectra have no nonzero entries in common, so their spectral distance is $1$ at both origins.

    Now exchange the two entries within column $3$ and within column $4$.
    The trajectories of $f$ stay unchanged, while those of $g$ become $(1,0,1,0)$ and $(-1,0,-1,0)$.
    Subtracting their means gives $\tfrac12(1,-1,1,-1)$ and its negative.
    After standardization, both therefore have the same magnitude spectrum as the trajectories of $f$.
    The spectral distance changes from $1$ to $0$ at both origins.
    Hence range and diameter change from $1$ to $0$, as do ambiguity and discrepancy for every $0\leq\tau<1$.

    \item \emph{Timing.}
    The timing distance in \cref{sec:temporal-trajectory-distances} measures the shift that gives the largest Pearson correlation.
    When several shifts give the same correlation, we choose the one with the smallest absolute value.
    Take three horizons and let
    \begin{equation*}
        \mE^f=\begin{pmatrix}1&0&-1\\1&0&-1\end{pmatrix},
        \qquad
        \mE^g=\begin{pmatrix}1&0&-1\\-1&0&1\end{pmatrix}.
    \end{equation*}
    The possible shifts are $-1$, $0$, and $1$.
    At the first origin, both trajectories decrease linearly, so their correlation is $1$ at every shift.
    At the second origin, one decreases linearly and the other increases linearly, so their correlation is $-1$ at every shift.
    In both cases, shifting gives no improvement over zero, and the tie-breaking rule selects zero.
    Both timing distances are therefore zero.

    Exchange the entries in the third column.
    The trajectories of $f$ stay unchanged, while those of $g$ become $(1,0,1)$ and $(-1,0,-1)$.
    Without a shift, each has correlation $0$ with the corresponding trajectory of $f$.
    A one-step shift, however, can match two decreasing segments or two increasing segments, giving correlation $1$.
    Thus the best shift now has absolute value $1$ at both origins.
    Since $H-2=1$, both timing distances are $1$.
    Their range and diameter change from $0$ to $1$, as do their ambiguity and discrepancy for every $0\leq\tau<1$.
\end{enumerate}

These examples also work for forecast trajectories.
Choose all targets to be zero, so every forecast is the negative of its error.
Negating both trajectories leaves each of the seven distances unchanged.
The changes in the statistics are therefore exactly the same.

For revisions, the only difference is where we place the trajectories in the matrix: along anti-diagonals instead of rows.
Take $T=H+1$ origins, which gives exactly two complete revision trajectories.
Place the first row of each example on the first complete anti-diagonal and the second row on the next, setting the remaining cells to zero.
The $k$-th entries of these two trajectories occupy cells
\begin{equation*}
    (k,H+1-k)\qquad\text{and}\qquad(k+1,H+1-k).
\end{equation*}
They lie in the same column.
We can therefore exchange them by swapping origins $k$ and $k+1$ within that column, for both models.
This reproduces every exchange used above through a row permutation.
With zero targets, the resulting revision trajectories are again the negatives of the error sequences, so all the distance calculations still apply.

Finally, the examples satisfy the Rashomon-set requirement.
For each construction, including any choice of $a$ in the first example, take a model class containing just $f$ and $g$, and choose 
\begin{equation*}
    \eps\geq\big|\Ls(f)-\Ls(g)\big|.
\end{equation*}
The Rashomon set then contains both models.
Each exchange preserves each model's loss, so the same tolerance includes both models after the exchange as well.
By \cref{prop:dichotomy}, it also preserves all pointwise and horizon-wise multiplicity statistics.
The temporal statistics change as shown above, proving the claim for forecast, revision, and error trajectories.
\end{proof}

\clearpage

\section{Temporal Multiplicity Bounds Characterization}
\label{app:bounds}

\begin{table}
    \centering
    \caption{Summary of the bounds proved in \cref{app:bounds-afforded-by-rashomon-membership,app:bounds-afforded-by-pointwise-n-horizonwise-multiplicity}: how Rashomon set membership and additional pointwise or horizon-wise multiplicity constraints bound disagreement between forecast, revision, and error trajectories at each origin.
    The first column is reported for the MAE.\\
    {\footnotesize $^\dagger$ For forecast trajectories, the bounds are attained under conditions that rule out degenerate cases.\\
    $^\S$ For revision trajectories, the bounds hold when the trajectories are traversed in reverse order.}}
    \label{tab:levels}
    \footnotesize
    \setlength{\tabcolsep}{3.5pt}
    \begin{tabular}{lccc}
        \toprule
        \textbf{Temporal Statistic} & $\bm{\gR_\eps}$ \textbf{only} & \makecell{$\bm{+}$ \textbf{pointwise} \\ $\bm{\alpha_{\tau}=0}$} & \makecell{$\bm{+}$ \textbf{horizon-wise} \\ $\bm{\alpha^{(h)}_{\tau_h}=0}$} \\
        \midrule
        Pointwise $|\hat{y}^{f}-\hat{y}^{g}|$ & $2TH(\Lstar+\eps)$ & $\tau$ & $\tau_h$ \\
        \arrayrulecolor{black!50}\cmidrule[0.01em]{1-4}\arrayrulecolor{black}
        Magnitude $\dt_{\mathrm{magnitude}}$ & $2\sqrt{H}\,T(\Lstar+\eps)$ & $\tau$ & $\sqrt{\frac{1}{H}\sum_h\tau_h^2}$ \\
        Increment $\dt_{\mathrm{increment}}^{\S}$ & $\frac{2\sqrt{2}}{\sqrt{H-1}}\,TH(\Lstar+\eps)$ & $2\tau$ & $\sqrt{\frac{1}{H-1}\sum_{h<H}(\tau_h+\tau_{h+1})^2}$ \\
        Volatility $\dt_{\mathrm{volatility}}^{\dagger\S}$ & $\frac{2}{H-1}\,TH(\Lstar+\eps)$ & $2\tau$ & $\frac{1}{H-1}\sum_{h<H}(\tau_h+\tau_{h+1})$ \\
        Direction $\dt_{\mathrm{direction}}^{\dagger}$ & free & free & free \\
        Shape $\dt_{\mathrm{shape}}^{\dagger}$ & free & free & free \\
        Spectral $\dt_{\mathrm{spectral}}^{\dagger}$ & free & free & free \\
        Timing $\dt_{\mathrm{timing}}^{\dagger}$ & free & free & free \\
        \bottomrule
    \end{tabular}
\end{table}

This appendix provides the deferred proofs for the temporal multiplicity bounds, characterizes how these bounds constrain forecast trajectories, and tests how closely trained models approach them. It is organized in three parts.

\Cref{app:bounds-afforded-by-rashomon-membership} studies what Rashomon set membership alone implies for temporal multiplicity.
It first proves \cref{thm:temporal-multiplicity}, which bounds the gap between any two models' forecasts at a single cell.
It then translates this pointwise bound into bounds on the trajectory distances of \cref{sec:temporal-trajectory-distances}: magnitude, increment, and volatility disagreement are bounded, whereas direction, shape, spectral, and timing disagreement can remain maximal.

\Cref{app:bounds-afforded-by-pointwise-n-horizonwise-multiplicity} studies what changes when, in addition, pointwise or horizon-wise ambiguity is required to vanish.
It first proves \cref{thm:structural-freedom}, which tightens the per-cell gap to the chosen tolerances.
It then derives the corresponding trajectory-distance bounds, showing that the stronger control bounds magnitude, increment, and volatility disagreement more tightly, yet still leaves direction, shape, spectral, and timing disagreement unconstrained under the stated assumptions.

\Cref{app:bounds-results} evaluates these bounds empirically.
For each trajectory distance, it measures the largest disagreement among trained models in the empirical Rashomon set and reports what fraction of the corresponding theoretical bound it attains.

\Cref{tab:levels} summarizes the theoretical results of the first two parts, comparing the bounds under Rashomon membership alone, specialized to MAE, with those under additional pointwise or horizon-wise control.

\subsection{The Performance Constraint of the Rashomon Set Membership: Exact Characterization and Bounds for Each Temporal Aspect}
\label{app:bounds-afforded-by-rashomon-membership}

We first restate and prove \cref{thm:temporal-multiplicity}, which bounds pointwise disagreement under the performance constraint defining the Rashomon set.
We then derive bounds and attaining constructions for the seven trajectory distances introduced in \cref{sec:temporal-trajectory-distances}, identifying how much disagreement can remain between models that are members of the Rashomon set. We conclude by explaining how these results extend to error and revision trajectories.

The MAE results are summarized in the Rashomon-membership column of \cref{tab:levels}.

\restateTemporalMultiplicity*
\begin{proof}
Fix an origin $t\in\gT$ and a horizon $h\in\{1,\ldots,H\}$, and write
\[
    B:=TH(\Lstar+\eps),
    \qquad
    u:=\ell^{-1}(B).
\]
Here, $B$ is each model's total loss budget, and $u$ is the largest error magnitude that this budget allows at a single entry.

For any $f\in\gR_\eps$, nonnegativity of the pointwise losses gives
\begin{equation*}
    \ell(e^f_{t,h})
    \leq
    \sum_{t'\in\gT}\sum_{h'=1}^{H}
    \ell(e^f_{t',h'})
    =
    TH\,\Ls(f)
    \leq B.
\end{equation*}
Since $\ell$ depends only on the error magnitude, the definition of $u$ implies
\begin{equation*}
    |e^f_{t,h}|
    \leq u
    =
    \ell^{-1}\big(TH(\Lstar+\eps)\big).
\end{equation*}
Every Rashomon forecast therefore lies within distance $u$ of the same target. Since this target cancels when subtracting two forecasts,
\begin{equation*}
    \hat{y}^f_{t+h}-\hat{y}^g_{t+h}
    =
    e^g_{t,h}-e^f_{t,h},
\end{equation*}
and hence
\[
    \big|\hat{y}^f_{t+h}-\hat{y}^g_{t+h}\big|
    \leq
    |e^g_{t,h}|+|e^f_{t,h}|
    \leq 2u.
\]

To show attainment under shell-richness, we concentrate each model's entire budget at the chosen entry, with opposite error signs.
First,
\begin{equation*}
    \ell(u)=B:
\end{equation*}
otherwise, continuity would allow a slightly larger error with loss still below $B$, contradicting the definition of $u$.

Define
\[
    [\mE^\pm]_{t',h'}
    =
    \begin{cases}
        \pm u, & (t',h')=(t,h),\\
        0, & \text{otherwise}.
    \end{cases}
\]
Because $\ell(0)=0$ and $\ell(-u)=\ell(u)$, both matrices have loss
\[
    \Ls(\mE^\pm)
    =
    \frac{\ell(u)}{TH}
    =
    \Lstar+\eps.
\]
They therefore lie in the admissible loss shell
\[
    \gB_\eps
    :=
    \left\{
        \mE\in\R^{T\times H}:
        \Lstar\leq\Ls(\mE)\leq\Lstar+\eps
    \right\}.
\]
Shell-richness guarantees models $f^+,f^-\in\gR_\eps$ realizing these matrices.
Their forecasts at the chosen entry are $y_{t+h}-u$ and $y_{t+h}+u$, hence
\[
    \max_{f,g\in\gR_\eps}
    \big|\hat{y}^f_{t+h}-\hat{y}^g_{t+h}\big|
    =
    2u
    =
    2\,\ell^{-1}\big(TH(\Lstar+\eps)\big).
\]
Thus, two models with identical loss attain the bound by assigning opposite errors to a single entry while matching the targets everywhere else.
\end{proof}

\cref{thm:temporal-multiplicity} tells us how far apart two models in the Rashomon set can be at a single forecast horizon.
Under shell-richness, its upper bound can be reached.
We now ask how much these models can disagree over an entire trajectory, using the distances in \cref{sec:temporal-trajectory-distances}.

The following corollaries quantify how much forecast trajectories can differ despite identical aggregate loss. Each construction gives a lower bound on the largest disagreement within the Rashomon set; when a matching upper bound is available, it identifies the exact maximum.

Fix an origin $t\in\gT$, and write
\[
    B:=TH(\Lstar+\eps),
    \qquad
    u:=\ell^{-1}(B),
    \qquad
    \varrho:=\ell^{-1}(B/H).
\]
Here, $B$ is each model's total loss budget. An error of magnitude $u$ uses this budget at a single entry, whereas $H$ errors of magnitude $\varrho$ spread it equally across one trajectory:
\[
    \ell(u)=H\ell(\varrho)=B.
\]
These equalities follow from continuity and the definition of $\ell^{-1}$, as in the preceding proof.

Every Rashomon model satisfies the row budget
\[
    \sum_{h=1}^{H}\ell(e^f_{t,h})\leq B.
\]
In all constructions below, errors outside origin $t$ are zero and the row loss equals $B$. The resulting matrices therefore have average loss $\Lstar+\eps$, so shell-richness realizes them as models in $\gR_\eps$. This establishes both membership and equality of aggregate loss without repeating the argument for each construction.

Let $\vb^{(j)}\in\R^H$ denote the vector with a $1$ at entry $j$ and zeros elsewhere. For any pair of models, their forecast gap is
\[
    \vw
    :=
    \widehat{\vy}^{f}_{t}-\widehat{\vy}^{g}_{t}
    =
    \ve^{g}_{t}-\ve^{f}_{t}.
\]
Thus, distances that depend only on this gap, including magnitude and increment, do not depend on the target trajectory.

We begin with magnitude, which measures the root-mean-square gap between forecasts. Its maximum depends on how a model can distribute its loss budget to obtain the largest Euclidean error norm. Two allocations are relevant: concentrating the budget at one horizon, or spreading it equally across all horizons.
Which allocation attains the maximum depends on $\phi(s):=\ell(\sqrt{s})$, the loss associated with a squared error of size $s$:
\begin{itemize}
\item when $\phi$ is concave, concentrating the budget at one horizon attains the maximum;
\item when $\phi$ is convex, spreading it equally attains the maximum. 
\end{itemize}
The following corollary establishes this distinction. For power losses $\ell(e)=|e|^p$, these cases correspond to $0<p\leq2$ and $p\geq2$, respectively; at $p=2$, both allocations attain the same maximum.

\begin{corollary}[Magnitude Multiplicity Bound Under Rashomon Set Membership]
\label{cor:magnitude}
Let $\phi(s):=\ell(\sqrt{s})$ for $s\geq0$. If $\gF$ is shell-rich, then
\[
    \max_{f,g\in\gR_\eps}
    \dt_{\mathrm{magnitude}}
    \big(\widehat{\vy}^{f}_{t},\widehat{\vy}^{g}_{t}\big)
    =
    \begin{cases}
        2u/\sqrt{H}, & \text{if $\phi$ is concave},\\[1ex]
        2\varrho, & \text{if $\phi$ is convex}.
    \end{cases}
\]
For MAE, the maximum is $2\sqrt{H}\,T(\Lstar+\eps)$; for MSE, it is $2\sqrt{T(\Lstar+\eps)}$. Both maxima are attained by models with opposite error rows and identical pointwise absolute errors.
\end{corollary}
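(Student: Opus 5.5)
The proof splits into an upper bound, which needs only Rashomon membership, and a matching construction, which needs shell-richness.

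\textbf{Upper bound.} Write the gap as $\vw=\ve^{g}_{t}-\ve^{f}_{t}$. By the triangle inequality,
\[
\dt_{\mathrm{magnitude}}\le \tfrac{1}{\sqrt H}\big(\|\ve^f_t\|_2+\|\ve^g_t\|_2\big),
\]
so it suffices to bound $\max\{\|\ve\|_2:\sum_h \ell(e_h)\le B\}$ for a single row. Substituting $s_h=e_h^2$ turns this into maximizing $\sum_h s_h$ subject to $\sum_h\phi(s_h)\le B$ and $s_h\ge 0$.

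\emph{Concave $\phi$.} Here $\phi(0)=0$, so $\phi$ is subadditive, and monotonicity gives $\phi(\sum_h s_h)\le\sum_h\phi(s_h)\le B$. Hence $\sum_h s_h\le \phi^{-1}(B)=u^2$ (with the generalized inverse), which yields $\|\ve\|_2\le u$ and the bound $2u/\sqrt H$.

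\emph{Convex $\phi$.} Jensen's inequality gives $\phi\big(\tfrac1H\sum_h s_h\big)\le \tfrac1H\sum_h\phi(s_h)\le B/H$. Hence $\tfrac1H\sum_h s_h\le\varrho^2$, so $\|\ve\|_2\le\sqrt H\varrho$ and the bound is $2\varrho$.

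The step I expect to need the most care is the generalized inverse when $\ell$ has flat pieces. Since $\ell$ is nondecreasing in $|e|$ and $\ell^{-1}(L)$ is defined as the largest admissible value, $\phi(s)\le L$ implies $\sqrt s\le\ell^{-1}(L)$, and this is the monotonicity argument I need. I will state it once and reuse it in both cases.

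\textbf{Attainment.} I will use the common-origin setup fixed before the corollary: errors vanish outside origin $t$, each row loss equals $B$, and shell-richness realizes the matrices as members of $\gR_\eps$.

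\emph{Concave case.} Take $\ve^{f}_t=u\,\vb^{(1)}$ and $\ve^{g}_t=-u\,\vb^{(1)}$. Then $\|\vw\|_2=2u$.

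\emph{Convex case.} Take $\ve^{f}_t=\varrho\vone$ and $\ve^{g}_t=-\varrho\vone$. Each row loss is $H\ell(\varrho)=B$, and $\|\vw\|_2=2\sqrt H\varrho$.

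In both constructions the two error rows are opposite and the pointwise absolute errors are identical, so the two models have equal loss, as the statement claims.

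\textbf{Specializations.} For MAE, $\phi(s)=\sqrt s$ is concave and $u=B$, giving $2TH(\Lstar+\eps)/\sqrt H=2\sqrt H\,T(\Lstar+\eps)$. For MSE, $\phi(s)=s$ is linear, hence both concave and convex, with $\varrho=\sqrt{B/H}=\sqrt{T(\Lstar+\eps)}$. This gives $2\sqrt{T(\Lstar+\eps)}$, which agrees with $2u/\sqrt H$, confirming that at $p=2$ both allocations attain the same maximum.
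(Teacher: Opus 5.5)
Your proposal is correct and follows essentially the same route as the paper. The triangle inequality reduces the problem to the largest admissible row norm; subadditivity of a concave $\phi$ and Jensen's inequality for a convex $\phi$ give the bounds $2u/\sqrt{H}$ and $2\varrho$; and the concentrated and evenly spread opposite error rows attain them, with your explicit handling of the max-based inverse $\ell^{-1}$ being a small point of care the paper leaves implicit.
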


\begin{proof}
The triangle inequality gives
\[
    \dt_{\mathrm{magnitude}}
    \big(\widehat{\vy}^{f}_{t},\widehat{\vy}^{g}_{t}\big)
    =
    \frac{\|\ve^g_t-\ve^f_t\|_2}{\sqrt{H}}
    \leq
    \frac{\|\ve^f_t\|_2+\|\ve^g_t\|_2}{\sqrt{H}}.
\]
We therefore need the largest Euclidean norm that one error row can have within the budget. Write $s_h=e_h^2$, so that $\sum_h\phi(s_h)\leq B$.

If $\phi$ is concave, $\phi(0)=0$ implies subadditivity, and hence
\[
    \ell(\|\ve\|_2)
    =
    \phi\Big(\sum_hs_h\Big)
    \leq
    \sum_h\phi(s_h)
    \leq B.
\]
Thus $\|\ve\|_2\leq u$, giving the upper bound $2u/\sqrt{H}$. It is attained by concentrating the budget at one entry:
\[
    \ve^f_t=u\vb^{(j)},
    \qquad
    \ve^g_t=-u\vb^{(j)}.
\]
Their gap has Euclidean norm $2u$.

If $\phi$ is convex, Jensen's inequality instead gives
\[
    \ell\Big(\frac{\|\ve\|_2}{\sqrt{H}}\Big)
    =
    \phi\Big(\frac1H\sum_hs_h\Big)
    \leq
    \frac1H\sum_h\phi(s_h)
    \leq B/H.
\]
Thus $\|\ve\|_2\leq\sqrt{H}\,\varrho$, giving the upper bound $2\varrho$. It is attained by spreading the budget equally:
\[
    \ve^f_t=\varrho\vone,
    \qquad
    \ve^g_t=-\varrho\vone.
\]
Their gap has magnitude $2\varrho$ at every entry.

Finally, MAE has $\phi(s)=\sqrt{s}$ and $u=B$, whereas MSE has $\phi(s)=s$ and $\varrho=\sqrt{B/H}$, giving the stated expressions.
\end{proof}

We next consider increment distance, which measures disagreement in how forecasts change from one horizon to the next. Here, the arrangement of the errors matters as well as their size: an error concentrated at a horizon other than the first or last creates two abrupt changes, while alternating errors create changes throughout the trajectory. The following corollary uses these patterns to obtain lower bounds for general losses and identifies the exact maxima for MAE and MSE.

\begin{corollary}[Increment Multiplicity Bound Under Rashomon Set Membership]
\label{cor:increment}
Let $\phi(s):=\ell(\sqrt{s})$. If $\gF$ is shell-rich and $H\geq3$, then
\[
    \max_{f,g\in\gR_\eps}
    \dt_{\mathrm{increment}}
    \big(\widehat{\vy}^{f}_{t},\widehat{\vy}^{g}_{t}\big)
    \geq
    \begin{cases}
        2\sqrt{2}\,u/\sqrt{H-1},
        & \text{if $\phi$ is concave},\\[1ex]
        4\varrho,
        & \text{if $\phi$ is convex}.
    \end{cases}
\]
For MAE, the exact maximum is
\[
    \frac{2\sqrt{2}\,TH(\Lstar+\eps)}{\sqrt{H-1}}.
\]
For MSE, it is
\[
    4\cos\Big(\frac{\pi}{2H}\Big)
    \sqrt{\frac{TH(\Lstar+\eps)}{H-1}}.
\]
The displayed lower bounds and exact maxima are attained by pairs with opposite error rows and identical pointwise absolute errors.
\end{corollary}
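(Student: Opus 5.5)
The plan is to split the statement into two parts. First, a pair of explicit constructions gives the lower bounds for general $\ell$. Second, for MAE and MSE, a matching upper bound on $\|\Delta\vw\|_2$ for the forecast gap $\vw=\ve^{g}_{t}-\ve^{f}_{t}$ turns these into exact maxima.

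Throughout, I use the setup fixed before \cref{cor:magnitude}. Errors outside origin $t$ are zero, each constructed row has row loss exactly $B$, and shell-richness realizes each such matrix as a member of $\gR_\eps$. I always take $\ve^{g}_{t}=-\ve^{f}_{t}$, which gives identical pointwise absolute errors and $\vw=-2\ve^{f}_{t}$. Since $\dt_{\mathrm{increment}}$ depends only on $\Delta\vw$, the targets play no role.

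\textbf{Constructions.} For the concave case, I concentrate the budget at an interior horizon $1<j<H$, which exists because $H\geq3$. Set $\ve^{f}_{t}=u\vb^{(j)}$, so that $\vw=-2u\vb^{(j)}$. Then $\Delta\vw$ has exactly two nonzero entries of magnitude $2u$, so $\|\Delta\vw\|_2=2\sqrt2\,u$, and dividing by $\sqrt{H-1}$ gives the first bound.

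For the convex case, I spread the budget with alternating signs, $\ve^{f}_{t}=\varrho\,(1,-1,1,\ldots)$, which has row loss $H\ell(\varrho)=B$. Every increment of $\vw$ then has magnitude $4\varrho$, so $\|\Delta\vw\|_2=4\varrho\sqrt{H-1}$ and $\dt_{\mathrm{increment}}=4\varrho$.

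Strictly, the lower bounds need no concavity or convexity assumption. These conditions only indicate which construction is the better of the two.

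\textbf{Upper bounds.} I write $\Delta=D\,\cdot$, where $D\in\R^{(H-1)\times H}$ is the first-difference matrix. By the triangle inequality,
\[
\|\Delta\vw\|_2\leq\|D\ve^{f}_{t}\|_2+\|D\ve^{g}_{t}\|_2,
\]
so it suffices to maximize $\|D\ve\|_2$ over a single row satisfying the row budget $\sum_h\ell(e_h)\leq B$.

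For MAE this budget is the $\ell_1$ ball $\|\ve\|_1\leq B$, with $u=B$. Since $\ve\mapsto\|D\ve\|_2$ is convex, its maximum over this polytope is attained at a vertex $\pm B\vb^{(j)}$. At such a vertex $\|D\vb^{(j)}\|_2\leq\sqrt2$, with equality for interior $j$. This gives $\|\Delta\vw\|_2\leq 2\sqrt2\,B$, which matches the concave construction.

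For MSE the budget is $\|\ve\|_2\leq\sqrt B$, so $\|D\ve\|_2\leq\sigma_{\max}(D)\sqrt B$. Here $D^{\top}D$ is the path-graph Laplacian, whose eigenvalues are $2-2\cos(k\pi/H)$ for $k=0,\ldots,H-1$ \citep{Brouwer2012}. The largest is $2+2\cos(\pi/H)=4\cos^2(\pi/(2H))$, so $\sigma_{\max}(D)=2\cos(\pi/(2H))$. The bound becomes
\[
\dt_{\mathrm{increment}}\leq\frac{4\cos(\pi/(2H))\sqrt B}{\sqrt{H-1}},
\]
which is the stated value. It is attained by $\ve^{f}_{t}=\sqrt B\,\vz$ and $\ve^{g}_{t}=-\ve^{f}_{t}$, where $\vz$ is the unit top eigenvector of $D^{\top}D$. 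This choice has row loss exactly $B$, and the two triangle-inequality terms align because $D\ve^{g}_{t}=-D\ve^{f}_{t}$.

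\textbf{Main obstacle.} I expect the MSE exact maximum to be the hardest step. It requires identifying the spectral norm of the difference operator via the path-Laplacian spectrum. It also requires checking that the top eigenvector (a sampled cosine) is an admissible error row. Note that the alternating construction only yields $4\varrho=4\sqrt{B/H}$, which is strictly smaller, so it cannot serve as the maximizer here. The remaining steps are routine norm computations.
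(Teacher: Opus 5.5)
Your proposal is correct and follows essentially the same route as the paper: the same interior-spike and alternating constructions with $\ve^{g}_{t}=-\ve^{f}_{t}$, the triangle inequality to reduce to a single error row, and the path-Laplacian spectral norm $\sigma_{\max}=2\cos(\pi/(2H))$ for the MSE maximum. The only cosmetic difference is in the MAE case. There you bound $\|D\ve\|_2$ by maximizing a convex function at the vertices $\pm B\vb^{(j)}$ of the $\ell_1$ ball, whereas the paper writes $\Delta\ve=\sum_h e_h\Delta\vb^{(h)}$ and applies the triangle inequality directly, which is the same argument.
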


\begin{proof}
Increment distance measures how quickly the forecast gap changes:
\[
    \dt_{\mathrm{increment}}
    \big(\widehat{\vy}^{f}_{t},\widehat{\vy}^{g}_{t}\big)
    =
    \frac{\|\Delta(\ve^g_t-\ve^f_t)\|_2}{\sqrt{H-1}}.
\]

For the first lower bound, take opposite errors concentrated at an interior entry:
\[
    \ve^f_t=u\vb^{(j)}=-\ve^g_t,
    \qquad 1<j<H.
\]
The gap changes by $2u$ when entering this entry and by $2u$ when leaving it. Its two nonzero increments therefore give
\[
    \dt_{\mathrm{increment}}
    =
    \frac{\sqrt{(2u)^2+(2u)^2}}{\sqrt{H-1}}
    =
    \frac{2\sqrt{2}\,u}{\sqrt{H-1}}.
\]

For the second lower bound, take opposite alternating errors,
\[
    e^f_{t,h}=(-1)^h\varrho=-e^g_{t,h}.
\]
Every increment of their gap has magnitude $4\varrho$, so $\dt_{\mathrm{increment}}=4\varrho$. Both constructions use the full budget.

To establish the exact MAE maximum, observe that $\|\ve\|_1\leq B=u$ and
\[
    \|\Delta\ve\|_2
    =
    \Big\|\sum_h e_h\Delta\vb^{(h)}\Big\|_2
    \leq
    \sum_h|e_h|\,\|\Delta\vb^{(h)}\|_2
    \leq
    \sqrt{2}\,\|\ve\|_1.
\]
The last inequality holds because a single entry affects at most two increments. Applying this bound to both models gives $2\sqrt{2}\,u/\sqrt{H-1}$, attained by the interior-entry construction.

For MSE, the budget is $\|\ve\|_2\leq\sqrt{B}$. The largest singular value of the increment operator is
\[
    \sigma_{\max}(\Delta)
    =
    2\cos\Big(\frac{\pi}{2H}\Big),
\]
as follows from the eigenvalues of the path Laplacian $\Delta^\top\Delta$ \cite[Section~1.4.4]{Brouwer2012}. Consequently,
\[
    \dt_{\mathrm{increment}}
    \leq
    \frac{2\sigma_{\max}(\Delta)\sqrt{B}}{\sqrt{H-1}}.
\]
Equality is obtained by placing opposite error rows along a unit right singular vector for $\sigma_{\max}(\Delta)$:
\[
    \ve^f_t=\sqrt{B}\,\vv=-\ve^g_t,
    \qquad
    v_h=
    \sqrt{\frac2H}\,
    (-1)^{h-1}
    \sin\Big(\frac{(2h-1)\pi}{2H}\Big).
\]
Both rows have squared norm $B$, and their gap is amplified by $\Delta$ by exactly $\sigma_{\max}(\Delta)$. This proves the stated MSE maximum.
\end{proof}

Magnitude and increment distances depend only on the difference between the two forecast trajectories, from which the common target cancels. 
The remaining distances compare properties of the trajectories that are \textit{not} determined by this difference alone. Their values can therefore depend on the target trajectory as well as on the errors.

For volatility and direction, this dependence is visible in the forecast increments:
\[
    \Delta\hat y^f_h
    =
    \Delta y_h-\Delta e^f_{t,h},
    \qquad
    \Delta y_h:=y_{t+h+1}-y_{t+h}.
\]
The target's own changes can reinforce or offset the changes in the errors, affecting both the magnitude and the sign of the predicted increments.

To isolate the disagreement produced by the errors, the volatility, shape, spectral, and timing results below assume a constant target $\vy_t=c\vone$. Under this assumption, each forecast trajectory is simply its error trajectory reflected and shifted by $c$. The direction result allows a nonconstant target, provided its increments are small enough for the constructed errors to reverse the predicted directions.

Volatility distance compares how much the two forecasts fluctuate, measured by their mean absolute increments. To produce a large disagreement, we pair a flat forecast with one that fluctuates strongly. The following corollary bounds this disagreement and constructs pairs that attain the stated bounds.

\begin{corollary}[Volatility Multiplicity  Bound Under Rashomon Set Membership]
\label{cor:volatility}
If $\gF$ is shell-rich, $\vy_t=c\vone$, and $H\geq3$, then
\[
    \max_{f,g\in\gR_\eps}
    \dt_{\mathrm{volatility}}
    \big(\widehat{\vy}^{f}_{t},\widehat{\vy}^{g}_{t}\big)
    \begin{cases}
        =2u/(H-1), & \text{if $\ell$ is concave on $[0,\infty)$},\\[1ex]
        \geq2\varrho, & \text{if $\ell$ is convex on $[0,\infty)$}.
    \end{cases}
\]
For MAE, the maximum is $2TH(\Lstar+\eps)/(H-1)$. For MSE, it is
\[
    \frac{\sqrt{H(4H-6)}}{H-1}
    \sqrt{T(\Lstar+\eps)}.
\]
Each displayed bound is attained by a pair with identical aggregate loss, one of which forecasts a flat trajectory.
\end{corollary}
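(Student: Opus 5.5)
Throughout, the constant target $\vy_t=c\vone$ makes every forecast increment the negated error increment, $\Delta\widehat{\vy}^{f}_{t}=-\Delta\ve^{f}_{t}$. Hence
\[
\dt_{\mathrm{volatility}}\big(\widehat{\vy}^{f}_{t},\widehat{\vy}^{g}_{t}\big)=\frac{1}{H-1}\Big|\,\|\Delta\ve^{f}_{t}\|_1-\|\Delta\ve^{g}_{t}\|_1\Big|,
\]
and the problem reduces to controlling the total variation $\|\Delta\ve\|_1$ of a single error row under the row budget $\sum_h\ell(e_h)\leq B$. The absolute difference of two nonnegative numbers is at most the larger of them. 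Every upper bound therefore comes from bounding $\max_f\|\Delta\ve^f_t\|_1$. Every lower bound comes from pairing a strongly fluctuating row with a row that yields a \emph{flat} forecast. For the flat member I will use $\ve^g_t=\varrho\vone$. Its row loss is exactly $H\ell(\varrho)=B$ and its increments vanish, so shell-richness (as set up before \cref{cor:magnitude}) places it in $\gR_\eps$ with the same aggregate loss as its partner.

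\emph{Concave case.} As in the MAE part of \cref{cor:increment}, each entry affects at most two increments, so $\|\Delta\ve\|_1\leq 2\|\ve\|_1$. A concave $\ell$ with $\ell(0)=0$ is subadditive on $[0,\infty)$, which gives $\ell(\|\ve\|_1)\leq\sum_h\ell(|e_h|)\leq B$ and hence $\|\ve\|_1\leq u$. This yields the upper bound $2u/(H-1)$. For attainment, take $\ve^f_t=u\vb^{(j)}$ with $1<j<H$ (this needs $H\geq3$). It has two increments of size $u$, so $\|\Delta\ve^f_t\|_1=2u$, and pairing it with the flat row gives distance exactly $2u/(H-1)$. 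MAE is the instance $\ell(e)=|e|$ with $u=B=TH(\Lstar+\eps)$.

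\emph{Convex case.} Take the alternating row $e^f_{t,h}=(-1)^h\varrho$ from \cref{cor:increment}. Each of its $H-1$ increments has magnitude $2\varrho$, so $\|\Delta\ve^f_t\|_1=2\varrho(H-1)$, and against the flat row the distance is $2\varrho$. This proves the lower bound.

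\emph{Exact MSE value.} This step is the main work. I will write $\|\Delta\ve\|_1=\max_{\vs\in\{\pm1\}^{H-1}}\vs^\top\Delta\ve=\max_{\vs}(\Delta^\top\vs)^\top\ve$ and apply Cauchy--Schwarz with the budget $\|\ve\|_2\leq\sqrt B$, which gives $\|\Delta\ve\|_1\leq\sqrt B\max_{\vs}\|\Delta^\top\vs\|_2$. The vector $\Delta^\top\vs$ has endpoint entries $\pm1$ and interior entries $s_{h-1}-s_h\in\{0,\pm2\}$. Its squared norm is therefore at most $2+4(H-2)=4H-6$, with equality exactly for alternating $\vs$. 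Substituting $B=TH(\Lstar+\eps)$ gives the stated bound $\sqrt{H(4H-6)}\sqrt{T(\Lstar+\eps)}/(H-1)$. For attainment, align $\ve^f_t$ with $\Delta^\top\vs$ for alternating $\vs$, scaled to norm $\sqrt B$. The point to verify is that the increments of this row have signs matching $\vs$, so that the Cauchy--Schwarz step is tight. This holds because the row is $(1,-2,2,\dots)$ up to sign and scaling, whose consecutive increments alternate in sign. Pairing it with the flat row $\varrho\vone$ gives the stated MSE maximum. Since both rows spend the full budget $B$, the pair has identical aggregate loss.
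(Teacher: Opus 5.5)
Your proof is correct and follows the paper's argument essentially step for step. It uses the same reduction to the total variation $\|\Delta\ve\|_1$ of a single error row, the same subadditivity bound $\|\Delta\ve\|_1\leq 2\|\ve\|_1\leq 2u$ in the concave case, the same flat partner $\varrho\vone$, and the same concentrated and alternating constructions. The only difference is the MSE upper bound. There you use the duality $\|\Delta\ve\|_1=\max_{\bm{s}\in\{\pm1\}^{H-1}}(\Delta^\top\bm{s})^\top\ve$, whereas the paper uses the triangle inequality $\|\Delta\ve\|_1\leq|e_1|+2\sum_{h=2}^{H-1}|e_h|+|e_H|$ followed by Cauchy--Schwarz with weights $\va=(1,2,\ldots,2,1)$. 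The two coincide here because, for alternating $\bm{s}$, $\Delta^\top\bm{s}$ is (up to sign) exactly the signed weight vector $\big((-1)^h a_h\big)_h$, so both routes give $\sqrt{(4H-6)B}$ and the same extremal row.
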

\begin{proof}
For a constant target, forecast increments equal error increments up to sign. Thus
\[
    \dt_{\mathrm{volatility}}
    \big(\widehat{\vy}^{f}_{t},\widehat{\vy}^{g}_{t}\big)
    =
    \frac{
        \big|\|\Delta\ve^f_t\|_1-\|\Delta\ve^g_t\|_1\big|
    }{H-1}.
\]
Because both norms are nonnegative, their difference is at most the larger one. We therefore seek the largest possible total variation $\|\Delta\ve\|_1$ of a single error row.

If $\ell$ is concave, subadditivity and the row budget give
\[
    \ell(\|\ve\|_1)
    \leq
    \sum_h\ell(|e_h|)
    \leq B,
    \qquad
    \|\ve\|_1\leq u.
\]
Since each error contributes to at most two increments,
\[
    \|\Delta\ve\|_1
    \leq 2\|\ve\|_1
    \leq2u.
\]
This gives the upper bound $2u/(H-1)$.

To attain it, take the flat error row $\ve^f_t=\varrho\vone$ and the concentrated row $\ve^g_t=u\vb^{(j)}$, with $1<j<H$. Both use the full budget. The first forecast is flat, whereas the second has two increments of magnitude $u$, giving distance $2u/(H-1)$. This also proves the MAE formula.

For the convex-case lower bound, keep $\ve^f_t=\varrho\vone$ and take $e^g_{t,h}=(-1)^h\varrho$. Every increment of the second forecast has magnitude $2\varrho$, giving distance $2\varrho$.

For the exact MSE maximum, retain the endpoint contributions in the variation bound:
\[
    \|\Delta\ve\|_1
    \leq
    |e_1|+2\sum_{h=2}^{H-1}|e_h|+|e_H|
    \leq
    \sqrt{4H-6}\,\|\ve\|_2
    \leq
    \sqrt{(4H-6)B}.
\]
The middle step is Cauchy--Schwarz with weights $\va=(1,2,\ldots,2,1)$. All inequalities are attained by an alternating row whose magnitudes follow these weights:
\[
    e^g_{t,h}
    =
    (-1)^h a_h\sqrt{\frac{B}{4H-6}}.
\]
Indeed, $\|\ve^g_t\|_2^2=B$, and opposite signs at consecutive entries make every increment equal the sum of its endpoint magnitudes. Pairing this row with $\ve^f_t=\varrho\vone$ yields
\[
    \dt_{\mathrm{volatility}}
    =
    \frac{\sqrt{(4H-6)B}}{H-1},
\]
which is the stated MSE maximum.
\end{proof}

Direction distance asks whether the two forecasts rise or fall together at each step, regardless of the size of those changes. The following corollary shows that models with identical pointwise absolute errors can predict opposite directions at every step, provided the target's own changes are small enough for the errors to reverse the predicted directions.

\begin{corollary}[Direction Multiplicity Bound Under Rashomon Set Membership]
\label{cor:direction}
If $\gF$ is shell-rich, $H\geq2$, and
\[
    |\Delta y_h|<2\varrho
    \qquad\text{for all }h=1,\ldots,H-1,
\]
then
\[
    \max_{f,g\in\gR_\eps}
    \dt_{\mathrm{direction}}
    \big(\widehat{\vy}^{f}_{t},\widehat{\vy}^{g}_{t}\big)
    =1.
\]
The condition holds for a constant target whenever $\varrho>0$. The threshold $2\varrho$ is $2T(\Lstar+\eps)$ for MAE and $2\sqrt{T(\Lstar+\eps)}$ for MSE.

The maximum is attained by models with opposite error rows and identical pointwise absolute errors.
\end{corollary}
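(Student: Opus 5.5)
Since $\dt_{\mathrm{direction}}$ is a fraction of the $H-1$ increments, it never exceeds $1$. The upper bound is therefore trivial, and the proof reduces to a construction that attains $1$. Following the pattern of the preceding corollaries, we place opposite, spread-budget error rows at origin $t$ and set all other errors to zero. The spread allocation is the right choice here because it makes every entry of the error as large as possible simultaneously.

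We take alternating errors
\[
    e^f_{t,h}=(-1)^h\varrho=-e^g_{t,h},\qquad h=1,\ldots,H.
\]
Each row satisfies $\sum_h\ell(e^f_{t,h})=H\ell(\varrho)=B$, using $\ell(-\varrho)=\ell(\varrho)$. Both resulting matrices therefore have average loss $\Lstar+\eps$, and shell-richness realizes them as members of $\gR_\eps$. The two models also have identical pointwise absolute errors, as the statement requires.

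Using $\Delta\hat y^f_h=\Delta y_h-\Delta e^f_{t,h}$, the error increments are $\Delta e^f_{t,h}=(-1)^{h+1}2\varrho$ and $\Delta e^g_{t,h}=(-1)^{h}2\varrho$. This gives
\[
    \Delta\hat y^f_h=\Delta y_h+(-1)^h2\varrho,\qquad
    \Delta\hat y^g_h=\Delta y_h-(-1)^h2\varrho.
\]
By the hypothesis $|\Delta y_h|<2\varrho$, the term $\pm2\varrho$ dominates in each expression. Hence $\sign(\Delta\hat y^f_h)=(-1)^h$ and $\sign(\Delta\hat y^g_h)=-(-1)^h$, and both are nonzero. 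The two forecasts thus have opposite signs at every one of the $H-1$ increments, so the direction distance equals $1$.

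The main point of care is the strictness of the hypothesis. It guarantees that neither increment is zero, so the sign convention never treats one of them as flat, and that no target increment can cancel the error increment. We will also check the special cases claimed in the statement. For a constant target, $\Delta y_h=0<2\varrho$ holds exactly when $\varrho>0$. For the explicit thresholds, $\varrho=\ell^{-1}(B/H)$ gives $\varrho=T(\Lstar+\eps)$ under MAE and $\varrho=\sqrt{T(\Lstar+\eps)}$ under MSE, so $2\varrho$ takes the stated values.
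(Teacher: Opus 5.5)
Your proof is correct and follows the paper's argument essentially verbatim: the same opposite alternating rows $e^f_{t,h}=(-1)^h\varrho=-e^g_{t,h}$ using the full row budget $H\ell(\varrho)=B$, the same forecast increments $\Delta y_h\pm(-1)^h2\varrho$, and the same domination argument from $|\Delta y_h|<2\varrho$. Your additional checks (that strictness keeps every increment nonzero, and the MAE/MSE values of $\varrho=\ell^{-1}(T(\Lstar+\eps))$) are sound.
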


\begin{proof}
Take opposite alternating errors,
\[
    e^f_{t,h}=(-1)^h\varrho=-e^g_{t,h}.
\]
Their forecast increments are
\[
    \Delta\hat y^f_h=\Delta y_h+2(-1)^h\varrho,
    \qquad
    \Delta\hat y^g_h=\Delta y_h-2(-1)^h\varrho.
\]
The error contribution dominates the target increment because $|\Delta y_h|<2\varrho$. The two predicted increments therefore have opposite signs at every step. Their direction distance is $1$, the largest possible value.
\end{proof}

Shape distance compares the overall patterns of the trajectories after removing their levels and scales. For a constant target, opposite error trajectories produce forecasts with exactly opposite shapes. The following corollary shows that this yields the largest possible shape disagreement while preserving absolute errors at every horizon.

\begin{corollary}[Shape Multiplicity Bound Under Rashomon Set Membership]
\label{cor:shape}
If $\gF$ is shell-rich, $\vy_t=c\vone$, $H\geq2$, and $B>0$, then
\[
    \max_{f,g\in\gR_\eps}
    \dt_{\mathrm{shape}}
    \big(\widehat{\vy}^{f}_{t},\widehat{\vy}^{g}_{t}\big)
    =1.
\]
Every admissible pair with opposite, nonconstant error rows attains this maximum.
\end{corollary}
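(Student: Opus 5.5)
The bound $\dt_{\mathrm{shape}}\leq 1$ holds for every pair, since the Pearson correlation lies in $[-1,1]$; under the convention of \cref{app:multiplicity-metrics}, $\rho$ is set to $0$ or $1$ when a trajectory is flat. It therefore suffices to exhibit an admissible pair in $\gR_\eps$ with $\rho=-1$, and then to check that every pair with opposite, nonconstant error rows achieves this value.

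First, I would reduce forecasts to errors. With the target constant, $\vy_t=c\vone$, we have $\widehat{\vy}^{f}_{t}=c\vone-\ve^{f}_{t}$. Pearson correlation is invariant under adding a constant to either argument, so
\[
\rho\big(\widehat{\vy}^{f}_{t},\widehat{\vy}^{g}_{t}\big)=\rho\big(-\ve^{f}_{t},-\ve^{g}_{t}\big)=\rho\big(\ve^{f}_{t},\ve^{g}_{t}\big).
\]
The standard deviation is also unchanged, so flatness of a forecast trajectory coincides with flatness of the corresponding error row. For opposite rows $\ve^{g}_{t}=-\ve^{f}_{t}$ with $\ve^{f}_{t}$ nonconstant, both rows have the same positive standard deviation, so neither is flat. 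Their centred versions are exact negatives of each other, which gives $\rho=-1$ and hence $\dt_{\mathrm{shape}}=1$. This settles the claim that every admissible pair of this form attains the maximum.

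Second, I would show that such a pair exists. I would take the alternating row $e^{f}_{t,h}=(-1)^h\varrho=-e^{g}_{t,h}$ from the proof of \cref{cor:direction}, with zero errors at all other origins. I need $\varrho>0$: because $B>0$ and $\ell$ is continuous with $\ell(0)=0$, some $e>0$ satisfies $\ell(e)\leq B/H$, so $\varrho=\ell^{-1}(B/H)>0$. Since $H\geq2$, the row contains both $\varrho$ and $-\varrho$ and is therefore nonconstant. Each row has loss $H\ell(\varrho)=B$ because $\ell$ is even in its argument. So, as in the shared construction paragraph before \cref{cor:magnitude}, both error matrices have average loss $\Lstar+\eps$ and lie in the shell $\gB_\eps$. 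Shell-richness then realizes them as members of $\gR_\eps$.

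The only delicate step is the flatness convention. With the numerical threshold $\zeta$, a row whose standard deviation is at most $\zeta$ would be classified as flat, and the convention would then set $\rho$ to $1$ or $0$ instead. The statement refers to the exact definition, so I would read nonconstant as having positive standard deviation, which corresponds to $\zeta=0$. If needed, I would add a remark that the construction has standard deviation of order $\varrho$ for $H\geq2$, so for any fixed small $\zeta<\varrho$ the same pair still attains the maximum.
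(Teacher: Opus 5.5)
Your proposal is correct and follows essentially the paper's argument: the constant target reduces forecast correlation to error correlation, opposite nonconstant rows give $\rho=-1$, and shell-richness realizes a full-budget witness. The only difference is that you pick the alternating row $(-1)^h\varrho$ where the paper uses the concentrated row $u\vb^{(j)}$; both work. One small correction to your closing aside on the flatness threshold: for odd $H$ the alternating row has population standard deviation $\varrho\sqrt{1-1/H^2}<\varrho$, so the remark needs $\zeta<\varrho\sqrt{1-1/H^2}$ rather than just $\zeta<\varrho$.
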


\begin{proof}
With a constant target, centering a forecast removes the target and leaves the negative centered error. Thus, opposite nonconstant error rows produce opposite nonzero centered forecasts, whose Pearson correlation is $-1$.

Such a pair exists: take
\[
    \ve^f_t=u\vb^{(j)}=-\ve^g_t.
\]
Since $B>0$, we have $u>0$, and these rows are nonconstant for $H\geq2$. Both use the full budget, and
\[
    \dt_{\mathrm{shape}}
    =
    \frac{1-\rho}{2}
    =
    \frac{1-(-1)}{2}
    =1.
\]
This attains the largest possible shape distance.
\end{proof}

Spectral distance compares how the variation in each trajectory is distributed across frequencies. The following construction assigns different frequencies to the two forecasts and adjusts their amplitudes to give both models the same loss. For a constant target, their spectral disagreement reaches its largest possible value.

\begin{corollary}[Spectral Multiplicity Bound Under Rashomon Set Membership]
\label{cor:spectral}
If $\gF$ is shell-rich, $\vy_t=c\vone$, $H\geq4$, and $B>0$, then
\[
    \max_{f,g\in\gR_\eps}
    \dt_{\mathrm{spectral}}
    \big(\widehat{\vy}^{f}_{t},\widehat{\vy}^{g}_{t}\big)
    =1.
\]
The maximum is attained by models with identical aggregate loss whose error rows contain different single frequencies.
\end{corollary}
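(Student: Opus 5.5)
The upper bound needs no work: the standardized magnitude spectra of \cref{app:multiplicity-metrics} are unit vectors (or zero for flat trajectories), so $\dt_{\mathrm{spectral}}\leq 1$ always. The proof therefore reduces to a construction that reaches $1$. I will follow the pattern of \cref{cor:shape}. With a constant target $\vy_t=c\vone$, each forecast row is $c\vone-\ve^f_t$. Centering removes $c$, and the standardization step is invariant to both sign and scale. Hence the standardized magnitude spectrum of the forecast equals that of the centered error row, and it suffices to choose two error rows with disjoint spectral supports. By the remark in \cref{app:multiplicity-metrics}, disjoint supports give distance exactly $1$.

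I would choose the two rows as distinct pure frequencies that avoid the DC bin. When $H$ is even, take $\ve^f_t=\varrho(-1)^h$, the Nyquist alternation already used in \cref{cor:increment,cor:direction}. Every entry of this row has magnitude $\varrho$, so it spends the row budget $H\ell(\varrho)=B$ exactly, and its spectrum is supported on $k=H/2$ alone. For the second row, take a frequency-$1$ pattern on the period-$4$ shape $(1,0,-1,0,\ldots)$ when $4\mid H$, as in \cref{prop:axes-temporal}, or more generally a two-level square wave $e^g_{t,h}=a\,\sign\cos(2\pi h/H)$ adapted to $H$. Its scale $a$ is then tuned so that $\sum_h\ell(|e^g_{t,h}|)=B$. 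This is possible for any continuous unbounded $\ell$ by the intermediate value theorem, and $B>0$ guarantees $a>0$, so neither row is flat. All other rows of both error matrices are zero, so both models have loss exactly $\Lstar+\eps$ and, by shell-richness, both lie in $\gR_\eps$, just as in the other corollaries.

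The main obstacle is making the frequency supports genuinely disjoint for every $H\geq4$, including odd $H$ and $H$ not divisible by $4$. A square wave carries odd harmonics, and these could collide with the Nyquist bin or with each other. I would first try a single pure cosine pair. Take $e^f_{t,h}=a_f\cos(2\pi h/H)$, supported on bins $\{1,H-1\}$, and $e^g_{t,h}=a_g\cos(4\pi h/H)$, supported on $\{2,H-2\}$. These sets are disjoint and avoid $0$ whenever $H\geq 5$, and for $H=4$ the choice $\{1,3\}$ versus $\{2\}$ also works. Each amplitude is fitted to the budget separately by continuity. A pure cosine has exact single-frequency support, so no harmonic bookkeeping is needed, and the row losses need not be equal across entries. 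The last step is to confirm that both centered rows are nonzero, which holds because their mean is already $0$ and $a_f,a_g>0$. This gives $\dt_{\mathrm{spectral}}=1$ and completes the proof.
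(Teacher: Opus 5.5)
Your final argument is correct and is essentially the paper's proof. The paper also takes two pure cosines $\cos(2\pi kh/H)$ with distinct $k\in\{1,\ldots,\lfloor H/2\rfloor\}$ (your $k=1,2$) and fits each amplitude to the row budget by continuity and unboundedness of $\ell$, lets centering and sign/scale normalization remove the constant target, and concludes from the disjoint supports $\{k,H-k\}$ that the magnitude spectra are orthogonal. For the upper bound, state explicitly that the spectra are \emph{nonnegative} unit vectors, since unit norm alone only gives distance at most $\sqrt{2}$; you can also drop the Nyquist/square-wave detour entirely, whose period-$4$ pattern in any case has frequency $H/4$, not $1$.
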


\begin{proof}
For two nonconstant trajectories, their standardized magnitude spectra $\va$ and $\vb$ are nonnegative unit vectors. Hence
\[
    \dt_{\mathrm{spectral}}^2
    =
    \frac12\|\va-\vb\|_2^2
    =
    1-\langle\va,\vb\rangle
    \leq1,
\]
with equality when the spectra have disjoint supports. The flat-trajectory convention also preserves this upper bound.

To attain it, choose two distinct frequencies from $\{1,\ldots,\lfloor H/2\rfloor\}$, which is possible because $H\geq4$. For each chosen frequency $k$, define
\[
    c^{(k)}_h=\cos(2\pi kh/H),
    \qquad
    \ve^{f_k}_t=a_k\vc^{(k)},
\]
where $a_k>0$ is chosen so that
\[
    \sum_{h=1}^{H}\ell(a_kc^{(k)}_h)=B.
\]
Such an amplitude exists because this sum is continuous, starts at zero, and grows without bound.

Each cosine row has mean zero. Since the target is constant, centering and normalizing its forecast gives $-\vc^{(k)}/\|\vc^{(k)}\|_2$. Its Fourier transform is supported on $\{k,H-k\}$. Distinct frequencies in the chosen range have disjoint supports, so their magnitude spectra are orthogonal and their spectral distance is $1$.
\end{proof}

Timing distance measures the shift needed to best align the two forecast trajectories. The following corollary constructs equally accurate forecasts from shifted portions of the same cosine wave. Their best alignment requires the largest admissible lag, showing that equal aggregate loss can coexist with maximal timing disagreement.

\begin{corollary}[Timing Multiplicity Bound Under Rashomon Set Membership]
\label{cor:timing}
If $\gF$ is shell-rich, $\vy_t=c\vone$, $H\geq3$, and $B>0$, then
\[
    \max_{f,g\in\gR_\eps}
    \dt_{\mathrm{timing}}
    \big(\widehat{\vy}^{f}_{t},\widehat{\vy}^{g}_{t}\big)
    =1.
\]
The maximum is attained by models with identical aggregate loss whose forecast trajectories are samples of the same cosine wave shifted by $H-2$ steps.
\end{corollary}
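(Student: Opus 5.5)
The upper bound is immediate from the definition in \cref{sec:temporal-trajectory-distances}: the best-alignment lag satisfies $|\tau^\star|\leq H-2$, so $\dt_{\mathrm{timing}}=|\tau^\star|/(H-2)\leq 1$. Equality therefore holds exactly when the selected lag is $\pm(H-2)$. I would use the tie-breaking convention of the proof of \cref{prop:axes-temporal}, namely the smallest absolute lag among maximizers. Under that rule, the lag $H-2$ is selected only if it strictly beats every lag of smaller absolute value. The whole task is thus one construction, and the plan is to reuse the template of \cref{cor:spectral}.

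\textbf{Setup.} Take a constant target $\vy_t=c\vone$. Each forecast trajectory is then $c\vone-\ve_t$, and Pearson correlation at every lag is invariant under positive rescaling and common shifts. Negating both trajectories leaves every $\rho_\tau$ unchanged. I can therefore work directly with the error rows.

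\textbf{Construction.} Fix a sampling function $w(s)=\cos(\omega s)$ with a frequency $\omega$ to be chosen. Set
\[
e^{g}_{t,k}=a_g\,w(k)\quad\text{and}\quad e^{f}_{t,h}=a_f\,w\big(h-(H-2)\big),
\]
so that the two rows are samples of the same wave shifted by $H-2$ steps. The amplitudes $a_f,a_g>0$ are chosen, as in \cref{cor:spectral}, so that each row loss equals $B$. Such amplitudes exist by continuity, $\ell(0)=0$ and unboundedness of $\ell$, and need $B>0$. Errors at all other origins are set to zero. Shell-richness then realizes both matrices as members of $\gR_\eps$ with identical aggregate loss $\Lstar+\eps$.

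\textbf{Lag $H-2$.} At lag $\tau=H-2$ the overlap is the pair of cells $(u_{H-1},v_1),(u_H,v_2)$. On these cells $u_{k+H-2}\propto w(k)\propto v_k$ with positive constants. Provided $w(1)\neq w(2)$, neither segment is flat, and $\rho_{H-2}=1$, the maximum possible value.

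\textbf{Smaller lags (main obstacle).} It remains to show $\rho_\tau<1$ for every $|\tau|<H-2$. These overlaps have at least three points, and $\rho_\tau=1$ would force the overlapping samples $w(k+\tau-(H-2))$ and $w(k)$ to be positively affinely related on at least three consecutive $k$. For a cosine this is an analytic equation in $\omega$ for each of the finitely many lags. I would show each equation is nontrivial, i.e.\ it fails for some $\omega$, so it has only isolated solutions; a small generic $\omega$, for instance an irrational multiple of $\pi/H$ in $(0,\pi/H)$, then avoids all of them. The same choice also guarantees $w(1)\neq w(2)$ and that no segment is flat.

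I expect this step to be the only delicate one. If the genericity argument proves awkward, I would instead pick $\omega$ so that $w$ is strictly monotone but strictly concave over the sampled range, e.g.\ $\omega$ small with samples in $(0,\pi)$. Three equally spaced points of $w$ and its shift then have unequal second differences relative to their first differences, which rules out an affine relation directly. With $\rho_{H-2}=1>\rho_\tau$ for all shorter lags, the tie-breaking rule selects $|\tau^\star|=H-2$. This gives $\dt_{\mathrm{timing}}=1$ and attains the maximum.
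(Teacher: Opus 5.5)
Everything around the crux is fine: the bound $|\tau^\star|\le H-2$, the amplitudes from continuity and unboundedness, shell-richness, the invariance of every $\rho_\tau$ under $c\vone-\ve_t$, and $\rho_{H-2}=1$. Once every lag with $|\tau|<H-2$ has $\rho_\tau<1$, every maximizer has $|\tau|=H-2$ under any tie-break. \emph{But that exclusion is the whole content of the corollary, and your proposal does not establish it.}

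In your first route you never show the equations are nontrivial. Even granting that, isolated zeros do not imply that an irrational multiple of $\pi/H$ avoids them, since zeros of trigonometric polynomials can sit at irrational multiples of $\pi$ (e.g.\ $\cos\omega=\tfrac13$). Your fallback rests on a false principle. Strict monotonicity plus strict concavity does not preclude a positive affine relation between a sampled function and its shift: $w(s)=-e^{s}$ is strictly decreasing and strictly concave, yet $w(k+d)=e^{d}w(k)$, so $\rho_\tau=1$ at every lag. Besides, for $H\geq4$ your $f$-row samples $w$ at $s\in\{3-H,\ldots,2\}$. That range straddles the peak of $\cos(\omega s)$ at $s=0$, so $w$ is not monotone there.

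Your construction coincides with the paper's up to a common phase of the wave, and the paper uses that phase to make the exclusion elementary. It takes $\omega=\pi/H$, so one trajectory, $u_h=\cos(\pi h/H)$, is strictly decreasing. The other, $v_h=\cos(\pi(h+H-2)/H)$, has $v_1=v_3>v_2$ and is strictly increasing from $h=2$ on. Every lag $0\le\tau\le H-3$ then pairs a segment of $\vv$ containing the non-monotone triple $(v_1,v_2,v_3)$ with a strictly decreasing segment of $\vu$, which rules out a positive affine relation. Every $\tau<0$ pairs a decreasing segment with an increasing one, so the correlation is negative. Hence $\tau^\star=H-2$ is the unique maximizer.

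Alternatively, your generic route closes directly. At lag $\tau$, after undoing the maps $x\mapsto c-a_fx$ and $y\mapsto c-a_gy$ (which leave $\rho_\tau$ unchanged), the overlap consists of pairs $(\cos\omega m,\cos(\omega m+\omega d))$ with $d=H-2-\tau\in\{1,\ldots,2H-5\}$. These are the image of the unit-circle points $(\cos\omega m,\sin\omega m)$ under a linear map of determinant $-\sin(\omega d)$. For $0<\omega<\pi$, three consecutive such points are distinct points of a circle, hence not collinear. So whenever $\sin(\omega d)\neq0$ their images are not collinear, no segment is flat, and $|\rho_\tau|<1$. Since $\omega/\pi\notin\mathbb{Q}$ gives $\sin(\omega d)\neq0$ for all $d\neq0$, your irrational choice does work---but for this reason, not because the zeros are isolated.
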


\begin{proof}
The timing distance is at most $1$ because admissible lags satisfy $|\tau|\leq H-2$. We construct a pair whose best alignment occurs at $\tau=H-2$.

Let $\theta:=\pi/H$ and define
\[
    u_h=\cos(\theta h),
    \qquad
    v_h=\cos\big(\theta(h+H-2)\big)
        =-\cos\big(\theta(h-2)\big).
\]
These rows have the same absolute entries up to permutation:
\[
    v_1=u_{H-1},
    \qquad
    v_2=u_H,
    \qquad
    v_h=-u_{h-2}\quad(h\geq3).
\]
Choose $a>0$ so that the error row $-a\vu$ uses budget $B$; continuity and unboundedness guarantee its existence. Evenness of $\ell$ implies that $-a\vv$ has the same loss. Shell-richness therefore provides forecasts
\[
    \widehat{\vy}^f_t=c\vone+a\vu,
    \qquad
    \widehat{\vy}^g_t=c\vone+a\vv.
\]
Their lagged correlations equal those of $\vu$ and $\vv$, since Pearson correlation is unchanged by an offset and positive scale.

At lag $H-2$, the overlapping segments are
\[
    (u_{H-1},u_H)=(v_1,v_2).
\]
They are nonconstant, so their correlation is $1$.

It remains to exclude a better or equally good alignment at any other lag. The row $\vu$ is strictly decreasing, whereas
\[
    v_1=v_3>v_2,
\]
and $\vv$ is strictly increasing from its second entry onward.

For $0\leq\tau\leq H-3$, the overlap includes the first three entries of $\vv$. Their decrease followed by an increase prevents them from being a positive affine transformation of the strictly decreasing segment of $\vu$. Their correlation is therefore strictly less than $1$.

For $\tau<0$, the overlap pairs a strictly decreasing segment of $\vu$ with a strictly increasing segment of $\vv$, so its correlation is negative.

Thus the unique maximizing lag is $\tau^\star=H-2$, and
\[
    \dt_{\mathrm{timing}}
    =
    \frac{|\tau^\star|}{H-2}
    =1.
\]
\end{proof}

Together, Corollaries~\ref{cor:magnitude}--\ref{cor:timing} show that models with identical aggregate loss can disagree about forecast magnitude, increments, volatility, direction, shape, frequency content, and timing.

We close with three remarks explaining how these results relate to revision and error trajectories.

\begin{remark}[Error Multiplicity]
\label{rem:error-forecast}
Forecast and error trajectories have the same cross-model gaps up to sign:
\[
    \ve^f_t-\ve^g_t
    =
    -\big(\widehat{\vy}^f_t-\widehat{\vy}^g_t\big).
\]
Consequently, symmetric distances that depend only on these gaps, including magnitude and increment, give identical multiplicity values for forecasts and errors. Pointwise and horizon-wise multiplicity statistics also coincide.

For the other distances, the two trajectories can differ: forecast trajectories describe disagreement about the future, whereas error trajectories describe disagreement in the patterns of the models' mistakes.
\end{remark}

\begin{remark}[Error Multiplicity for Each Temporal Aspect]
\label{rem:error-axes}
The bounds and constructions above also apply to error trajectories, without requiring a constant target or small target increments. To see why, recall that for a constant target,
\[
    \widehat{\vy}^f_t=c\vone-\ve^f_t.
\]
Thus, forecasts are obtained by reflecting and shifting the error trajectories. Applying this transformation to both trajectories leaves every distance considered here unchanged.

The constructions therefore already produce the stated disagreement between error trajectories. Under shell-richness, the same error patterns can be realized for any target whenever they lie in the admissible loss shell. Only the restrictions on the target disappear; the assumptions on the loss, model class, horizon length, and budget are retained.
\end{remark}

\begin{remark}[Revision Multiplicity]
\label{rem:revision}
The bounds and constructions above also apply to any complete revision trajectory. A revision trajectory collects $H$ forecasts of the same target, issued at successive origins. We can therefore place the constructed errors along its anti-diagonal instead of a row without changing their total loss; shell-richness guarantees models realizing these patterns.

Because every entry forecasts the same target, the constant-target assumption holds automatically. The direction condition also holds whenever $\varrho>0$. All remaining assumptions on the loss, model class, horizon length, and budget remain in force.
\end{remark}

\clearpage
\subsection{The Horizon-Wise Multiplicity Constraint: Exact Characterization and Bounds for Each Temporal Aspect}
\label{app:bounds-afforded-by-pointwise-n-horizonwise-multiplicity}

We first restate and prove \cref{thm:structural-freedom}.
We then show how its pointwise bounds control magnitude, increment, and volatility distances.
In contrast, direction, shape, spectral behavior, and timing disagreement can remain maximal even under arbitrarily small positive tolerances, subject to the stated conditions on the target trajectory.
These results are summarized in \cref{tab:levels} at the start of this appendix.

We now ask what controlling disagreement at every horizon guarantees about entire trajectories. Vanishing horizon-wise ambiguity means that
\[
    \alpha^{(h)}_{\tau_h}(\{f,g\})=0
    \quad\Longleftrightarrow\quad
    |\hat y^f_{t+h}-\hat y^g_{t+h}|\leq\tau_h
    \quad\text{for every }t\in\gT.
\]
Thus, $\tau_h$ limits the forecast gap at horizon $h$. Taking $\tau_h\equiv\tau$ recovers the pointwise constraint. The resulting trajectory bounds are summarized in \cref{tab:levels}.

Write $B:=TH(\Lstar+\eps)$ for the total loss budget. For the attainment results below, assume $T\geq2$, so that errors can also be assigned outside the origin being studied.

A small distinction matters when constructing models: shell-richness guarantees realization only when their average losses lie in $[\Lstar,\Lstar+\eps]$. A construction below the upper limit may still fall below $\Lstar$. When two constructed error rows have the same total loss $S\leq B$, we can resolve this by assigning an identical additional error to both models at another origin, contributing
\[
    \max\{TH\Lstar-S,0\}
\]
to each total loss. Such an error exists by continuity and unboundedness of $\ell$. Both models then lie in the admissible loss shell, while their forecast gaps remain unchanged.

We first restate and prove the multiplicity bound under horizon-wise control.

\restateStructuralFreedom*

\begin{proof}
The upper bound follows directly from the definition of vanishing horizon-wise ambiguity.

To attain all the tolerances at a chosen origin $t$, take
\[
    e^f_{t,h}=-\tau_h/2,
    \qquad
    e^g_{t,h}=\tau_h/2.
\]
Their forecast gap is exactly $\tau_h$ at every horizon, and each row has total loss
\[
    S=\sum_{h=1}^{H}\ell(\tau_h/2)\leq B.
\]
Adding identical errors at another origin if necessary, as described above, places both matrices in the admissible shell without introducing further disagreement. Shell-richness then provides the required models.
\end{proof}

Magnitude, increment, and volatility distances admit upper bounds directly in terms of the tolerances. Magnitude depends on the sizes of the forecast gaps; increment also depends on how those gaps change between successive horizons. Volatility compares the amount of fluctuation in the two forecasts, whose difference is likewise controlled by changes in their gap.

Throughout the following results, maxima run over pairs $f,g\in\gR_\eps$ satisfying $|\hat y^f_{t'+h}-\hat y^g_{t'+h}|\leq\tau_h$ at every origin $t'$ and horizon $h$.

\begin{corollary}[Magnitude, Increment, and Volatility Multiplicity Bounds Under Horizon-Wise Control]
\label{cor:horizon-wise-magnitude}
Fix an origin $t$, let $H\geq2$, and assume $\tau_h\geq0$. If $\gF$ is shell-rich and
\[
    \sum_{h=1}^{H}\ell(\tau_h/2)\leq B,
\]
then
\begin{align*}
    \max\,
    \dt_{\mathrm{magnitude}}
    \big(\widehat{\vy}^f_t,\widehat{\vy}^g_t\big)
    &=
    \sqrt{\frac1H\sum_{h=1}^{H}\tau_h^2},
    \\
    \max\,
    \dt_{\mathrm{increment}}
    \big(\widehat{\vy}^f_t,\widehat{\vy}^g_t\big)
    &=
    \sqrt{\frac1{H-1}\sum_{h=1}^{H-1}(\tau_h+\tau_{h+1})^2}.
\end{align*}
If additionally $\vy_t=c\vone$ and $\sum_h\ell(\tau_h)\leq B$, then
\[
    \max\,
    \dt_{\mathrm{volatility}}
    \big(\widehat{\vy}^f_t,\widehat{\vy}^g_t\big)
    =
    \frac1{H-1}\sum_{h=1}^{H-1}(\tau_h+\tau_{h+1}).
\]
The magnitude and increment maxima are attained by models with identical pointwise absolute errors. The volatility maximum is attained by models with identical aggregate loss, one of which forecasts a flat trajectory at origin $t$.
\end{corollary}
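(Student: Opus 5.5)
Each of the three claims splits into an upper bound, which comes from the per-horizon constraint of \cref{thm:structural-freedom}, and a matching construction. Every construction is realized by shell-richness, using the padding device described above: identical extra errors at another origin lift both models into the admissible shell without changing any forecast gap.

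\emph{Magnitude.} Write $w_h=\hat y^f_{t+h}-\hat y^g_{t+h}$, so that $|w_h|\leq\tau_h$. Then $\dt_{\mathrm{magnitude}}=\|\vw\|_2/\sqrt H\leq\sqrt{\frac1H\sum_h\tau_h^2}$.

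\emph{Increment.} The triangle inequality gives $|\Delta w_h|\leq|w_h|+|w_{h+1}|\leq\tau_h+\tau_{h+1}$, and squaring and summing gives the stated bound.

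For attainment, I would reuse the opposite-error idea from \cref{cor:increment}, but with alternating signs: $e^f_{t,h}=(-1)^h\tau_h/2=-e^g_{t,h}$. Then $w_h=(-1)^{h+1}\tau_h$, up to a global sign. Consecutive gaps have opposite signs, so $|\Delta w_h|=\tau_h+\tau_{h+1}$ exactly. This choice also attains the magnitude bound, since $|w_h|=\tau_h$ at every horizon.

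Each row uses loss $\sum_h\ell(\tau_h/2)\leq B$, and the two models have identical pointwise absolute errors. They therefore have equal loss, and padding places both in the admissible shell.

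\emph{Volatility.} The gap between total variations satisfies
\[
\big|\|\Delta\widehat{\vy}^f_t\|_1-\|\Delta\widehat{\vy}^g_t\|_1\big|\leq\|\Delta\vw\|_1\leq\sum_{h<H}(\tau_h+\tau_{h+1}),
\]
by the reverse triangle inequality followed by the same per-step bound. Dividing by $H-1$ gives the upper bound.

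For attainment, take the target constant, $\vy_t=c\vone$. Let $g$ forecast exactly the target at origin $t$, i.e.\ $\ve^g_t=\vzero$, which is flat. Let $f$ have $e^f_{t,h}=(-1)^h\tau_h$, so that $|\Delta\hat y^f_h|=\tau_h+\tau_{h+1}$ and the distance equals the bound.

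\emph{The main obstacle is equalizing the two losses in this volatility construction.} Row $f$ costs $S_f=\sum_h\ell(\tau_h)\leq B$, while row $g$ costs $0$. Adding a common padding error no longer suffices, because the two rows have different costs. Instead I would give $g$ an extra error at another origin $t'\neq t$ with loss exactly $S_f$, which exists by continuity and unboundedness of $\ell$. Then both models have loss $S_f$ at the base, and common padding (requiring $T\geq2$ and the assumption $\sum_h\ell(\tau_h)\leq B$) places both in the shell.

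I would then check that this extra error does not break the horizon-wise constraint $|w_{t',h}|\leq\tau_h$ at origin $t'$. If the needed single-cell error exceeds the tolerance, I would instead split it across the cells of $t'$, giving $f$ and $g$ identical errors there except for an amount within $\tau_h$. The room for this comes from the slack $\sum_h\ell(\tau_h)\leq B$. The alternative is to note that the maxima are taken over the constrained pairs at origin $t$. I expect this bookkeeping to be the delicate part. The upper bounds and the sign-alternating constructions are routine.
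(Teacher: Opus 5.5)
Your upper bounds and your sign-alternating construction are the paper's. Reusing the alternating rows for magnitude is also fine, since $|w_h|=\tau_h$ there too. The gap is the one you flagged, and it is real: your volatility bookkeeping does not close under the standing assumption $T\geq2$.

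\emph{Equalizing the losses.} With $f$ error-free at $t'$, the tolerance $|w_{t',h}|\leq\tau_h$ caps $g$'s loss there at $\sum_h\ell(\tau_h)=S_f$. Equalizing therefore forces $\ell(e^g_{t',h})=\ell(\tau_h)$ at every cell, e.g.\ $\ve^g_{t'}=\vz$. There is no ``room'' here; the slack $\sum_h\ell(\tau_h)\leq B$ only keeps the totals within budget.

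\emph{Lifting into the shell.} With that choice, adding an identical error ``at another origin'' needs a third origin, i.e.\ $T\geq3$; in that case your argument does go through. When $T=2$ and $S_f<TH\Lstar$, there is no such origin. Adding the same $\delta$ to both models at just one of $t$ or $t'$ unbalances their losses, because the two models' errors differ there and $\ell(z_h+\delta)\neq\ell(\delta)$ in general.

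\emph{The fallback.} Imposing the tolerance only at origin $t$ is not available either. The maxima run over pairs with vanishing horizon-wise ambiguity, i.e.\ $|w_{t',h}|\leq\tau_h$ at every origin.

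The missing idea is to make the two models exact swaps of each other and to pad all four rows by the same constant. Take $\ve^f_t=a\vone+\vz$, $\ve^g_t=a\vone$, $\ve^f_s=a\vone$, $\ve^g_s=a\vone+\vz$, with zeros elsewhere. Your phrase ``identical errors there except for an amount within $\tau_h$'' points this way, but the balance survives padding only when the padding is applied at both origins.

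\begin{itemize}
\item The two error matrices contain the same rows, so both have loss $C(a)=\sum_h\ell(a+z_h)+H\ell(a)$ for every $a$.
\item Since $C(0)=\sum_h\ell(\tau_h)\leq B$, continuity and unboundedness of $\ell$ give some $a\geq0$ with $C(a)\in[TH\Lstar,B]$.
\item The offset leaves every gap equal to $\pm z_h$, so all tolerances hold at every origin.
\item Because the target is constant, $g$'s forecast at $t$ stays flat, while every increment of $f$'s has magnitude $\tau_h+\tau_{h+1}$.
\end{itemize}

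This attains the volatility bound using only two origins.
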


\begin{proof}
Write $\vw=\widehat{\vy}^f_t-\widehat{\vy}^g_t$ for the forecast gap. Since $|w_h|\leq\tau_h$, we have
\[
    \dt_{\mathrm{magnitude}}
    =
    \frac{\|\vw\|_2}{\sqrt{H}}
    \leq
    \sqrt{\frac1H\sum_h\tau_h^2}.
\]
Each change in the gap is bounded by the two adjacent tolerances:
\[
    |\Delta w_h|
    \leq |w_{h+1}|+|w_h|
    \leq\tau_h+\tau_{h+1}.
\]
This gives the increment bound,
\[
    \dt_{\mathrm{increment}}
    =
    \frac{\|\Delta\vw\|_2}{\sqrt{H-1}}
    \leq
    \sqrt{\frac1{H-1}\sum_{h=1}^{H-1}(\tau_h+\tau_{h+1})^2}.
\]
For volatility, the reverse triangle inequality gives
\[
    \dt_{\mathrm{volatility}}
    =
    \frac{\big|\|\Delta\widehat{\vy}^f_t\|_1-\|\Delta\widehat{\vy}^g_t\|_1\big|}{H-1}
    \leq
    \frac{\|\Delta\vw\|_1}{H-1}
    \leq
    \frac1{H-1}\sum_{h=1}^{H-1}(\tau_h+\tau_{h+1}).
\]

To attain the magnitude bound, use the opposite errors $e^f_{t,h}=-\tau_h/2=-e^g_{t,h}$ from the preceding proof. Their gap has magnitude $\tau_h$ at every horizon.

To attain the increment bound, alternate the signs:
\[
    e^f_{t,h}=(-1)^h\tau_h/2=-e^g_{t,h}.
\]
Consecutive gaps now have opposite signs, so $|\Delta w_h|=\tau_h+\tau_{h+1}$. Both constructions have equal row losses within the budget and can be placed in the admissible shell by adding identical errors at another origin.

For volatility, we need one flat forecast and one alternating forecast. Let $z_h=(-1)^h\tau_h$, choose another origin $s\neq t$, and assign
\[
\begin{aligned}
    \ve^f_t&=a\vone+\vz,
    &\qquad \ve^g_t&=a\vone,\\
    \ve^f_s&=a\vone,
    &\qquad \ve^g_s&=a\vone+\vz,
\end{aligned}
\]
with errors zero elsewhere. Swapping the rows at origin $s$ gives the models identical total loss:
\[
    C(a)=\sum_{h=1}^{H}\ell(a+z_h)+H\ell(a).
\]
At $a=0$, this equals $\sum_h\ell(\tau_h)\leq B$. If it is below $TH\Lstar$, continuity and unboundedness allow us to choose $a>0$ so that $C(a)=TH\Lstar$. Otherwise, take $a=0$. Both matrices then lie in the admissible shell.

The gaps at origins $t$ and $s$ have magnitude $\tau_h$, so all horizon-wise constraints hold. At origin $t$, the forecast of $g$ is flat, while every increment of the forecast of $f$ has magnitude $\tau_h+\tau_{h+1}$. Their volatility distance therefore attains the upper bound.
\end{proof}

The preceding bounds shrink as the tolerances decrease. For a constant target, direction, shape, spectral, and timing disagreement can instead remain maximal even when all tolerances are arbitrarily small but positive. The reason is that their constructions can be reduced in amplitude while preserving the signs or normalized patterns that these distances compare. The direction result also allows a nonconstant target, provided its changes are sufficiently small.

\begin{corollary}[Direction, Shape, Spectral and Timing Multiplicity Structural Freedom Under Horizon-Wise Control]
\label{cor:horizon-wise-freedom}
Fix an origin $t$, let $H\geq2$, and assume $\gF$ is shell-rich, $B>0$, and $\tau_h>0$ for every horizon. Define
\[
    \varrho:=\ell^{-1}(B/H),
    \qquad
    \kappa_h:=\min\{\tau_h,2\varrho\},
    \qquad
    \kappa:=\min_h\kappa_h.
\]
Here, $\kappa_h$ limits the constructed forecast gap using both the horizon-wise tolerance and the available loss budget. Then
\begin{align*}
    \max\,
    \dt_{\mathrm{direction}}
    \big(\widehat{\vy}^f_t,\widehat{\vy}^g_t\big)
    &=1
    &&\text{if }|\Delta y_h|<\frac{\kappa_h+\kappa_{h+1}}2
      \text{ for every }h,
    \\
    \max\,
    \dt_{\mathrm{shape}}
    \big(\widehat{\vy}^f_t,\widehat{\vy}^g_t\big)
    &=1
    &&\text{if }\vy_t=c\vone,
    \\
    \max\,
    \dt_{\mathrm{spectral}}
    \big(\widehat{\vy}^f_t,\widehat{\vy}^g_t\big)
    &=1
    &&\text{if }\vy_t=c\vone\text{ and }H\geq4,
    \\
    \max\,
    \dt_{\mathrm{timing}}
    \big(\widehat{\vy}^f_t,\widehat{\vy}^g_t\big)
    &=1
    &&\text{if }\vy_t=c\vone\text{ and }H\geq3.
\end{align*}
All maxima are attained by models with identical aggregate loss. The direction and shape constructions also preserve pointwise absolute errors.
\end{corollary}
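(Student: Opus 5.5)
Each of the four claimed maxima equals the trivial upper bound $1$ of the corresponding distance, established already in Corollaries~\ref{cor:direction}--\ref{cor:timing}. The plan is therefore purely constructive. For each distance, I will take the attaining construction from the Rashomon-only case and shrink its amplitude so that the per-horizon forecast gap at origin $t$ is at most $\kappa_h\le\tau_h$. Shrinking never increases the loss, since $\ell$ is nondecreasing in $|e|$. Both rows will have equal total loss $S\le B$, and I then use the padding device stated before \cref{thm:structural-freedom}. Concretely, an identical extra error is added to both models at another origin $s\neq t$ (using $T\ge2$), bringing the total to $\max\{TH\Lstar,S\}$. This places both matrices in the admissible shell without adding disagreement, so shell-richness yields $f,g\in\gR_\eps$. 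At every origin other than $t$, the gap is zero, so the horizon-wise constraints hold globally.

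\emph{Direction and shape.} For direction, take opposite alternating errors $e^f_{t,h}=(-1)^h\kappa_h/2=-e^g_{t,h}$. Each row has loss $\sum_h\ell(\kappa_h/2)\le H\ell(\varrho)=B$, because $\kappa_h/2\le\varrho$. The gap magnitude is $\kappa_h\le\tau_h$, and pointwise absolute errors coincide. The forecast increments are $\Delta\hat y^{f,g}_h=\Delta y_h\pm(-1)^h(\kappa_h+\kappa_{h+1})/2$. The hypothesis $|\Delta y_h|<(\kappa_h+\kappa_{h+1})/2$ therefore forces opposite nonzero signs at every step, giving distance $1$. For shape with $\vy_t=c\vone$, the same rows work when $H\ge2$. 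Alternatively, one can take $\ve^f_t=(\kappa/2)\vb^{(1)}=-\ve^g_t$. Either way the rows are opposite and nonconstant, so $\rho=-1$ as in \cref{cor:shape}.

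\emph{Spectral and timing.} For spectral with $H\ge4$, reuse the two cosine rows $a_k\vc^{(k)}$ of \cref{cor:spectral} for distinct frequencies $k\neq k'$. I will choose amplitudes $a,a'>0$ small enough that $a+a'\le\kappa$, so $|w_h|\le a+a'\le\kappa\le\tau_h$, and also small enough that both row losses are at most $B$. The row losses $\sum_h\ell(a\,c^{(k)}_h)$ and $\sum_h\ell(a'\,c^{(k')}_h)$ generally differ. To equalize them, I add to the lower-loss model, at origin $s$, a single error whose loss makes up the difference. Since the two models' errors at $s$ then differ, I must check that the resulting gap at $s$ stays within $\tau_h$. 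The loss difference is small when the amplitudes are small, so by continuity of $\ell$ and $\ell(0)=0$ this compensating error can be made arbitrarily small, in particular below $\min_h\tau_h$. Standardization removes scale and level, so the spectra keep disjoint supports and the distance is $1$.

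For timing with $H\ge3$, scale the rows $\vu,\vv$ of \cref{cor:timing} by a common $a>0$ with $2a\le\kappa$. Since $|u_h|,|v_h|\le1$, this gives $|w_h|\le2a\le\tau_h$. The two rows are permutations of each other up to sign, so their losses are equal by evenness of $\ell$, and Pearson correlation is invariant to positive scaling and shifts. The unique maximizing lag is therefore still $H-2$, giving distance $1$.

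\textbf{Main obstacle.} The main obstacle is the loss bookkeeping in the spectral case. There the two rows are not sign or permutation images of each other, so equal aggregate loss must be restored by an asymmetric compensating error. That error has to be kept small enough to respect the horizon-wise tolerance at origin $s$, and the step above relies on the small-amplitude continuity argument to do this. Everywhere else, the equal-loss property comes for free from opposite or permuted rows.
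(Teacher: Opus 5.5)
Your direction, shape and timing constructions are essentially the paper's: alternating errors $\pm(-1)^h\kappa_h/2$, opposite nonconstant rows, and the shifted cosines of \cref{cor:timing} scaled to amplitude $\kappa/2$, combined with symmetric padding at another origin. The proposal is correct, but the spectral case takes a different and more roundabout route.

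\textbf{Spectral case: how the paper avoids the obstacle.} The paper never uses an asymmetric compensating error. The map $a\mapsto\sum_h\ell(a\,c^{(r)}_h)$ is continuous and vanishes at $a=0$. It is at least $\ell(\kappa/2)$ at $a=\kappa/2$, because every cosine row has the entry $c^{(r)}_H=\cos(2\pi r)=1$. The intermediate value theorem therefore gives amplitudes in $(0,\kappa/2]$ for which both rows have loss exactly $\ell(\kappa/2)$; if $\ell(\kappa/2)=0$, take both amplitudes equal to $\kappa/2$. This is the calibration already used in \cref{cor:spectral}, aimed at the level $\ell(\kappa/2)$ instead of $B$. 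With it, equal loss is achieved inside origin $t$, the gap vanishes at every other origin, and only symmetric padding is needed. The ``main obstacle'' you identify does not arise.

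\textbf{Spectral case: repairing your justification.} Your route also works, but the reasoning you give for it is not right as stated. Continuity of $\ell$ with $\ell(0)=0$ says that small errors have small loss. It does not say that a small positive loss difference $D$ can be realized by a small error. For a loss that is flat near zero, such as $\ell(e)=\max\{|e|-\delta,0\}$, any error realizing $D>0$ exceeds $\delta$.

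The bound you actually need, $|e_s|\le\min_h\tau_h$, still holds by a case split:
\begin{itemize}
\item If $\ell(\min_h\tau_h)=0$, every row entry has magnitude at most $\kappa\le\min_h\tau_h$. Both row losses then vanish, so $D=0$.
\item Otherwise, shrink the amplitudes until $D\le\ell(\min_h\tau_h)$. Then apply the intermediate value theorem to $\ell$ on $[0,\min_h\tau_h]$.
\end{itemize}
You must also place the compensating error and the symmetric padding in different cells of origin $s$, which is possible since $H\ge4$.

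\textbf{Comparison.} Your approach buys no extra generality: the paper's calibration equalizes any pair of nonzero rows once each is normalized to sup-norm $1$. The paper's route is shorter and keeps all disagreement confined to origin $t$.
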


\begin{proof}
All four distances are at most $1$, so it suffices to construct admissible pairs attaining this value.

In each construction, the error magnitude at horizon $h$ is at most $\kappa_h/2$ for both models. Their gap is therefore at most $\kappa_h\leq\tau_h$, and each row satisfies the loss budget:
\[
    \sum_{h=1}^{H}\ell(e_{t,h})
    \leq
    H\ell(\varrho)
    =B.
\]
We choose the two rows to have equal loss, allowing identical errors at another origin to place both models in the admissible shell if necessary.

\emph{Direction.}
Take opposite alternating errors,
\[
    e^f_{t,h}=(-1)^h\kappa_h/2=-e^g_{t,h}.
\]
Their forecast increments are
\[
    \Delta\hat y^f_h
    =
    \Delta y_h+(-1)^h\frac{\kappa_h+\kappa_{h+1}}2,
    \qquad
    \Delta\hat y^g_h
    =
    \Delta y_h-(-1)^h\frac{\kappa_h+\kappa_{h+1}}2.
\]
Under the stated condition, the error contribution exceeds the magnitude of the target increment. The forecasts therefore move in opposite directions at every step, giving distance $1$.

\emph{Shape.}
For a constant target, the same construction gives opposite, nonconstant centered forecast trajectories. Their correlation is $-1$, so their shape distance is $1$.

\emph{Spectral.}
Choose two distinct frequencies $j,k\in\{1,\ldots,\lfloor H/2\rfloor\}$ and let
\[
    c^{(r)}_h=\cos(2\pi rh/H),
    \qquad
    \ve^f_t=a_j\vc^{(j)},
    \qquad
    \ve^g_t=a_k\vc^{(k)}.
\]
Choose positive amplitudes $a_j,a_k\leq\kappa/2$ giving both rows total loss $\ell(\kappa/2)$. This is possible by continuity because each cosine row contains an entry equal to $1$. If $\ell(\kappa/2)=0$, simply take both amplitudes equal to $\kappa/2$.

As in \cref{cor:spectral}, centering removes the constant target and normalization removes the amplitudes. The resulting magnitude spectra have disjoint supports, giving spectral distance $1$.

\emph{Timing.}
Use the shifted cosine patterns from \cref{cor:timing}:
\[
    u_h=\cos(\pi h/H),
    \qquad
    v_h=\cos\big(\pi(h+H-2)/H\big),
\]
and take
\[
    \ve^f_t=-(\kappa/2)\vu,
    \qquad
    \ve^g_t=-(\kappa/2)\vv.
\]
Their entries have magnitude at most $\kappa/2$, and their absolute entries are permutations of one another, so the rows have equal loss. For a constant target, the resulting forecasts have the same lagged correlations as $\vu$ and $\vv$. Their best alignment occurs at lag $H-2$, as established in \cref{cor:timing}, giving timing distance $1$.
\end{proof}

Thus, horizon-wise agreement controls the sizes of forecast gaps and the resulting magnitude, increment, and volatility disagreement. Under the stated target conditions, it can nevertheless leave direction, shape, spectral, and timing disagreement maximal.

\begin{remark}[Revision and Error Trajectories]
\label{rem:horizon-wise-revision}
The results also apply to complete revision trajectories. As forecasts of a fixed target are updated, their horizons decrease from $H$ to $1$, so the corresponding tolerances are $(\tau_H,\ldots,\tau_1)$. Reversing this order leaves the bounds unchanged, and the constructions transfer to the relevant anti-diagonal without changing their loss. Since every entry forecasts the same target, the constant-target condition holds automatically.

For error trajectories, the horizon-wise constraints are unchanged because
\[
    |e^f_{t,h}-e^g_{t,h}|
    =
    |\hat y^f_{t+h}-\hat y^g_{t+h}|.
\]
The same error patterns therefore attain the stated bounds directly, without assumptions on the target trajectory. The constant-target and small-target-increment conditions are needed only to obtain the corresponding disagreement between forecasts.

In both cases, the remaining assumptions on shell-richness, horizon length, tolerances, and the loss budget still apply.
\end{remark}

\clearpage

\subsection{How Close Does the Empirical Rashomon Set Come to the Theoretical Bounds?}
\label{app:bounds-results}

We test whether trained models in the empirical Rashomon set $\egR$ attain the theoretical temporal disagreement allowed by Rashomon membership alone and by vanishing pointwise or horizon-wise ambiguity (\cref{tab:levels}).
Since these bounds concern individual origins, we compute the largest disagreement at any origin, rather than the origin-averaged multiplicity statistics we measured in \cref{sec:experiments}.
The question is whether the extreme disagreement permitted by the theory occurs among near-optimal trained models, and which distances remain far from their theoretical maxima.

\paragraph{Rashomon Membership Alone.}
For MASE, the loss budget is $TH(\Lstar+\eps)$, where $T$ is the number of validation origins, $H$ the horizon, $\Lstar=\Ls(f^\star)$ is the best validation MASE, and $\eps=5 \% \cdot \Lstar$ (\cref{app:rashomon-set}).
For each trajectory distance $\dt$, we define the \emph{worst-origin range} and its \emph{utilization} of the membership bound $U_{\dt}$ as:
\begin{equation}
    \bar{\nu}_{\dt}
    =
    \max_{t\in\gT}\,\max_{f,g\in\egR}\dt\big(\vv^{f}_{t},\vv^{g}_{t}\big),
    \qquad
    \frac{\bar{\nu}_{\dt}}{U_{\dt}}.
    \label{eq:bounds-worst-origin}
\end{equation}
The corresponding bounds (\cref{thm:temporal-multiplicity,app:bounds-afforded-by-rashomon-membership}) are:
\begin{equation*}
    U_{\dt}=
    \begin{cases}
        2T\sqrt{H}\,(\Lstar+\eps), & \dt=\dt_{\mathrm{magnitude}},\\[6pt]
        \dfrac{2\sqrt{2}\,TH}{\sqrt{H-1}}\,(\Lstar+\eps), & \dt=\dt_{\mathrm{increment}},\\[6pt]
        \dfrac{2TH}{H-1}\,(\Lstar+\eps), & \dt=\dt_{\mathrm{volatility}},\\[6pt]
        1, & \dt\in\{\dt_{\mathrm{direction}},\dt_{\mathrm{shape}},\dt_{\mathrm{spectral}},\dt_{\mathrm{timing}}\}.
    \end{cases}
\end{equation*}
For the four scale-free distances, the bound $1$ is reported as $100\%$.
The worst-origin range selects the most separated pair at the most adverse origin and is at least the origin-averaged trajectory range $\nu_{\dt}$ of \cref{tab:temporal-main}.
Utilization expresses this observed distance as a fraction of the theoretical bound.
These maxima measure whether large disagreement occurs, rather than how typical it is across pairs or origins.

\Cref{tab:bounds-rashomon} shows that the scale-free bounds are readily attained: direction, shape, and timing reach $100\%$ on every dataset, while spectral distance reaches $99.7\%$. 
Scale-free distances compare signs or normalized patterns, so attaining their range does not require large errors: reducing the amplitude of a disagreement need not reduce its distance. Indeed, this confirms empirically what the theoretical findings of \cref{cor:direction,cor:shape,cor:timing,cor:spectral} predicted.

Magnitude, increment, and volatility use at most $6.8\%$ of their bounds. 
This is expected: the bounds for these distance functions grow with $T$; however, the attaining constructions would concentrate an entire model's error budget in a single forecast cell, an extreme pattern unlikely to result from training using gradient descent.
As such, a finite set of trained models need not contain such constructions, even when the model class can represent them. 

Although the observed distances reach only a small fraction of their theoretical bounds, they remain large in absolute terms: between $16.9$ and $21.4$ multiples of the naive error scale on average, with most datasets attaining values well above $3$ multiples of the naive error scale. Rashomon membership alone therefore leaves extremely large temporal multiplicity at some individual forecast origins, even without the extreme error patterns needed to attain the bounds.

\begin{table}[h]
    \centering
    \caption{Worst-origin range and utilization of the Rashomon-membership bound (\eqref{eq:bounds-worst-origin}).
    Magnitude, increment, and volatility are in naive units, with utilization in parentheses; scale-free distances are percentages of their bound $1$.
    Values are maxima over trajectories, averaged over targets where applicable; the last row averages over datasets.}
    \label{tab:bounds-rashomon}
    \footnotesize
    \setlength{\tabcolsep}{3pt}
    \resizebox{0.7\textwidth}{!}{\begin{tabular}{llllllll}
\toprule
Dataset & \multicolumn{7}{l}{$\gR_\eps$ only} \\
\cmidrule(lr){2-8}
 & $\dt_{\mathrm{magnitude}}$ & $\dt_{\mathrm{increment}}$ & $\dt_{\mathrm{volatility}}$ & $\dt_{\mathrm{direction}}$ & $\dt_{\mathrm{shape}}$ & $\dt_{\mathrm{spectral}}$ & $\dt_{\mathrm{timing}}$ \\
\midrule
ATM 1D & 6.54 (0.0\%) & 6.20 (0.0\%) & 3.52 (0.0\%) & 100\% & 100\% & 97.6\% & 100\% \\
ATM 1W & 6.66 (0.1\%) & 5.47 (0.1\%) & 3.62 (0.1\%) & 100\% & 100\% & 99.4\% & 100\% \\
Bitcoin & 19.7 (0.3\%) & 11.1 (0.1\%) & 7.39 (0.3\%) & 100\% & 100\% & 95.4\% & 100\% \\
Electricity D. A & 7.86 (0.0\%) & 1.66 (0.0\%) & 0.75 (0.0\%) & 90.2\% & 98.8\% & 95.0\% & 100\% \\
Electricity D. Z & 3.73 (0.0\%) & 0.72 (0.0\%) & 0.31 (0.0\%) & 83.2\% & 98.4\% & 93.2\% & 100\% \\
Electricity Price & 2.74 (0.0\%) & 0.85 (0.0\%) & 0.50 (0.0\%) & 95.7\% & 96.5\% & 92.4\% & 100\% \\
ETT 1H & 3.29 (0.0\%) & 1.21 (0.0\%) & 0.79 (0.0\%) & 100\% & 99.1\% & 96.5\% & 100\% \\
Exchange Rate & 162 (1.5\%) & 236 (1.3\%) & 163 (2.5\%) & 100\% & 100\% & 99.5\% & 100\% \\
FRED-MD & 0.36 (0.8\%) & 0.22 (0.3\%) & 0.17 (0.4\%) & 100\% & 100\% & 0.0\% & 100\% \\
Gasoline & 12.2 (1.7\%) & 7.88 (0.7\%) & 4.90 (1.0\%) & 100\% & 100\% & 99.7\% & 100\% \\
Weather & 10.6 (0.0\%) & 2.26 (0.0\%) & 1.31 (0.0\%) & 96.5\% & 99.8\% & 94.4\% & 100\% \\
\midrule
Mean & 21.4 (0.4\%) & 24.8 (0.2\%) & 16.9 (0.4\%) & 96.9\% & 99.3\% & 87.6\% & 100\% \\
\bottomrule
\end{tabular}
}
\end{table}

\paragraph{Vanishing Pointwise and Horizon-Wise Ambiguity.}
We evaluate each pair $f,g\in\egR$ at the smallest tolerances for which its ambiguity vanishes: the largest absolute forecast gap over all test cells (pointwise), or separately at each horizon (horizon-wise):
\begin{equation}
    \tau^{fg}
    =
    \max_{t\in\gT}\max_{1\leq h\leq H}\big|\nv{\hat{y}}^{f}_{t+h}-\nv{\hat{y}}^{g}_{t+h}\big|,
    \qquad
    \tau^{fg}_{h}
    =
    \max_{t\in\gT}\big|\nv{\hat{y}}^{f}_{t+h}-\nv{\hat{y}}^{g}_{t+h}\big|.
    \label{eq:bounds-pair-tolerances}
\end{equation}
By construction, $\alpha_{\tau^{fg}}(\{f,g\})=0$ and $\alpha^{(h)}_{\tau^{fg}_h}(\{f,g\})=0$ for every $h$.
Writing $\bm{\tau}^{fg}=(\tau^{fg}_1,\ldots,\tau^{fg}_H)$, let $U_{\dt}(\tau)$ and $U_{\dt}(\bm{\tau})$ denote the pointwise and horizon-wise bounds in \cref{tab:levels} (\cref{thm:structural-freedom,cor:horizon-wise-magnitude,app:bounds-afforded-by-pointwise-n-horizonwise-multiplicity}), namely:
\begin{equation*}
    U_{\dt}(\tau)=
    \begin{cases}
        \tau, & \dt=\dt_{\mathrm{magnitude}},\\[6pt]
        2\tau, & \dt=\dt_{\mathrm{increment}},\\[6pt]
        2\tau, & \dt=\dt_{\mathrm{volatility}},\\[6pt]
        1, & \dt\in\{\dt_{\mathrm{direction}},\dt_{\mathrm{shape}},\dt_{\mathrm{spectral}},\dt_{\mathrm{timing}}\},
    \end{cases}
\end{equation*}
\begin{equation*}
    U_{\dt}(\bm{\tau})=
    \begin{cases}
        \sqrt{\dfrac{1}{H}\displaystyle\sum_{h=1}^{H}\tau_h^2}, & \dt=\dt_{\mathrm{magnitude}},\\[6pt]
        \sqrt{\dfrac{1}{H-1}\displaystyle\sum_{h=1}^{H-1}(\tau_h+\tau_{h+1})^2}, & \dt=\dt_{\mathrm{increment}},\\[6pt]
        \dfrac{1}{H-1}\displaystyle\sum_{h=1}^{H-1}(\tau_h+\tau_{h+1}), & \dt=\dt_{\mathrm{volatility}},\\[6pt]
        1, & \dt\in\{\dt_{\mathrm{direction}},\dt_{\mathrm{shape}},\dt_{\mathrm{spectral}},\dt_{\mathrm{timing}}\}.
    \end{cases}
\end{equation*}
This evaluates the tightness of the bounds at the tolerances each pair actually satisfies; it does not establish attainment at a single, prespecified tolerance shared by all pairs.
The pair's utilization is its largest distance at any origin divided by the corresponding bound:
\begin{equation}
    u^{fg}_{\dt}
    =
    \frac{\max_{t\in\gT}\dt\big(\vv^{f}_{t},\vv^{g}_{t}\big)}{U_{\dt}(\tau^{fg})},
    \qquad
    u^{fg}_{\dt,\bm{\tau}}
    =
    \frac{\max_{t\in\gT}\dt\big(\vv^{f}_{t},\vv^{g}_{t}\big)}{U_{\dt}(\bm{\tau}^{fg})}.
    \label{eq:bounds-pair-utilization}
\end{equation}
For the scale-free distances, whose bound is $1$, we restrict the maximum to origins with $|\Delta\nv{y}_h|<(\tau^{fg}_h+\tau^{fg}_{h+1})/2$ at every step: the target-flatness criterion for direction in \cref{cor:horizon-wise-freedom}.
For shape, spectral, and timing distances, the corollary proves attainment for constant targets; the filter alone does not guarantee attainment.
For revision trajectories, the tolerance vector follows the reversed horizon order.

\Cref{tab:bounds-enforcement} reports the maximum utilization over pairs.
\Cref{fig:bounds-enforcement} plots all pairs and each dataset's running maximum against normalized tolerance.

Three observations are worth noting.

\paragraph{Most Distances Attain or Approach Their Bounds.}
Magnitude reaches $99.8\%$ under pointwise control and $100\%$ under horizon-wise control; direction, shape, and timing reach $100\%$, and spectral distance reaches $99.7\%$.
Increment and volatility remain further below: their dataset-averaged maxima are $65.4\%$ and $53.5\%$ pointwise, rising to $82.5\%$ and $67.9\%$ horizon-wise.

The gap is smallest for Exchange Rate, where increment and volatility reach $99.8\%$ and $95.5\%$ of their horizon-wise bounds, showing that trained models \emph{can} approach the theoretical maxima. A possible explanation is the limited predictability of exchange rates: if their dynamics are approximately those of a random walk, the loss may provide less guidance toward a common forecast structure, leaving more varied models with similar performance. More generally, as with the Rashomon-membership bounds, the remaining gaps may reflect the finite collection of models explored and the implicit restrictions imposed by training, rather than limitations on the disagreement that the full model class can express.

\paragraph{Substantial Gaps Remain for Increment and Volatility on Most Datasets.}
The difference follows from the configurations needed for equality.
Magnitude only requires the absolute gap to reach its tolerance at every horizon of one origin; a persistent offset can therefore approach the bound.
Increment requires the gap to alternate between $+\tau_h$ and $-\tau_{h+1}$ at every consecutive horizon: the forecasts must cross at every step with maximal amplitude.
Volatility additionally requires these changes to produce the largest possible difference in total variation; one attaining construction pairs a flat forecast with a full-amplitude zigzag.
These extreme configurations are unlikely for trained forecasters.
Smooth, persistent gaps can be large in magnitude while changing little across horizons, leaving increment and volatility well below their bounds.

\paragraph{Horizon-Wise Control Gives Higher Utilization.} Horizon-wise control gives higher utilization because each horizon receives its own tolerance, avoiding the slack introduced by applying the largest gap anywhere to every horizon.

\begin{table}[h]
    \centering
    \caption{Maximum pointwise / horizon-wise utilization over pairs (\eqref{eq:bounds-pair-utilization}), each evaluated at its smallest admissible tolerances (\eqref{eq:bounds-pair-tolerances}).
    Scale-free distances use only origins satisfying the target-flatness filter.
    Aggregation over trajectories, targets, and datasets follows \cref{tab:bounds-rashomon}.}
    \label{tab:bounds-enforcement}
    \footnotesize
    \setlength{\tabcolsep}{3pt}
    \resizebox{\textwidth}{!}{\begin{tabular}{llllllll}
\toprule
Dataset & $\dt_{\mathrm{magnitude}}$ & $\dt_{\mathrm{increment}}$ & $\dt_{\mathrm{volatility}}$ & $\dt_{\mathrm{direction}}$ & $\dt_{\mathrm{shape}}$ & $\dt_{\mathrm{spectral}}$ & $\dt_{\mathrm{timing}}$ \\
\midrule
ATM 1D & 90.4\% / 100\% & 49.8\% / 69.2\% & 41.5\% / 47.7\% & 100\% / 100\% & 100\% / 100\% & 97.6\% / 97.6\% & 100\% / 100\% \\
ATM 1W & 99.6\% / 100\% & 62.7\% / 74.8\% & 54.4\% / 65.1\% & 100\% / 100\% & 100\% / 100\% & 99.4\% / 99.4\% & 100\% / 100\% \\
Bitcoin & 99.3\% / 100\% & 53.4\% / 71.0\% & 32.8\% / 44.8\% & 100\% / 100\% & 100\% / 100\% & 95.4\% / 95.4\% & 100\% / 100\% \\
Electricity D. A & 85.5\% / 99.1\% & 18.1\% / 23.5\% & 7.7\% / 9.9\% & 90.2\% / 90.2\% & 98.8\% / 98.8\% & 95.0\% / 95.0\% & 100\% / 100\% \\
Electricity D. Z & 81.5\% / 95.4\% & 11.5\% / 14.0\% & 4.8\% / 6.5\% & 83.2\% / 83.2\% & 98.4\% / 98.4\% & 93.2\% / 93.2\% & 100\% / 100\% \\
Electricity Price & 84.7\% / 99.1\% & 26.3\% / 31.9\% & 11.8\% / 14.5\% & 95.7\% / 95.7\% & 96.5\% / 96.5\% & 92.4\% / 92.4\% & 100\% / 100\% \\
ETT 1H & 94.1\% / 100\% & 33.7\% / 53.6\% & 23.1\% / 31.7\% & 100\% / 100\% & 99.1\% / 99.1\% & 96.5\% / 96.5\% & 100\% / 100\% \\
Exchange Rate & 99.7\% / 100\% & 76.7\% / 97.7\% & 67.7\% / 92.4\% & 100\% / 100\% & 100\% / 100\% & 99.3\% / 99.3\% & 100\% / 100\% \\
FRED-MD & 95.0\% / 100\% & 67.0\% / 80.5\% & 49.6\% / 62.4\% & 100\% / 100\% & 100\% / 100\% & 0.0\% / 0.0\% & 100\% / 100\% \\
Gasoline & 99.8\% / 100\% & 83.3\% / 97.7\% & 69.3\% / 86.3\% & 100\% / 100\% & 100\% / 100\% & 99.7\% / 99.7\% & 100\% / 100\% \\
Weather & 95.4\% / 99.4\% & 23.4\% / 32.4\% & 9.4\% / 14.5\% & 96.5\% / 96.5\% & 99.8\% / 99.8\% & 94.4\% / 94.4\% & 100\% / 100\% \\
\midrule
Mean & 93.2\% / 99.4\% & 46.0\% / 58.8\% & 33.8\% / 43.3\% & 96.9\% / 96.9\% & 99.3\% / 99.3\% & 87.5\% / 87.5\% & 100\% / 100\% \\
\bottomrule
\end{tabular}
}
\end{table}

\begin{figure}[h]
    \centering
    \includegraphics[width=\textwidth]{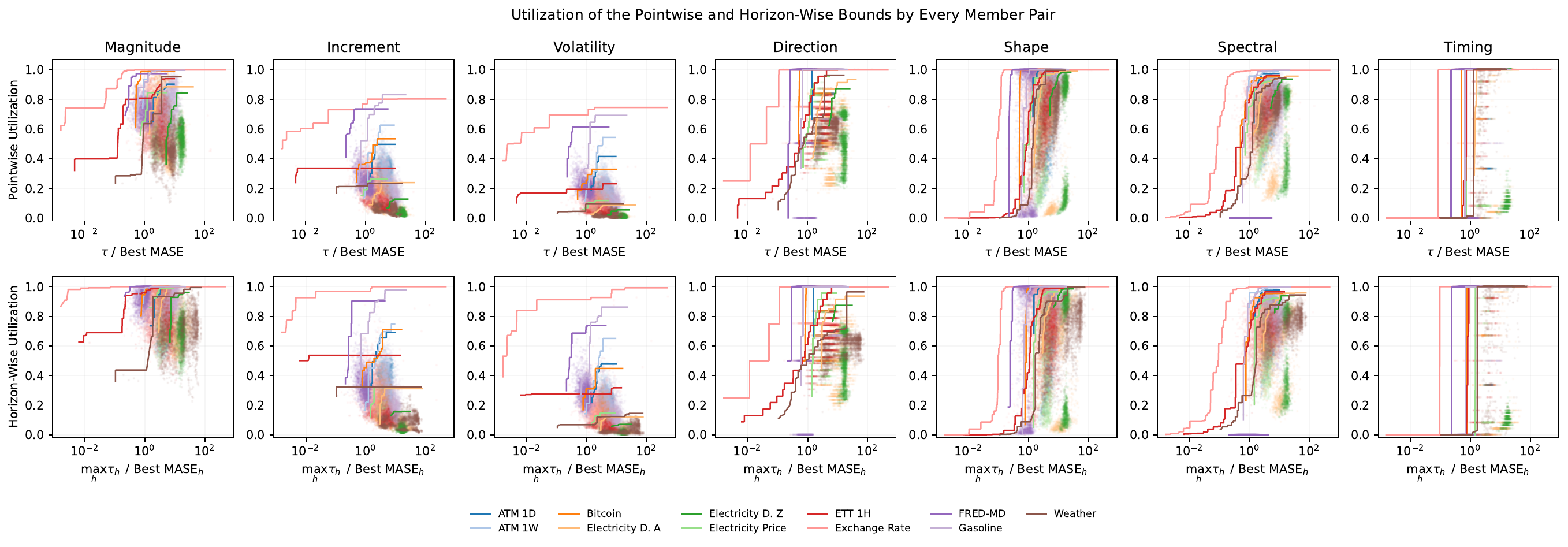}
    \caption{Pairwise bound utilization against normalized tolerance: $\tau^{fg}/\Lstar$ for pointwise control (top), and $\max_h(\tau^{fg}_h/\Ls^{\star}_h)$ for horizon-wise control (bottom).
    Each marker represents one pair; lines show each dataset's running maximum.}
    \label{fig:bounds-enforcement}
\end{figure}

\end{document}